\documentclass{article}

\usepackage[T1]{fontenc}  
\usepackage{lmodern}      
\usepackage{microtype}
\usepackage{graphicx}
\usepackage{caption}
\usepackage{subcaption}
\usepackage{enumitem} 
\usepackage{booktabs} 
\usepackage{multirow} 
\usepackage{makecell} 
\usepackage{tabularx} 
\usepackage[table, dvipsnames]{xcolor}
\usepackage{marvosym} 

\usepackage{hyperref}
\usepackage{xurl}  

\usepackage[accepted]{icml2026}

\usepackage{amsmath}
\usepackage{amssymb}
\usepackage{mathtools}
\usepackage{amsthm}
\usepackage{algorithm}
\usepackage{algorithmic}
\usepackage{array}
\usepackage[most]{tcolorbox} 

\makeatletter
\@ifundefined{argmin}{}{}
\@ifundefined{argmax}{\DeclareMathOperator*{\argmax}{arg\,max}}{}
\makeatother

\newcommand{\rot}[1]{\rotatebox{45}{\scriptsize #1}}
\newcommand{\BannerGraphic}{%
  \includegraphics[width=\textwidth]{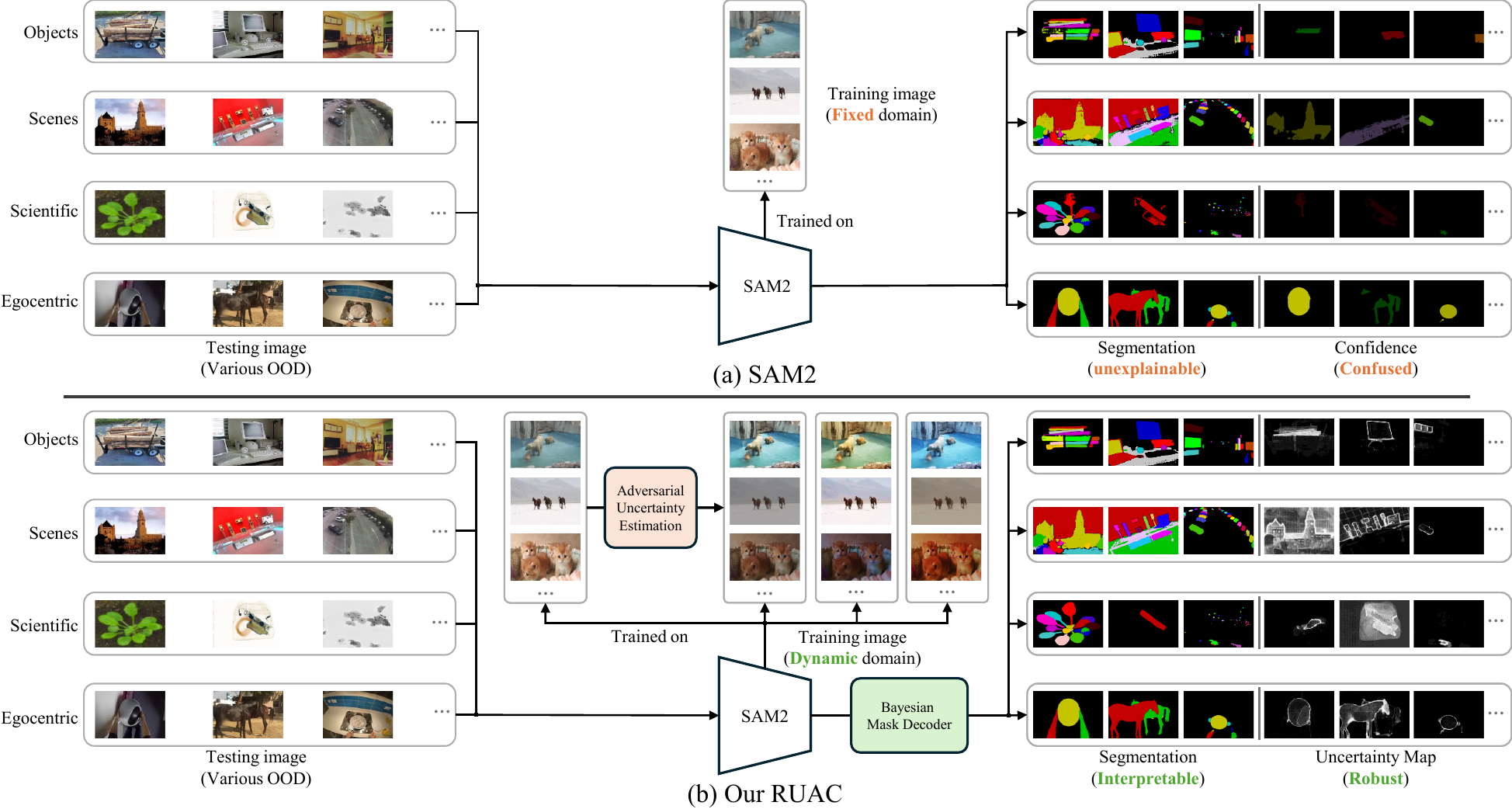}%
}

\newif\ifrevmark
\revmarktrue

\usepackage[capitalize,noabbrev]{cleveref}

\theoremstyle{plain}
\newtheorem{theorem}{Theorem}[section]
\newtheorem{proposition}[theorem]{Proposition}
\newtheorem{lemma}[theorem]{Lemma}

\theoremstyle{definition}

\theoremstyle{remark}

\usepackage[disable,textsize=tiny]{todonotes}

\icmltitlerunning{Segment Anything with Robust Uncertainty-Accuracy Correlation}

\begin{document}

\twocolumn[
  \icmltitle{Segment Anything with Robust Uncertainty-Accuracy Correlation}



  \icmlsetsymbol{equal}{*}

  \begin{icmlauthorlist}
    \icmlauthor{Hongyou Zhou}{tub}
    \icmlauthor{Marc Toussaint}{tub}
    \icmlauthor{Ling Shao}{ucas}
    \icmlauthor{Zihan Ye}{ucas}\textsuperscript{\Letter}
  \end{icmlauthorlist}


  \icmlaffiliation{tub}{Learning and Intelligent Systems, Technical University of Berlin, Berlin, Germany.}
  \icmlaffiliation{ucas}{UCAS-Terminus AI Lab, University of Chinese Academy of Sciences, Beijing, China}

  \icmlcorrespondingauthor{Zihan Ye}{zihhye@outlook.com}

  \icmlkeywords{Zero-shot segmentation, Segment Anything Model, Robustness, Uncertainty}
  \vspace{0.3cm}
  \begin{center}
  \BannerGraphic
  \captionof{figure}{
    \textbf{Overview of Robust Uncertainty-Accuracy Correlation (RUAC):} Both panels share the same out-of-domain test images spanning four categories (Objects, Scenes, Scientific, Egocentric).
    \textbf{(a)} Vanilla SAM2 trained on a \emph{fixed} source domain produces \emph{unexplainable} segmentation and \emph{confused} confidence maps, exhibiting \emph{Mask-level Confidence Confusion (MCC)}.
    \textbf{(b)} RUAC introduces a \emph{dynamic} training domain via Adversarial Uncertainty Estimation (AUE: bio-inspired style and deformation perturbations) combined with a Bayesian Mask Decoder (UE), simultaneously improving segmentation accuracy and providing calibrated pixel-wise uncertainty. The result is \emph{Uncertainty-Accuracy Alignment}: \emph{interpretable} segmentation and \emph{robust} uncertainty maps that reliably correlate with prediction errors.
  }
  \label{fig:banner}
  \end{center}

  \vskip 0.0in
]



\printAffiliationsAndNotice{}  


\begin{abstract}

Despite strong zero-shot performance, SAM is unreliable under domain shift due to Mask-level Confidence Confusion (MCC), where a single IoU-based mask score fails to reflect pixel-wise reliability near boundaries.
Motivated by the contrast between texture-biased shortcuts in neural networks and shape-centric processing in human vision, we model out-of-domain variation as appearance shifts and non-rigid deformations that jointly stress calibration.
We propose Segment Anything with \textbf{Robust Uncertainty-Accuracy Correlation (RUAC)} for robust pixel-wise uncertainty estimation under appearance and deformation shifts.
RUAC adds a lightweight uncertainty head, trains it with a collaborative style-deformation attack that jointly perturbs texture and geometry, and applies Uncertainty-Accuracy Alignment to ensure uncertainty consistently highlights erroneous pixels even under adversarial perturbations.
Across 23 zero-shot domains, \textbf{RUAC} improves segmentation quality and yields more faithful uncertainty with stronger uncertainty-accuracy correlation.
Project page: \url{https://hongyouzhou.github.io/ruac/}.
\end{abstract}

\begin{figure*}[t!]
    \centering
    \includegraphics[width=\textwidth]{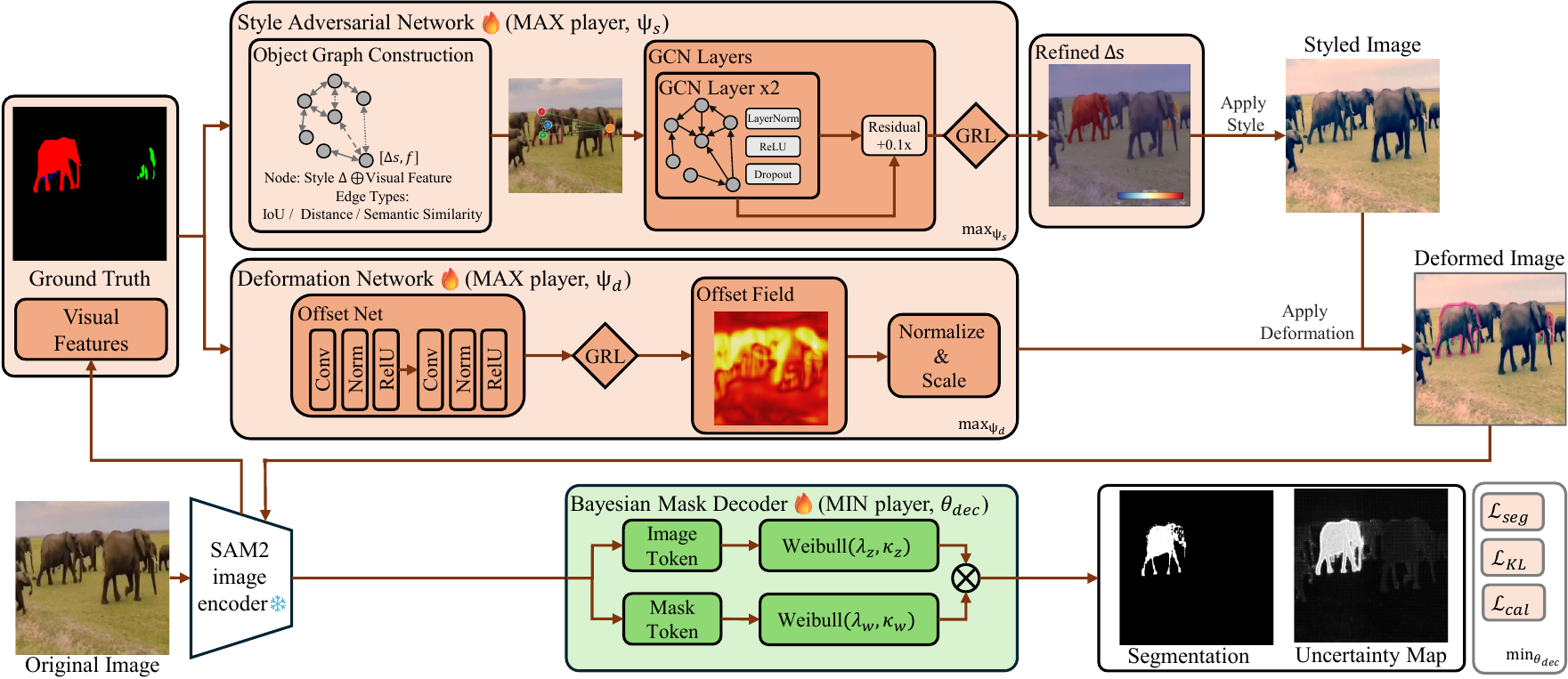}
    \caption{\textbf{Overview of the RUAC Framework.} We formulate adversarial training as a minimax game between attackers and the segmentation model. 
    The Style Adversarial Network~$(\psi_s)$ constructs an object graph from ground-truth masks and visual features, then refines per-object style statistics via GCN layers to generate semantically coherent styled images. 
    The Deformation Network~$(\psi_d)$ predicts a dense offset field from SAM2 visual features to produce geometric perturbations. 
    Both attackers are trained via Gradient Reversal Layers (GRL), enabling end-to-end optimization without PGD-style inner loops.
    The Bayesian Mask Decoder~$(\theta_{dec})$ employs dual-granularity Weibull distributions over image tokens (local, boundary-aware) and mask tokens (global, semantic) to model pixel-wise uncertainty, optimizing for uncertainty-accuracy alignment under these bio-inspired perturbations.
    Training also includes a clean branch (not shown) to maintain in-domain performance.}
    \label{fig:framework}
\end{figure*}

\section{Introduction}
\label{sec:intro}


Recently, the rise of foundation models has reshaped computer vision.
Among them, one of the most striking examples is the Segment Anything Model (SAM) family.
From SAM~\cite{kirillov2023segment} to the video-optimized SAM2~\cite{ravi2024sam2} and concept-aware SAM3~\cite{ravi2025sam3}, these models trained on large-scale datasets (e.g., SA-1B) have demonstrated exceptional zero-shot segmentation across diverse objects and scenes.


Although the SAM family excels at most segmentation tasks, its performance deteriorates on domains outside the pretraining distribution, such as medical imaging~\cite{zhu2024medicalsam2,zhang2024blosam}, biological microscopy~\cite{archit2023segment}, and scientific imaging.
This not only leads to a decrease in segmentation accuracy, but we further found that SAM often suffers from \textbf{Mask-level Confidence Confusion (MCC)}: Due to the model structure of the SAM family, the entire mask shares the same confidence score, which is hard to differentiate from the background's score.
MCC makes it impossible to identify unreliable pixels within a mask, while the uniform confidence across foreground and background renders the model fragile to even minor perturbations.
This brittleness is particularly dangerous in safety-critical applications: users cannot reliably determine when the model will fail.


To address MCC, a natural solution is to augment SAM with a Bayesian mask decoder that provides pixel-level uncertainty estimates rather than mask-level confidence.
However, this alone is insufficient: we observe that the learned uncertainty-accuracy relationship, while valid on the source domain, often degrades under domain shift.
We term this phenomenon \textbf{Uncertainty-Accuracy (UA) shift}: the model's uncertainty no longer correlates with its prediction errors on out-of-domain (OOD) data.
Existing fine-tuning approaches~\cite{chen2024robustsam,zhu2024imedsam,zhu2024medicalsam2,zhang2024blosam} risk catastrophic forgetting, require costly target-domain annotations, and critically, only optimize segmentation metrics (e.g., Dice, IoU) without addressing UA shift~\cite{hendrycks2019corruptions,oakden2020hidden,Grote2024FoundationModelsReliability}.


To address this problem, we augment SAM2 with uncertainty estimation, yielding a general segmentation model that achieves both improved OOD segmentation accuracy and robust uncertainty-accuracy correlation: SAM2 with RUAC~(Figure~\ref{fig:banner}).
Unlike prior work that improves either segmentation accuracy or uncertainty calibration, RUAC is, to our knowledge, the first method to simultaneously improve SAM's segmentation accuracy and provide calibrated pixel-wise uncertainty under single-source domain generalization (SDG).
The innovation of our method is reflected in three aspects:
\begin{enumerate}
    \item \textbf{Interpretable predictions}:
    Compared to previous SAM improvements that only focused on accuracy, our method also includes pixel-level uncertainty modeling to identify ambiguous and error-prone regions, allowing users or the model itself to reject or correct predictions with high uncertainty.
    \item \textbf{Single-source generalization}:
    Compared to previous fine-tuning-based improvements~\cite{zhang2024blosam,archit2023segment}, our work inherits the philosophy of the SAM family: only training a general segmentation model. This avoids the huge cost of data collection and training in the target data domain, truly strengthening ``Segment Anything''.
    \item \textbf{Bio-inspired robustness enhancement}:
    Compared to previous uncertainty estimation methods, our method is grounded in human vision research: humans exhibit robust object recognition primarily through shape bias, whereas neural networks rely more heavily on texture bias~\cite{geirhos2019imagenet, landau1988importance, baker2018deep, Wang2025HumanLikeAdaptiveVision}. Therefore, our method enhances robustness from both texture and shape aspects simultaneously, making machine perception consistent with the robustness observed in biological vision and providing robust uncertainty-accuracy correlation. This also brings new insights for future robustness and uncertainty estimation research.
\end{enumerate}

The design of RUAC is motivated by our empirical observations when integrating existing uncertainty methods with SAM2.
Specifically, we first follow the fine-tuning settings recommended in SAM2 and evaluate the performance of directly integrating existing uncertainty methods into the SAM2 model.
However, the direct integration does not yield satisfactory results: existing uncertainty methods are not designed for foundation models and struggle to generalize when trained on limited fine-tuning data.
To address this without requiring additional data, we draw inspiration from human visual perception: humans achieve robust recognition primarily through shape bias rather than texture~\cite{geirhos2019imagenet,landau1988importance,baker2018deep}.
Accordingly, we decouple OOD variation into two subproblems: style (texture) variation and shape (geometric) variation.
While conventional adversarial training targets worst-case prediction error, our approach uses bio-inspired perturbations to improve the correlation between uncertainty and accuracy.
Specifically, we propose an adversarial training module that exposes the model to extreme variations in surface appearance and non-rigid deformations, thereby explicitly exploring texture bias and shape understanding capabilities.
We use adversarially generated perturbations as stress tests to further align uncertainty and error, requiring the model's predictive uncertainty to accurately cover its prediction errors and uncertainties even under these bio-inspired adversarial samples.
This ultimately ensures that when the model fails due to ambiguity in appearance or shape, it can assign high uncertainty to its predictions.

Our contributions are mainly reflected in the following three aspects:
\begin{enumerate}
    \item We propose a general segmentation framework, RUAC, which exhibits a robust uncertainty-accuracy correlation, truly enhancing the goal of ``Segment Anything''.
    \item We introduce adversarial uncertainty-accuracy alignment: a training paradigm that uses style-deformation perturbations to promote calibration rather than worst-case robustness.
    \item We explore existing uncertainty estimation methods under the current foundation model paradigm and demonstrate that they generally perform poorly on out-of-domain data, while our RUAC framework maintains significant robustness in both segmentation accuracy and uncertainty estimation.
\end{enumerate}

\section{Related Work}
\label{sec:related}

\paragraph{Foundation Models for Segmentation.}
SAM~\cite{kirillov2023segment} and SAM2~\cite{ravi2024sam2} established promptable segmentation as a foundation model paradigm, demonstrating strong transfer across diverse image and video domains.
Subsequent works further improve this paradigm for efficiency~\cite{xiong2024efficientsam}, mask quality~\cite{ke2024hqsam}, or specific downstream tasks~\cite{li2024masa,osep2024sal}. 
While domain-specific adaptations~\cite{zhu2024imedsam} and robustness-oriented fine-tuning~\cite{chen2024robustsam} have improved performance, the predictive \emph{reliability} of these models under substantial domain shift remains under-explored~\cite{Grote2024FoundationModelsReliability}.

\paragraph{Uncertainty and Reliability under Domain Shift.}
Deep networks are notoriously miscalibrated~\cite{guo2017calibration} and overconfident on out-of-domain inputs~\cite{hendrycks2017baseline}.
In dense prediction, this issue is amplified by spatial correlations, necessitating pixel-level uncertainty estimates to pinpoint failure modes. 
Approaches range from sampling-based Bayesian methods~\cite{gal2016dropout,kendall2017uncertainties} to efficient closed-form evidential learning~\cite{sensoy2018evidential,ye2024urern}.
BNDL~\cite{hu2025bndl} further advances this by modeling features with Weibull distributions to capture structured uncertainty in segmentation.
Unlike methods that pursue aleatoric-epistemic decomposition, we optimize total predictive uncertainty to align with task error, extending Bayesian methods to target \emph{training-time} reliability under semantically meaningful perturbations.

\paragraph{Robust Training under Distribution Shift.}
Single-source domain generalization~\cite{zhou2022dgsurvey} employs augmentation~\cite{zhou2021mixstyle} or adversarial learning~\cite{ganin2015grl,ganin2016domain} to mitigate the sensitivity of deep models to texture and style shifts~\cite{geirhos2019imagenet}.
While $\ell_p$-bounded adversarial training~\cite{goodfellow2015fgsm}, particularly Projected Gradient Descent (PGD)~\cite{madry2018adversarial}, improves worst-case robustness, it lacks semantic variation.
Recent work thus prioritizes meaningful stressors like style transfer~\cite{geirhos2019imagenet} and geometric deformations~\cite{wang2021augmax}.
Notably, AdvStyle~\cite{zhong2022adversarial} learns adversarial style features for domain generalization, inspiring our learnable style perturbations, while DG-Font~\cite{xie2021dgfont} employs deformable convolutions for feature-level deformations, informing our geometric adversarial module.
We leverage these semantic perturbations not primarily for robustness, but to actively align uncertainty with error, ensuring reliable uncertainty estimates under domain shifts.


\section{Approach}
\label{sec:method}

We introduce \textbf{RUAC} (\textbf{R}obust \textbf{U}ncertainty-\textbf{A}ccuracy \textbf{C}orrelation, Figure~\ref{fig:framework}), a framework that calibrates uncertainty to maintain its semantic relationship with prediction error across domains. It comprises two key components:
(1) \textbf{Uncertainty Estimation (UE)}: a Bayesian mask decoder that replaces the deterministic mask decoder to produce calibrated uncertainty estimates, and
(2) \textbf{Adversarial Uncertainty Estimation (AUE)}: an uncertainty-driven adversarial training module that generates challenging samples where the uncertainty-accuracy relationship degrades under domain shift.

\subsection{Problem Setup}
\label{subsec:problem}

We study \textit{prompted image segmentation} under domain shift. 
Given an image $\mathbf{I} \in \mathbb{R}^{H\times W\times 3}$ with sparse prompts $\mathbf{P} = \{(x_i, y_i, l_i)\}_{i=1}^{n}$, where $(x_i, y_i)$ are pixel coordinates and $l_i \in \{+1, -1\}$ indicates positive (foreground) or negative (background) points, 
a model $\mathcal{M}_\theta$ predicts a binary mask $\hat{\mathbf{M}} \in \{0,1\}^{H\times W}$ and a pixel-wise uncertainty map $\mathbf{u} \in \mathbb{R}_{+}^{H\times W}$.
Although we build upon SAM2's architecture, we operate in \emph{single-frame mode without memory propagation}: each image is processed independently, isolating segmentation quality from temporal tracking.
Training uses a single source domain $\mathcal{D}_s$ (MOSE dataset~\citep{mose}, sampling individual frames) and evaluation is zero-shot on $N=23$ diverse target domains $\{\mathcal{D}_t^i\}_{i=1}^{N}$ spanning objects, scenes, scientific, and egocentric distributions.

\subsection{Uncertainty Estimation (UE)}
\label{subsec:ue}

We follow the SAM2 architecture~\cite{ravi2024sam2} and make one major modification: replacing the deterministic mask decoder with a \textbf{Bayesian mask decoder} with parameters $\theta_{\text{dec}}$.
Following~\citet{hu2025bndl}, we model both image tokens $\mathbf{f} \in \mathbb{R}^{H \times W \times C}$ and mask tokens $\mathbf{m}_k \in \mathbb{R}^C$ using Weibull variational posteriors.
We justify the choice of Weibull (vs.\ Gaussian, Dirichlet, and other non-negative alternatives) in Appendix~\ref{sec:s1-weibull-choice}.
For image tokens, a convolutional network predicts spatially-varying per-pixel Weibull parameters $(\lambda_f, \kappa_f)$. For mask tokens, a shared MLP predicts per-channel Weibull parameters $(\lambda_{m,c}, \kappa_{m,c})$ (architecture details in Appendix~\ref{sec:s1-bndl}).
Samples are drawn via reparameterization: $w_i = \lambda_i \cdot (-\ln(1 - u))^{1/\kappa_i}$ where $u \sim \text{Uniform}(0,1)$.
Letting $\mathbf{w}$ denote the stochastic features from both image-token and mask-token branches, we collect all Weibull parameters as $\phi = \{(\lambda_i, \kappa_i)\}$ to form the variational posterior $q_\phi(\mathbf{w})$.
Reparameterized samples from both branches are combined via inner product to produce the final logits, enabling closed-form uncertainty propagation where weight uncertainty directly translates to semantic uncertainty in mask predictions.
Both image-token and mask-token posteriors contribute to the KL regularization term (Eq.~\ref{eq:kl}), providing calibrated uncertainty at both spatial (per-pixel) and semantic (per-mask) granularities.

\paragraph{Uncertainty computation.}
We support two modes for uncertainty estimation.
\textbf{(1) Analytic mode:} The Weibull distribution admits closed-form expectation $\mathbb{E}[w_i] = \lambda_i \cdot \Gamma(1 + 1/\kappa_i)$, enabling deterministic forward pass without sampling.
Given the analytically-derived logit variance $v$ (Appendix~\ref{sec:weibull_variance}), we apply MacKay's probit approximation~\cite{mackay1992practical} to obtain the expected sigmoid output, then compute the normalized Bernoulli entropy as the pixel-wise uncertainty $\tilde{u}(x,y) \in [0,1]$.
\textbf{(2) Monte Carlo mode:} We draw $S$ samples from the Weibull posteriors by sampling $u^{(s)} \sim \text{Uniform}(0,1)$ and computing reparameterized features via $w_i^{(s)} = \lambda_i \cdot (-\ln(1 - u^{(s)}))^{1/\kappa_i}$ for both image tokens and mask tokens.
The mean prediction $\bar{\mathbf{M}}$ and pixel-wise uncertainty (from sample disagreement) are obtained by aggregating across all $S$ samples.
While the Bayesian mask decoder provides uncertainty estimates, the learned uncertainty-accuracy relationship may not generalize across domains. This phenomenon, which we term \textbf{uncertainty-accuracy (UA) shift}, motivates our adversarial calibration framework.

\subsection{Adversarial Uncertainty Estimation (AUE)}
\label{subsec:aue}

The AUE module generates challenging samples where the uncertainty-accuracy relationship degrades, exposing failure modes that standard training cannot anticipate.
We propose \emph{end-to-end adversarial generation}, where style and deformation generators are optimized jointly with the segmentation model via Gradient Reversal Layers (GRL)~\cite{ganin2016domain}.
For each training sample we synthesize adversarial images $\mathbf{I}^{\text{adv}}$ via the learnable style and deformation networks described below. The adversarial loss $\mathcal{L}_{\text{adv}} = \mathcal{L}_{\text{seg}}^{\text{adv}} + \beta\,\mathcal{L}_{\text{KL}}^{\text{adv}}$ reuses the same formulation as the clean branch, ensuring consistent training dynamics while enforcing robustness under domain shifts.
In practice, the attack networks are optimized adversarially (via GRL-based sign reversal) to generate such challenging samples, and can additionally be optimized with a dedicated calibration objective (Sec.~\ref{subsec:objectives}) without updating the segmentation model.

\paragraph{Style adversarial network.}

To challenge texture invariance, we introduce a style adversarial network with parameters $\psi_s$, following~\cite{zhong2022adversarial}.
We first extract per-object style statistics $(\boldsymbol{\mu}_k, \boldsymbol{\sigma}_k) \in \mathbb{R}^3$ as the mean and standard deviation of each RGB channel within the masked region $\mathbf{M}_k$.
The network then predicts style residuals $(\Delta\boldsymbol{\mu}_k, \Delta\boldsymbol{\sigma}_k)$ from backbone features $\mathbf{z}$ (coordinated via a GCN for multi-object scenes, see Appendix~\ref{sec:gcn-details}). 
Let $\tilde{\boldsymbol{\mu}}_k = \boldsymbol{\mu}_k + \Delta\boldsymbol{\mu}_k$ and $\tilde{\boldsymbol{\sigma}}_k = \boldsymbol{\sigma}_k + \Delta\boldsymbol{\sigma}_k$ denote the perturbed style parameters, and $\tilde{\mathbf{I}}_k = \text{AdaIN}(\mathbf{I}, \tilde{\boldsymbol{\mu}}_k, \tilde{\boldsymbol{\sigma}}_k)$ the stylized image for object $k$. The composite adversarial image is $\mathbf{I}^{\text{adv}} = \mathbf{I}_{\text{bg}} + \sum_{k} \mathbf{M}_k \odot \tilde{\mathbf{I}}_k$, where $\mathbf{I}_{\text{bg}} = (1 - \sum_k \mathbf{M}_k) \odot \mathbf{I}$.

\paragraph{Deformation adversarial network.}

To challenge shape priors, we introduce a deformation adversarial network with parameters $\psi_d$, inspired by DG-Font~\cite{xie2021dgfont}.
By design, deformations are small but targeted at boundaries and thin structures, where they yield large segmentation effects analogous to adversarial examples in classification~\cite{szegedy2014intriguing,goodfellow2015fgsm}.
The network takes as input the backbone features $\mathbf{z} \in \mathbb{R}^{C \times H \times W}$ and an object mask $\mathbf{M}_k$, then predicts a per-pixel flow field $\boldsymbol{\delta}_k$.
Specifically, we first encode the mask via a convolutional downsampler to obtain mask embeddings $\mathbf{e}_k$.
These are fused with projected image features via element-wise addition, and passed through a fusion module and offset predictor: $\boldsymbol{\delta}_k = f_{\psi_d}\big(f_{\text{proj}}(\mathbf{z}) + f_{\text{mask}}(\mathbf{M}_k)\big)$.
The adversarial image is then obtained via differentiable grid sampling~\cite{jaderberg2015stn}: $\mathbf{I}^{\text{adv}} = \text{GridSample}(\tilde{\mathbf{I}}, \boldsymbol{\delta})$, where $\tilde{\mathbf{I}}$ is the styled image.
Importantly, the same flow field is applied to the ground-truth mask, $\mathbf{M}^{\text{adv}} = \text{GridSample}(\mathbf{M}^*, \boldsymbol{\delta})$, to maintain semantic correspondence between the perturbed image and its supervision signal.

\paragraph{Perturbation pipeline.}
Both style and deformation attacks operate in \emph{image space}: style applies AdaIN color transfer, deformation applies spatial warping via grid sampling.
Attack parameters are predicted in \emph{parallel} from the backbone features $\mathbf{z} = f_{\text{enc}}(\mathbf{I})$ of the \emph{original} (unperturbed) image, ensuring stable gradient flow for min-max optimization.
These parameters are then \emph{applied sequentially}: style followed by deformation.
A single backbone forward pass on the final adversarial image $\mathbf{I}^{\text{adv}}$ produces adversarial features for decoder training.
Perturbation magnitudes are constrained by $\epsilon$ (with $\|\boldsymbol{\delta}\|_\infty \le \epsilon$ for deformations and analogous bounds for style shifts), ensuring semantically meaningful augmentations (design rationale in Appendix~\ref{sec:s2-adv}).
Importantly, the attack networks are used \emph{only during training} to expose calibration weaknesses. At inference, only the Bayesian mask decoder runs, requiring no GT masks or attack computation.

\subsection{Objective and Optimization}
\label{subsec:objectives}

\paragraph{Main objective.}
We train the Bayesian mask decoder with segmentation supervision and Bayesian regularization on both clean and adversarial samples.
The segmentation loss is defined as
\begin{align}
\mathcal{L}_{\text{seg}}
&=
\mathcal{L}_{\text{focal}} + \mathcal{L}_{\text{dice}} + \mathcal{L}_{\text{IoU}},
\label{eq:seg_loss}\\
\mathcal{L}_{\text{KL}}
&=
\mathrm{KL}\!\left(q_\phi(\mathbf{w}_{\text{dec}})\,\big\|\,p(\mathbf{w}_{\text{dec}})\right),
\label{eq:kl}
\end{align}
where $\mathcal{L}_{\text{KL}}$ regularizes the Weibull posterior $q_\phi$ towards a Gamma prior $p(\mathbf{w}_{\text{dec}})$ and prevents uncertainty collapse.

Importantly, we do \emph{not} directly supervise the segmentation model parameters using any calibration loss that regresses uncertainty to prediction error.
Instead, the main optimizer updates the model only through segmentation supervision and Bayesian regularization on both branches:
\begin{equation}
\min_{\theta_{\text{dec}}}\;\;
\mathcal{L}_{\theta}
=
\underbrace{\mathcal{L}_{\text{seg}} + \beta\,\mathcal{L}_{\text{KL}}}_{\text{clean branch}}
+
\gamma\underbrace{\left(
\mathcal{L}_{\text{seg}}^{\text{adv}}
+
\beta\,\mathcal{L}_{\text{KL}}^{\text{adv}}
\right)}_{\text{adversarial branch}},
\label{eq:main_obj}
\end{equation}
where $\beta$ controls the KL regularization strength and $\gamma$ weights the adversarial training contribution.
We use a curriculum schedule that gradually increases $\gamma$ during training (Appendix~\ref{sec:s3-train}).

\paragraph{Attacker objective.}
Via GRL, the attacker implicitly maximizes the adversarial branch of the total loss. 
Let $\psi=\{\psi_s,\psi_d\}$ denote attacker parameters, and let
$x^{\text{adv}} = T_{\psi}(x)$ be the adversarial sample produced by composing style and deformation transformations.
The attacker objective is:
\begin{equation}
\max_{\psi}\;\;
\mathcal{L}_{\text{seg}}^{\text{adv}}
+
\beta\,\mathcal{L}_{\text{KL}}^{\text{adv}}
+
\lambda\,\mathcal{L}_{\text{cal}},
\label{eq:attacker_obj}
\end{equation}
where $\mathcal{L}_{\text{seg}}^{\text{adv}}$ and $\mathcal{L}_{\text{KL}}^{\text{adv}}$ are the same losses as Eqs.~\ref{eq:seg_loss}--\ref{eq:kl} evaluated on adversarial samples.
The calibration loss $\mathcal{L}_{\text{cal}}$ encourages the model to output uncertainty aligned with prediction errors on adversarial samples.

The calibration loss penalizes both overconfident errors (certain but wrong) and over-uncertain correct predictions (uncertain but correct), each along two complementary channels:
\begin{equation}
\begin{aligned}
\mathcal{L}_{\text{cal}}
={}& \underbrace{e \cdot \exp(-\mathrm{sg}[u]) + \mathrm{sg}[e] \cdot \exp(-u)}_{\text{certain \& wrong}} \\
&{}+ \underbrace{(1-e) \cdot \exp(\mathrm{sg}[u]) + (1-\mathrm{sg}[e]) \cdot \exp(u)}_{\text{uncertain \& correct}},
\end{aligned}
\label{eq:cal}
\end{equation}
where $e = |\hat{\mathbf{M}}-\mathbf{M}^*|$ is the pixel-wise prediction error, with $\mathbf{M}^*$ denoting the ground-truth mask, and $u$ is the analytic uncertainty.
Within each failure mode, the first term provides gradient on the $e$ channel and the second on the $u$ channel.
Each stop-gradient $\mathrm{sg}[\cdot]$ routes its term to a single channel.
$\mathrm{sg}[u]$ confines the first term to the $e$ channel and blocks the attacker's trivial pathway through $u$.
$\mathrm{sg}[e]$ confines the second term to the $u$ channel so it contributes distinct gradient rather than duplicating the $e$-channel signal.
The $u$-channel terms are necessary because $\partial \mathcal{L}_{\text{cal}}/\partial e = \exp(-u) - \exp(u)$ vanishes near $u \approx 0$, exactly the regime where the model is wrong but confident. The mirror terms restore gradient signal in this region.

\subsection{Training Loop}
\label{subsec:training}

We adopt an \emph{alternating training} scheme where each iteration comprises two forward passes: (1) a \emph{clean pass} on original training samples, and (2) an \emph{adversarial pass} on perturbed samples generated by the style and deformation networks.
Both passes share the same segmentation and Bayesian regularization objectives. During the adversarial pass, Gradient Reversal Layers (GRL) implement a min-max update in a single backward pass: segmentation gradients update the encoder/decoder as usual, while their signs are flipped for the attack networks to encourage harder perturbations.
This design achieves a min-max optimization without explicit inner loops:
\begin{equation}
\min_{\theta_{\text{dec}}} \max_{\psi_s, \psi_d} \;\;
\underbrace{\mathcal{L}_{\text{seg}} + \beta\,\mathcal{L}_{\text{KL}}}_{\text{clean branch}}
\;+\;
\gamma\,\underbrace{\left(\mathcal{L}_{\text{seg}}^{\text{adv}} + \beta\,\mathcal{L}_{\text{KL}}^{\text{adv}} + \lambda\,\mathcal{L}_{\text{cal}}\right)}_{\text{adversarial branch}},
\label{eq:minmax}
\end{equation}
where GRL handles the sign flip for $\psi_s, \psi_d$ implicitly. Algorithm~\ref{alg:training} (Appendix) summarizes the complete procedure.

\paragraph{Interpretation.}
From a PAC-Bayesian perspective~\cite{mcallester1999pac}, our adversarial training can be interpreted as encouraging two desirable properties:
(1) \emph{Loss Landscape Flattening}, biasing the posterior toward solutions less sensitive to style and deformation, and
(2) \emph{Uncertainty-Risk Coupling}, where regions difficult to make robust tend to exhibit elevated uncertainty (Appendix~\ref{sec:lemma_entropy_variance}).
\begin{figure*}[ht!]
    \centering
    \includegraphics[width=\linewidth]{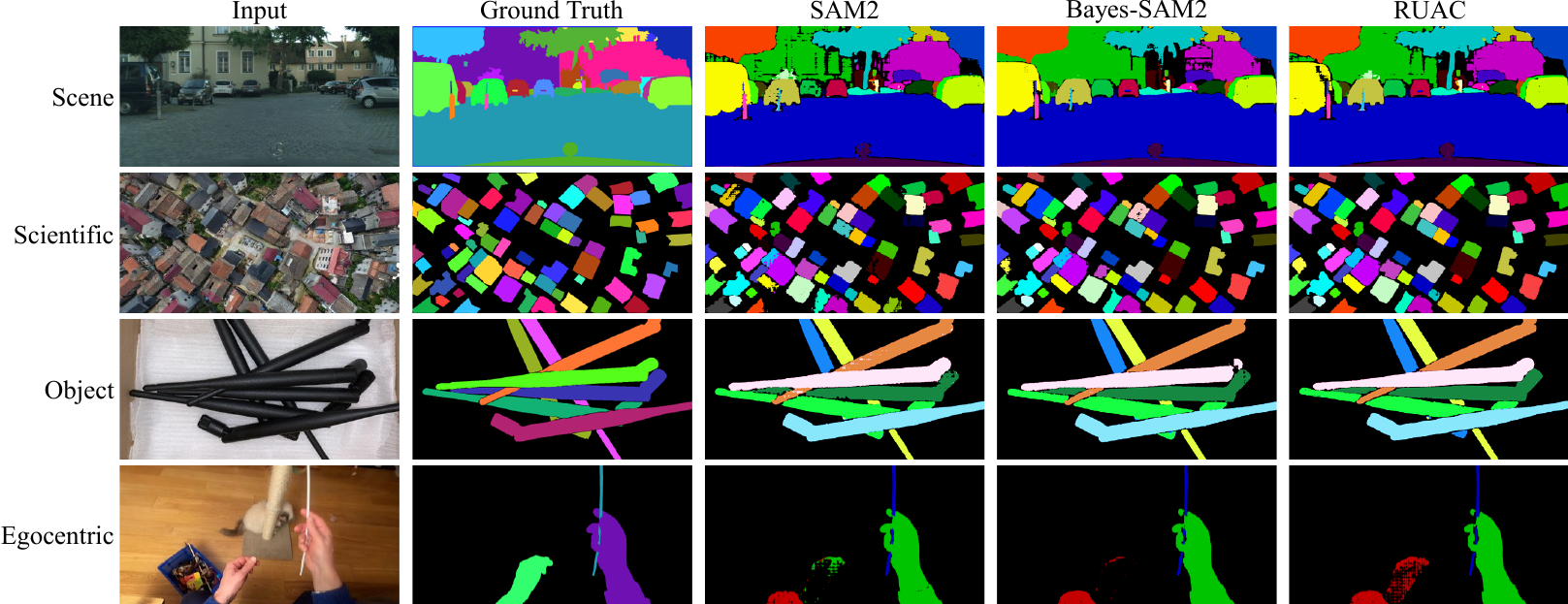}
    \caption{\textbf{Qualitative segmentation comparison across diverse domains.} 
    From top to bottom: Scene (Cityscapes), Scientific (IBD aerial building imagery), Object (mixed industrial objects), and Egocentric (hand-object interaction). 
    Columns show input, ground truth, and predictions from SAM2, Bayes-SAM2, and RUAC (ours). 
    RUAC produces more complete and accurate masks, particularly for challenging cases such as fine-grained building boundaries (Scene), 
    densely packed structures (Scientific), and occluded objects (Object/Egocentric). Corresponding confidence and uncertainty maps for the first row are shown in Figure~\ref{fig:conf_unc_vis}.}
    \label{fig:segmentation_vis}
\end{figure*}

\begin{figure*}[ht!]
    \centering
    \includegraphics[width=\linewidth]{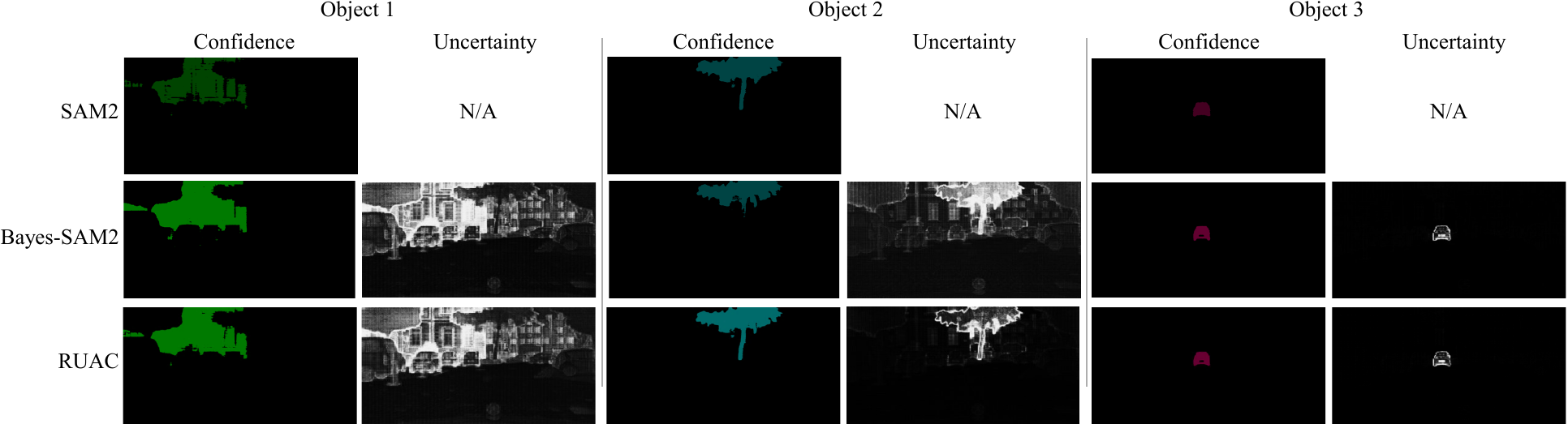}
    \caption{\textbf{Confidence and uncertainty visualization for three object instances from the first row of Figure~\ref{fig:segmentation_vis}.} 
    Each row corresponds to a method (SAM2, Bayes-SAM2, RUAC), showing confidence and uncertainty maps. 
    In both maps, brighter colors (greater contrast against the black background) indicate higher confidence or uncertainty. 
    SAM2 produces only confidence without uncertainty estimation (N/A). Both Bayes-SAM2 and RUAC yield uncertainty maps, 
    but RUAC exhibits sharper confidence on object interiors while concentrating uncertainty along ambiguous boundaries, indicating better calibration. Additional visualizations are provided in Figure~\ref{fig:conf_unc_vis_suppl} (Appendix).}
    \label{fig:conf_unc_vis}
\end{figure*}


\begin{table*}[t]
    \centering
    \caption{Zero-shot domain generalization results (J\&F scores) on 23 target datasets. All methods are fine-tuned on MOSE and evaluated zero-shot. Bayes-SAM2 denotes UE only. RUAC denotes full framework with UE and AUE. \textbf{Bold} and \underline{underline} indicate the best and second-best results per column (excluding ablations and test-time adaptation).}
    \label{tab:overall}
    \scriptsize
    \setlength{\tabcolsep}{1.5pt}
    \begin{tabular}{>{\fontsize{7}{7}\selectfont}l*{24}{>{\fontsize{6}{7}\selectfont}p{0.48cm}}}
        \toprule
        & \multicolumn{8}{c}{\scriptsize Objects} & \multicolumn{7}{c}{\scriptsize Scenes} & \multicolumn{4}{c}{\scriptsize Scientific} & \multicolumn{4}{c}{\scriptsize Egocentric} & \multicolumn{1}{c}{\scriptsize} \\
        Method & \rot{TrashCan} & \rot{iShape} & \rot{ZeroWaste-f} & \rot{LVIS} & \rot{NDD20} & \rot{Plittersdorf} & \rot{TimberSeg} & \rot{DRAM} & \rot{Cityscapes} & \rot{ADE20K} & \rot{Hypersim} & \rot{WoodScape} & \rot{STREETS} & \rot{NDISPark} & \rot{DOORS} & \rot{BBBC038v1} & \rot{IBD} & \rot{PIDRay} & \rot{PPDLS} & \rot{GTEA} & \rot{EgoHOS} & \rot{VISOR} & \rot{OVIS} & \rot{Avg$\uparrow$} \\
        \cmidrule(r){1-1} \cmidrule(lr){2-9} \cmidrule(lr){10-16} \cmidrule(lr){17-20} \cmidrule(lr){21-24} \cmidrule(l){25-25}
        \multicolumn{25}{l}{\fontsize{7}{7}\selectfont\textit{Foundation baselines (no uncertainty estimation)}} \\
        SAM2 & 44.9 & 63.3 & 83.2 & 75.2 & 91.7 & 47.5 & 78.3 & 70.0 & 64.2 & \underline{58.8} & 46.7 & 55.6 & 87.6 & 40.7 & 89.1 & 81.1 & 80.9 & 63.4 & 76.2 & 55.6 & 84.0 & 52.0 & 68.3 & 67.75 \\
        \rowcolor{gray!10} SAM2-FT & 72.4 & 79.1 & 87.3 & 75.9 & 92.3 & 86.2 & 83.8 & \textbf{78.6} & \underline{65.1} & \textbf{60.2} & 60.8 & 59.1 & 88.5 & 86.7 & 91.6 & 87.2 & 87.6 & 68.3 & 83.7 & 89.6 & 89.3 & 74.7 & 86.3 & 79.75 \\
        SAM2-FT-LoRA & 71.3 & 78.0 & 86.8 & 75.6 & 93.8 & 88.7 & 83.6 & \underline{78.5} & 61.6 & 51.1 & 54.6 & 59.6 & 89.8 & 88.2 & \textbf{92.2} & 87.0 & 88.9 & 64.4 & 83.6 & 90.5 & 90.4 & 73.0 & \textbf{88.5} & 79.13 \\
        \multicolumn{25}{l}{\fontsize{7}{7}\selectfont\textit{Uncertainty estimation baselines (no adversarial calibration)}} \\
        \rowcolor{gray!10} Bayes-SAM2 & 74.9 & 84.3 & 88.0 & 75.1 & 94.2 & 88.4 & 88.3 & 73.6 & 55.4 & 55.2 & 57.5 & 57.0 & 90.4 & \underline{89.6} & 90.9 & 85.5 & \underline{90.3} & 67.5 & 86.7 & 91.0 & 90.4 & 75.1 & 87.7 & 79.87 \\
        UR-ERN & 68.5 & 62.8 & 83.4 & 66.0 & 89.6 & 83.3 & 69.2 & 76.6 & 58.2 & 53.8 & 54.4 & 53.7 & 86.0 & 83.3 & 90.4 & 81.6 & 80.6 & 62.2 & 76.6 & 85.1 & 72.1 & 67.2 & 83.6 & 73.40 \\
        \multicolumn{25}{l}{\fontsize{7}{7}\selectfont\textit{Generic augmentation baselines (with Bayesian mask decoder)}} \\
        \rowcolor{gray!10} Random Noise & 74.4 & 84.6 & \underline{88.4} & 74.8 & \underline{94.5} & 88.6 & 88.3 & 76.0 & 64.2 & 56.4 & \underline{61.8} & 58.3 & 90.4 & \textbf{89.7} & \underline{91.9} & 87.5 & 90.2 & 66.5 & 86.9 & \underline{91.3} & 91.3 & 74.8 & 87.8 & \underline{80.81} \\
        PGD & 72.7 & 84.6 & 87.5 & 73.2 & 94.3 & 88.4 & 88.2 & 75.8 & 53.3 & 56.0 & 55.8 & 59.4 & 90.5 & 89.3 & 91.6 & \underline{88.4} & 90.1 & 67.4 & \textbf{87.5} & 91.1 & \underline{91.5} & 74.0 & 87.3 & 79.90 \\
        \rowcolor{gray!10} Patch & 73.2 & 84.7 & 88.0 & 75.4 & 94.3 & \underline{88.9} & 88.3 & 76.1 & 61.3 & 54.4 & 60.9 & 59.1 & 90.3 & 89.4 & 91.8 & 86.0 & \underline{90.3} & 67.7 & \underline{87.4} & 90.8 & 91.2 & \textbf{76.7} & 87.7 & 80.60 \\
        MixStyle & \underline{75.9} & \underline{85.4} & 87.9 & 77.2 & \textbf{94.6} & \underline{88.9} & 88.4 & 70.5 & 59.9 & 53.4 & 59.9 & 60.6 & \underline{90.6} & 89.2 & 91.0 & 87.7 & 89.8 & 66.5 & 86.7 & 90.8 & \textbf{91.9} & 73.9 & \underline{88.1} & 80.37 \\
        \rowcolor{gray!10} DSU & 73.7 & 84.6 & \underline{88.4} & \textbf{79.5} & \textbf{94.6} & \textbf{89.0} & \underline{88.6} & 73.9 & 58.6 & 51.8 & 57.8 & 59.6 & \underline{90.6} & 88.9 & 91.2 & 88.2 & \underline{90.3} & 67.1 & 87.2 & 90.5 & 90.5 & 75.8 & 87.9 & 80.35 \\
        StyleGen-Generic & \textbf{76.5} & 80.6 & 87.5 & 71.8 & \underline{94.5} & 88.3 & 86.7 & 74.5 & 62.0 & 47.9 & 51.2 & \underline{61.3} & \textbf{90.8} & 88.9 & 91.4 & 88.2 & 88.3 & 68.6 & 86.4 & 89.9 & 90.3 & 71.1 & 87.9 & 79.33 \\
        \rowcolor{gray!10} StyleGen-Targeted & 75.8 & 78.1 & 83.7 & 74.7 & 93.3 & 87.3 & 85.0 & 75.8 & 51.9 & 51.3 & 48.1 & 59.9 & 90.4 & 88.8 & 91.7 & \textbf{89.2} & 84.9 & \textbf{71.6} & 86.6 & 89.5 & 87.2 & 67.7 & 85.9 & 78.19 \\
        \multicolumn{25}{l}{\fontsize{7}{7}\selectfont\textit{Adversarial augmentation ablations (Style \& Deform)}} \\
        RUAC-Style-Global & 71.3 & 84.9 & 88.6 & 72.5 & 94.4 & 88.6 & 88.7 & 72.6 & 60.0 & 55.1 & 62.6 & 58.8 & 90.4 & 88.6 & 90.9 & 88.3 & 90.4 & 68.1 & 86.4 & 91.6 & 90.5 & 75.3 & 87.3 & 80.26 \\
        \rowcolor{gray!10} RUAC-Style-Single & 75.0 & 84.6 & 88.3 & 74.6 & 94.3 & 88.8 & 88.4 & 75.0 & 58.3 & 56.9 & 59.8 & 59.5 & 90.4 & 89.4 & 90.9 & 85.8 & 90.0 & 67.2 & 87.7 & 90.2 & 91.6 & 74.9 & 88.2 & 80.43 \\
        RUAC-Style-Multi & 71.0 & 85.4 & 88.0 & 75.0 & 94.7 & 88.7 & 88.4 & 75.3 & 61.7 & 55.3 & 58.8 & 59.2 & 90.3 & 89.7 & 91.9 & 88.1 & 90.5 & 67.2 & 87.6 & 90.9 & 91.1 & 75.0 & 88.1 & 80.52 \\
        \rowcolor{gray!10} RUAC-Style-MultiBG & 74.3 & 86.3 & 88.6 & 74.2 & 94.4 & 88.6 & 88.7 & 75.8 & 59.3 & 55.0 & 62.5 & 60.9 & 90.3 & 88.6 & 91.8 & 87.8 & 90.2 & 67.6 & 87.3 & 90.0 & 91.0 & 75.0 & 87.3 & 80.68 \\
        RUAC-Style-GCN & 74.4 & 85.7 & 88.2 & 74.7 & 94.0 & 88.4 & 88.8 & 75.6 & 60.5 & 57.3 & 62.2 & 59.4 & 90.3 & 88.9 & 91.1 & 87.8 & 90.1 & 68.5 & 87.3 & 90.1 & 91.2 & 75.4 & 87.2 & 80.74 \\
        \rowcolor{gray!10} RUAC-Deform & 71.0 & 85.7 & 88.7 & 74.4 & 94.5 & 88.8 & 88.4 & 74.9 & 60.6 & 54.7 & 60.7 & 58.7 & 90.3 & 88.7 & 91.4 & 88.3 & 90.6 & 68.2 & 85.8 & 91.4 & 91.2 & 75.9 & 87.1 & 80.44 \\
        \multicolumn{25}{l}{\fontsize{7}{7}\selectfont\textit{Complete methods (ours)}} \\
        \rowcolor{gray!10} RUAC & 73.8 & \textbf{86.1} & \textbf{89.1} & \underline{78.6} & \textbf{94.6} & 88.6 & \textbf{88.7} & 74.9 & \textbf{66.7} & 58.5 & \textbf{64.0} & \textbf{61.5} & 90.4 & 89.1 & 91.5 & 87.7 & \textbf{90.6} & \underline{69.5} & 87.0 & \textbf{91.4} & 91.4 & \underline{76.6} & 86.8 & \textbf{81.62} \\
        \midrule
        \rowcolor{gray!30} \multicolumn{25}{l}{\fontsize{7}{7}\selectfont\textit{Test-time adaptation (for reference only, since it unfairly uses test data)}} \\
        \rowcolor{gray!30} UCTTA & 73.6 & 84.9 & 88.5 & 80.6 & 94.7 & 87.6 & 90.1 & 78.6 & 70.1 & 60.8 & 66.3 & 60.9 & 90.0 & 86.2 & 91.9 & 86.9 & 89.8 & 67.5 & 88.5 & 87.5 & 91.7 & 77.6 & 87.5 & 81.82 \\
        \bottomrule
    \end{tabular}
\end{table*}

\begin{table*}[t]
    \centering
    \caption{\textbf{PAvPU Comparison.} Patch Accuracy vs.\ Patch Uncertainty across domains. Higher PAvPU indicate better uncertainty-accuracy alignment.}
    \label{tab:pavpu_comparison}
    \scriptsize
    \setlength{\tabcolsep}{1.5pt}
    \begin{tabular}{>{\fontsize{7}{7}\selectfont}l*{1}{>{\fontsize{6}{7}\selectfont}p{0.35cm}}|*{24}{>{\fontsize{6}{7}\selectfont}p{0.48cm}}}
        \toprule
         & \multicolumn{1}{l}{\scriptsize In-Domain} & \multicolumn{24}{c}{\scriptsize Out-of-Domain} \\
        Method & \rot{MOSE} & \rot{TrashCan} & \rot{iShape} & \rot{ZeroWaste} & \rot{LVIS} & \rot{NDD20} & \rot{Plittersdorf} & \rot{TimberSeg} & \rot{DRAM} & \rot{Cityscapes} & \rot{ADE20K} & \rot{Hypersim} & \rot{WoodScape} & \rot{STREETS} & \rot{NDISPark} & \rot{DOORS} & \rot{BBBC} & \rot{IBD} & \rot{PIDRay} & \rot{PPDLS} & \rot{GTEA} & \rot{EgoHOS} & \rot{VISOR} & \rot{OVIS} & \rot{\textbf{Avg$\uparrow$}} \\
        \midrule
        Bayes-SAM2 & \textbf{86.9} & 61.5 & 67.2 & 69.4 & 64.9 & 29.5 & 94.6 & 15.2 & 11.3 & 25.6 & 31.3 & 72.1 & 83.7 & 95.6 & 99.7 & 64.6 & 58.7 & 53.9 & 9.0 & 43.6 & 64.7 & 44.1 & 33.3 & 81.2 & 55.4 \\
        RUAC & 81.6 & 70.5 & 78.3 & 83.0 & 71.5 & 60.3 & 99.8 & 26.4 & 10.8 & 30.9 & 37.9 & 74.2 & 82.6 & 97.0 & 99.8 & 70.6 & 71.4 & 58.5 & 14.9 & 51.4 & 73.7 & 63.5 & 51.9 & 86.9 & \textbf{63.7} \\
        \bottomrule
    \end{tabular}
\end{table*}

\begin{table}[t]
    \centering
    \caption{\textbf{Calibration comparison across augmentation methods using the same Bayesian mask decoder.} Aggregate metrics over 23 OOD datasets. \textbf{Bold} and \underline{underline} mark the best and second-best per metric (excluding the no-augmentation Bayes-SAM2). RUAC is the only method that improves both segmentation accuracy and all three calibration metrics over Bayes-SAM2.}
    \label{tab:calibration}
    \scriptsize
    \setlength{\tabcolsep}{4pt}
    \begin{tabular}{l c c c c}
        \toprule
        Method & J\&F$\uparrow$ & PAvPU$\uparrow$ & AURC$\downarrow$ & ECE$\downarrow$ \\
        \midrule
        \rowcolor{gray!10} Bayes-SAM2 & 79.87 & 55.40 & 0.102 & 0.123 \\
        \midrule
        Random Noise & \underline{80.81} & 52.24 & 0.112 & 0.126 \\
        PGD & 79.90 & 52.89 & 0.115 & 0.130 \\
        Patch & 80.60 & 56.29 & 0.105 & 0.131 \\
        MixStyle & 80.37 & 55.46 & 0.100 & 0.133 \\
        DSU & 80.35 & \underline{58.02} & \underline{0.097} & \underline{0.125} \\
        StyleGen-Generic & 79.33 & 55.10 & 0.101 & 0.136 \\
        StyleGen-Targeted & 78.19 & 48.87 & 0.111 & 0.172 \\
        \midrule
        \rowcolor{gray!10} RUAC & \textbf{81.62} & \textbf{63.72} & \textbf{0.092} & \textbf{0.117} \\
        \bottomrule
    \end{tabular}
\end{table}

\section{Experiments}
\label{sec:experiments}


\begin{figure}[t]
    \centering
    \captionsetup[subfigure]{font=scriptsize,labelfont=scriptsize,skip=2pt}
    \newcommand{\cropimage}[1]{%
        \begin{tikzpicture}
            \begin{scope}
                \clip (0,0) rectangle (\linewidth,\linewidth);
                \node[anchor=south west,inner sep=0] at (-1.2\linewidth,-0.77\linewidth) {\includegraphics[width=3.2\linewidth]{#1}};
            \end{scope}
        \end{tikzpicture}%
    }
    \begin{subfigure}[t]{0.24\linewidth}
        \centering
        \cropimage{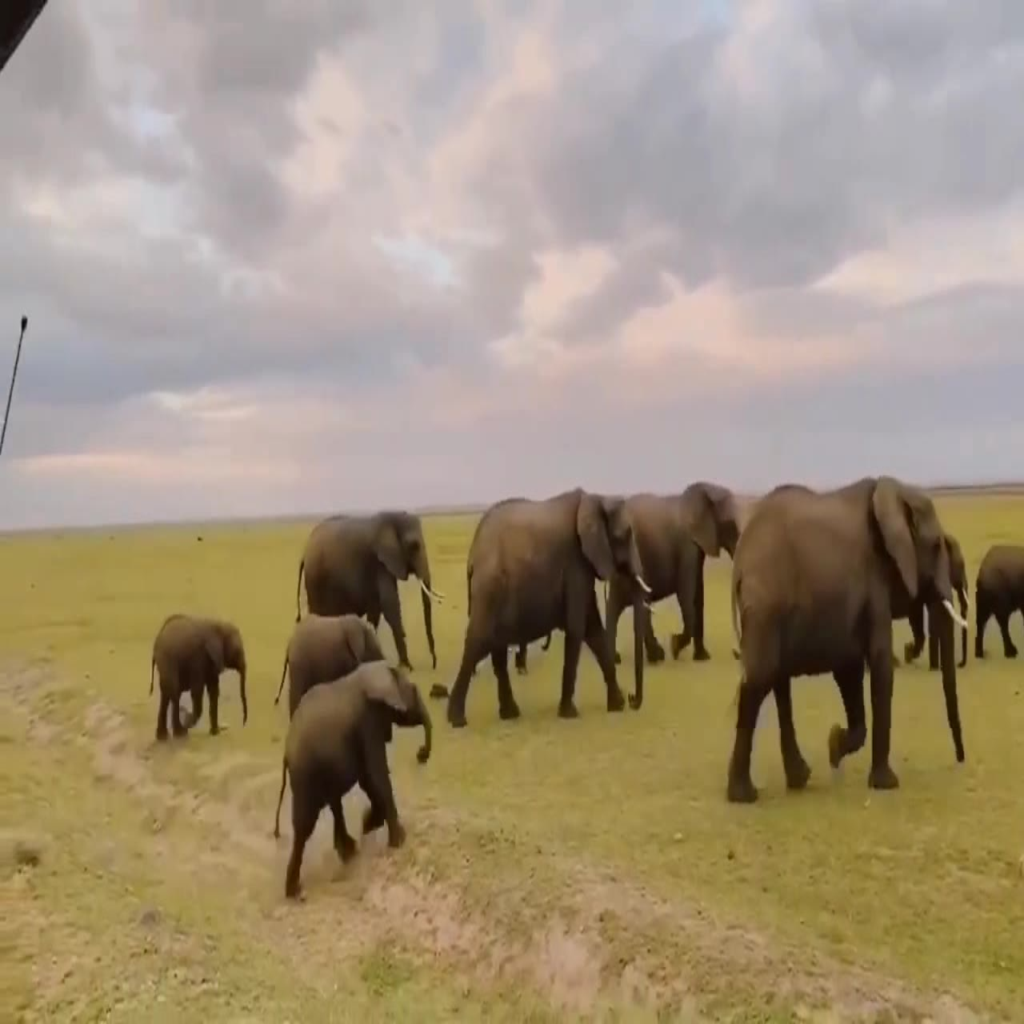}
        \caption{Original}
        \label{fig:pert_original}
    \end{subfigure}
    \hfill
    \begin{subfigure}[t]{0.24\linewidth}
        \centering
        \cropimage{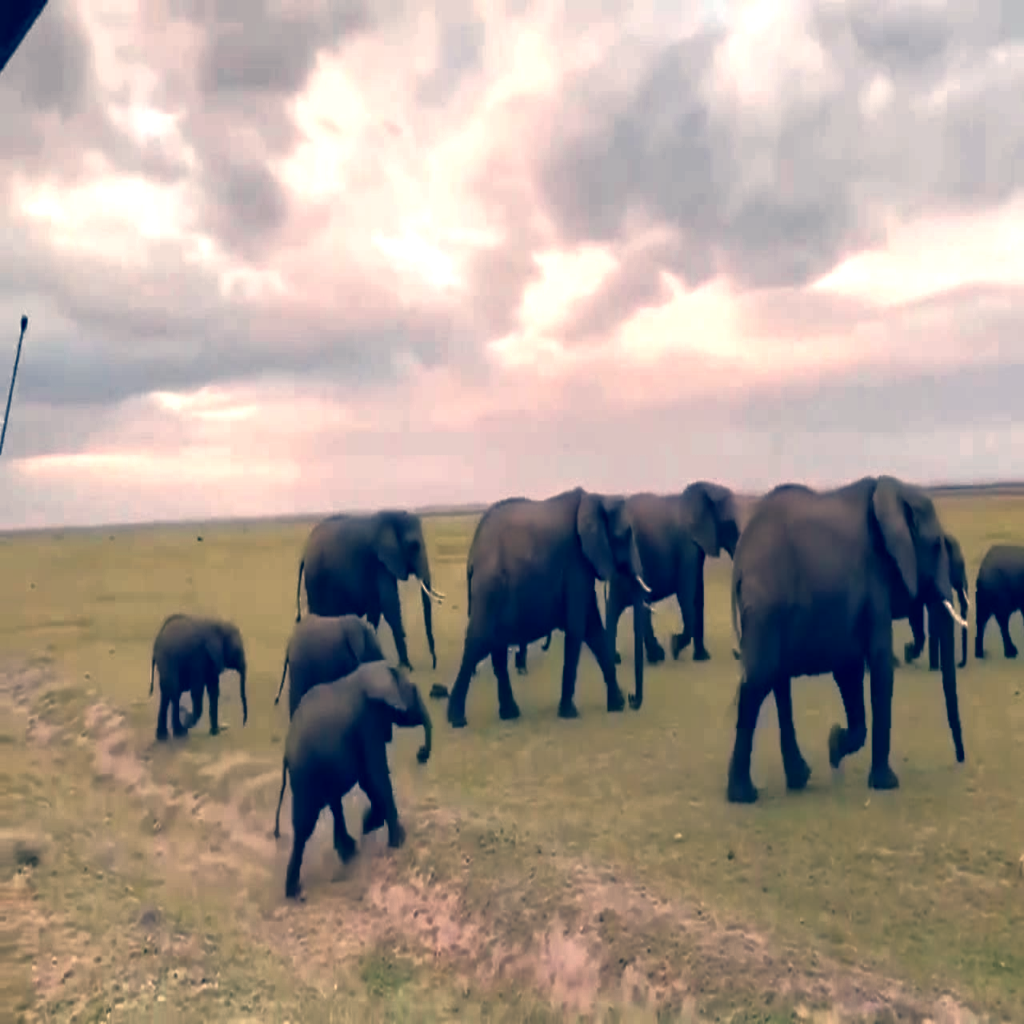}
        \caption{Style (w/o GCN)}
        \label{fig:pert_style_no_gcn}
    \end{subfigure}
    \hfill
    \begin{subfigure}[t]{0.24\linewidth}
        \centering
        \cropimage{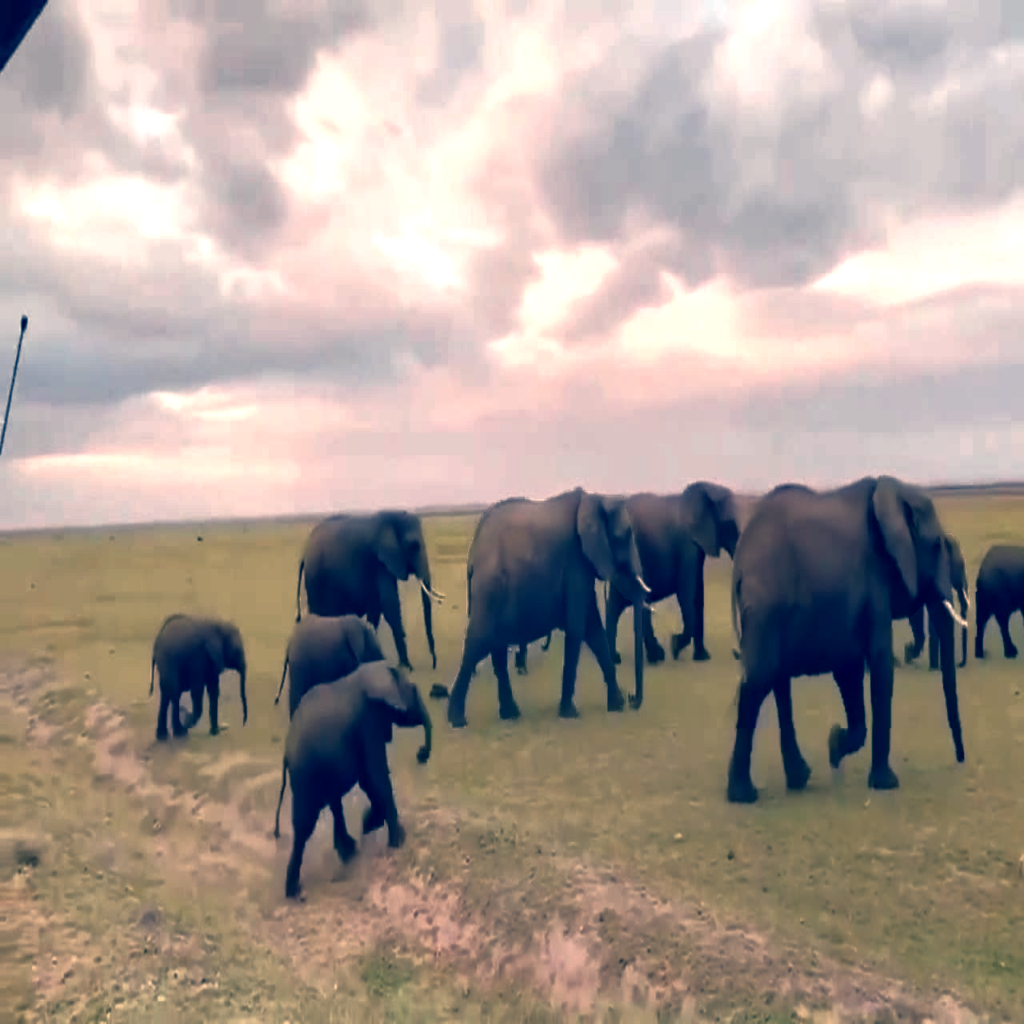}
        \caption{Style (w/ GCN)}
        \label{fig:pert_style_gcn}
    \end{subfigure}
    \hfill
    \begin{subfigure}[t]{0.24\linewidth}
        \centering
        \cropimage{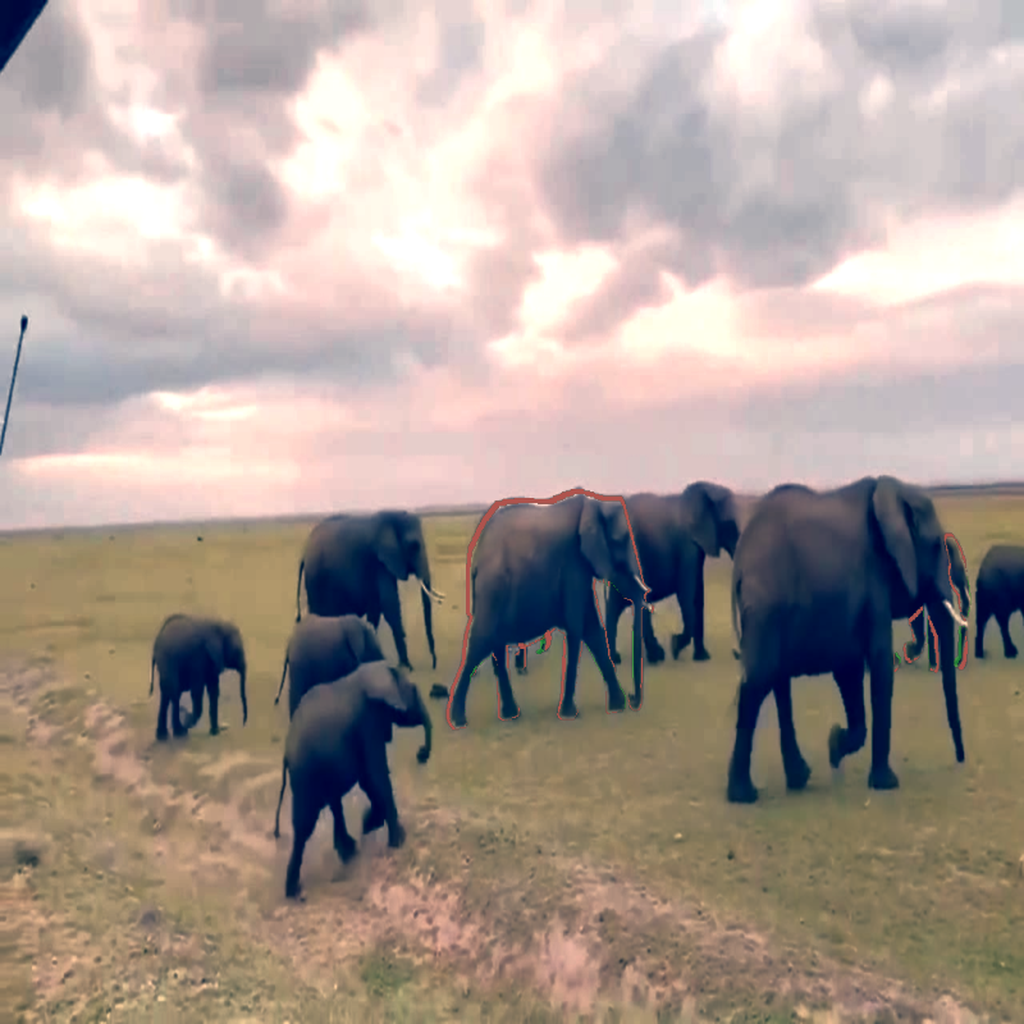}
        \caption{Deformation}
        \label{fig:pert_deform}
    \end{subfigure}
    \caption{\textbf{Visualization of adversarial perturbations (zoomed to central region).} 
    (a) Original input image. 
    (b) Without GCN, style perturbation is applied globally, causing uniform darkening across the entire scene. 
    (c) With GCN capturing inter-object correlations, perturbations become object-aware, enabling more targeted attacks while preserving global semantics (e.g., the central elephant). 
    (d) Geometric deformation, where \textcolor{red}{red} indicates reduced regions and \textcolor{Green}{green} indicates expanded regions along object boundaries.}
    \label{fig:perturbation_vis}
\end{figure}

\textbf{Datasets:} We evaluate RUAC following the standard zero-shot protocol of SAM/SAM2~\cite{kirillov2023segment, ravi2024sam2}: a strict SDG setting.
Models are fine-tuned solely on the MOSE dataset~\cite{mose} (complex video objects) and evaluated zero-shot on 23 diverse target domains spanning four categories: \emph{Objects}, \emph{Scenes}, \emph{Scientific}, and \emph{Egocentric} (Table~\ref{tab:datasets}).
This strict SDG protocol tests the model's ability to learn invariant representations rather than memorizing source-specific statistics.

\textbf{Prompting protocol (3-click):} For zero-shot evaluation, we follow a deterministic \emph{3-click} interactive prompting protocol consistent with prior promptable/interactive segmentation evaluations~\cite{kirillov2023segment,sofiiuk2021reviving}.
Concretely, each object is evaluated with three point prompts generated deterministically from ground truth (with a minimum click separation), and all methods receive identical prompts.

\textbf{Metrics:} We report (1) \textbf{J\&F Score}~\cite{perazzi2016davis} for segmentation quality and (2) \textbf{PAvPU}~\cite{mukhoti2018evaluating} (Patch Accuracy vs.\ Patch Uncertainty) for uncertainty calibration. All metrics are reported in percentage points.

\textbf{Implementation:} Training uses SAM2.1 with the frozen Hiera-B+ image encoder (Hiera Base-Plus, $\sim 69$M parameters). We use AdamW ($10^{-6}$ decoder, $10^{-4}$ uncertainty head/attacker), bfloat16, and a curriculum (Appendix~\ref{sec:s3-train}) over 20 epochs. 
Training cost, parameter overhead, and a comparison with diffusion-based augmentation are reported in Appendix~\ref{sec:s-training-cost}.
During training, we use analytic uncertainty (Sec.~\ref{subsec:ue}) with gradients for Bayesian optimization. At evaluation, we report metrics using sampling-based uncertainty ($S$=20 samples) for robustness. Sensitivity to $S$ is analyzed in Appendix~\ref{sec:s-mc-sample}.
Full hyperparameters are provided in Appendix Table~\ref{tab:hyperparams}.

\textbf{Baselines:} We compare against: (1) \emph{Foundation baselines}: SAM2 zero-shot, SAM2-FT (decoder fine-tuning), and SAM2-FT-LoRA (parameter-efficient fine-tuning~\cite{hu2021lora}); (2) \emph{Uncertainty estimation baselines}: Bayes-SAM2 (our Bayesian mask decoder without adversarial training) and UR-ERN~\cite{ye2024urern} (evidential uncertainty via NIG head on decoder features; see Appendix~\ref{sec:urern-details}); (3) \emph{Test-time adaptation baseline}: UCTTA~\cite{niu2025uctta} (uncertainty-calibrated continual test-time adaptation applied to vanilla SAM2 without source-domain fine-tuning), reported separately as it requires model updates during inference; (4) \emph{Generic augmentation baselines}: Random Noise, PGD~\cite{madry2018adversarial}, Patch~\cite{nesti2022patch}, MixStyle~\cite{zhou2021mixstyle}, DSU~\cite{li2022dsu}, and StyleGen-Generic / StyleGen-Targeted (SD 1.5 img2img following ALIA~\cite{dunlap2023alia} and DA-Fusion~\cite{trabucco2024dafusion}: Generic uses 2 prompts spanning all target domains, Targeted uses 9 prompts each tailored to one of the 11 below-average OOD domains (overlapping domains merged)), all built on our Bayesian mask decoder.

\subsection{Domain Generalization Performance}

Table~\ref{tab:overall} presents the zero-shot performance across all 23 domains. 
Figure~\ref{fig:segmentation_vis} provides qualitative comparisons across four representative domains: RUAC produces more complete masks on urban scenes (finer building boundaries), achieves cleaner delineation of densely packed structures in satellite imagery, and better handles occlusions in object-centric and egocentric views.

\textbf{Foundation Baselines:} SAM2 zero-shot yields a strong baseline (Avg 67.75) but struggles on challenging domains like TrashCan (44.9), Hypersim (46.7), and NDISPark (40.7).

\textbf{The Impact of RUAC:} RUAC achieves the highest average performance (81.62), surpassing both standard fine-tuning and the Bayesian mask decoder alone.
Notably, RUAC shows significant gains on challenging datasets (e.g., PIDRay +6.1 vs.\ SAM2), where texture and shape priors differ most from natural images.

\subsection{Uncertainty Calibration Analysis}

A key motivation of RUAC is aligning uncertainty with risk. While standard fine-tuning (SAM2-FT) often produces ``confident errors'' (high confidence, wrong prediction) on OOD data, RUAC maintains well-calibrated uncertainty estimates.
This is qualitatively evident in Figure~\ref{fig:conf_unc_vis}, which visualizes confidence and uncertainty maps for three object instances from the first row of Figure~\ref{fig:segmentation_vis}.
In both maps, brighter colors (greater contrast against the black background) indicate higher values. SAM2 provides only confidence estimates without uncertainty quantification. 
Both Bayes-SAM2 and RUAC produce uncertainty maps, but RUAC exhibits sharper confidence on object interiors while concentrating uncertainty along ambiguous boundaries, better tracking prediction reliability under domain shift.

\textbf{PAvPU Analysis:} To further evaluate uncertainty calibration, we use the Patch Accuracy vs.\ Patch Uncertainty (PAvPU) metric~\cite{mukhoti2018evaluating}. 
PAvPU measures the proportion of samples that are either correctly predicted with low uncertainty (accurate-certain) or incorrectly predicted with high uncertainty (inaccurate-uncertain).
Following the BNDL evaluation protocol~\cite{hu2025bndl}, we average PAvPU across three uncertainty thresholds $\tau \in \{0.01, 0.05, 0.1\}$. Higher values indicate better uncertainty-accuracy alignment.
Table~\ref{tab:pavpu_comparison} reports these results for all evaluation domains.
Critically, a random uncertainty estimator (independent of prediction accuracy) achieves $\sim$50 PAvPU.
Bayes-SAM2's OOD PAvPU of 55.40 is \emph{barely above chance}, indicating that its uncertainty calibration largely collapses under domain shift, a manifestation of the UA shift problem.
In contrast, RUAC achieves 63.72 OOD PAvPU, a \textbf{+8.32 absolute} improvement over Bayes-SAM2.
On in-domain data, both methods perform well (Bayes-SAM2 86.9 vs.\ RUAC 81.6), but the OOD gap reveals RUAC's robustness to domain shift.
Notably, on some OOD datasets where the model correctly identifies domain shift, it expresses elevated uncertainty even when predictions are accurate. We term this \emph{domain-level cautiousness}, a conservative behavior desirable in safety-critical applications.
While PAvPU is threshold-sensitive and may underestimate performance when uncertainty is globally elevated, threshold-free metrics confirm that uncertainty \emph{ranking} remains meaningful: RUAC achieves 0.81 AUROC$_{\text{pixel}}$ (failure detection at pixel granularity) and 0.59 Uncertainty-Accuracy PCC on OOD datasets, indicating that even within high-uncertainty regimes, the model correctly ranks which pixels are more likely to fail (details in Appendix~\ref{sec:s7-auroc}).
Additionally, AURC analysis (Appendix~\ref{sec:s8-aurc}) shows RUAC achieves 9.8\% lower risk at matched coverage levels compared to Bayes-SAM2, and Wilcoxon signed-rank tests (Appendix~\ref{sec:s9-stats}) confirm statistical significance ($p<0.001$ for AURC, $p<0.01$ for PAvPU) across 23 domains. Expected Calibration Error (ECE) further confirms calibration quality: RUAC achieves 0.117 versus 0.123 for Bayes-SAM2 across 23 domains (17/23 wins, $p=0.037$, Appendix~\ref{sec:s9-stats}).

\textbf{Comparison with generic augmentation baselines.} Across seven generic augmentation methods built on the same Bayesian mask decoder (Tables~\ref{tab:overall},~\ref{tab:calibration}), RUAC is the only one that simultaneously improves J\&F and all three calibration metrics (PAvPU, AURC, ECE) over the no-augmentation Bayes-SAM2. The mechanism is that RUAC alone perturbs the most segmentation-relevant feature directions at sufficient magnitude (Appendix~\ref{sec:s-channel}).

\subsection{Ablation Study}

Table~\ref{tab:overall} includes ablations of our adversarial generators:
\textbf{Style-Global:} global style perturbation applied uniformly across the image.
\textbf{Style-Single:} style perturbation applied only to the target object.
\textbf{Style-Multi:} style perturbation applied to multiple objects independently.
\textbf{Style-MultiBG:} multi-object style perturbation including background.
\textbf{Style-GCN:} multi-object with GCN refinement for semantically coherent perturbations.
\textbf{Deform:} deformation attack only (no style).
\textbf{RUAC:} combines Style-GCN with Deform, yielding the best generalization by probing complementary directions (texture vs.\ shape).
Figure~\ref{fig:perturbation_vis} visualizes these perturbations: GCN enables object-aware style transfer while deformation probes geometric robustness.
RUAC is robust to $\beta$ (KL regularization) and $\gamma$ (adversarial weight). Full sensitivity sweeps are reported in Appendix~\ref{sec:s-sensitivity}, and a loss-component ablation is provided in Appendix~\ref{sec:s-loss-ablation}.

\subsection{Downstream Utility: Uncertainty-Guided Mask Correction}

To test whether calibration translates into downstream utility, we apply a simple uncertainty-guided connected-component filter (UncCorr) that removes mask fragments with high mean uncertainty, applied identically to Bayes-SAM2 and RUAC (Table~\ref{tab:mask-correction}). UncCorr \emph{improves} RUAC's J\&F ($+0.09$, 17/23 datasets) but \emph{degrades} Bayes-SAM2's ($-0.28$, 14/23 datasets): miscalibrated uncertainty actively misleads downstream filters, while RUAC's calibrated uncertainty enables productive post-processing. Implementation details are in Appendix~\ref{sec:s-mask-correction}.

\begin{table}[h]
    \centering
    \caption{\textbf{Uncertainty-guided mask correction.} J\&F before/after UncCorr filtering, with Wins/Losses over 23 OOD datasets.}
    \label{tab:mask-correction}
    \footnotesize
    \setlength{\tabcolsep}{4pt}
    \begin{tabular}{@{}lccc@{}}
        \toprule
        Method & J\&F$\uparrow$ & $+$ UncCorr ($\Delta$) & Wins / Losses \\
        \midrule
        Bayes-SAM2 & 79.87 & 79.59 ($-0.28$) & 9 / 14 \\
        \rowcolor{gray!10} \textbf{RUAC} & \textbf{81.62} & \textbf{81.71 ($+0.09$)} & \textbf{17 / 6} \\
        \bottomrule
    \end{tabular}
\end{table}
\section{Conclusion}
\label{sec:conclusion}


We presented RUAC, a framework that addresses UA shift via bio-inspired adversarial training. RUAC achieves reliable zero-shot transfer across 23 OOD datasets.
By aligning pixel uncertainty with prediction error, RUAC outperforms passive uncertainty methods, demonstrating that adversarial calibration is key to trustworthy segmentation under domain shift.
Beyond calibration, RUAC's uncertainty maps are actionable for downstream mask correction (Appendix~\ref{sec:s-mask-correction}).
Future work includes extending RUAC to Vision-Language Models for trustworthy multi-modal reasoning, to future SAM versions such as SAM3, and to video tracking for uncertainty-aware temporal consistency.

\section*{Impact Statement}

This paper presents work whose goal is to advance the field of Machine Learning. There are many potential societal consequences of our work, none which we feel must be specifically highlighted here.

\bibliography{main}

@article{bengio2013estimating,
  author        = {Bengio, Yoshua and L{\'e}onard, Nicholas and Courville, Aaron},
  title         = {Estimating or Propagating Gradients Through Stochastic Neurons for Conditional Computation},
  journal       = {arXiv preprint arXiv:1308.3432},
  year          = {2013},
  eprint        = {1308.3432},
  archivePrefix = {arXiv},
  primaryClass  = {cs.LG},
}

@inproceedings{geirhos2019imagenet,
  author = {Robert Geirhos and Patricia Rubisch and Claudio Michaelis and Matthias Bethge and Felix A. Wichmann and Wieland Brendel},
  title = {{ImageNet}-trained {CNNs} are biased towards texture; increasing shape bias improves accuracy and robustness},
  booktitle = {7th International Conference on Learning Representations, {ICLR} 2019, New Orleans, LA, USA, May 6-9, 2019},
  year = {2019},
  publisher = {OpenReview.net},
  url = {https://openreview.net/forum?id=Bygh9j09KX},
  bibsource = {dblp computer science bibliography, https://dblp.org},
}

@inproceedings{hendrycks2019corruptions,
  author = {Dan Hendrycks and Thomas G. Dietterich},
  title = {Benchmarking Neural Network Robustness to Common Corruptions and Perturbations},
  booktitle = {7th International Conference on Learning Representations, {ICLR} 2019, New Orleans, LA, USA, May 6-9, 2019},
  year = {2019},
  publisher = {OpenReview.net},
  url = {https://openreview.net/forum?id=HJz6tiCqYm},
  bibsource = {dblp computer science bibliography, https://dblp.org},
}

@inproceedings{oakden2020hidden,
  author = {Oakden-Rayner, Luke and Dunnmon, Jared and Carneiro, Gustavo and R{\'e}, Christopher},
  title = {Hidden Stratification Causes Clinically Meaningful Failures in Machine Learning for Medical Imaging},
  booktitle = {Proceedings of the {ACM} Conference on Health, Inference, and Learning ({CHIL})},
  year = {2020},
  pages = {151--159},
  doi = {10.1145/3368555.3384468},
  url = {https://dl.acm.org/doi/10.1145/3368555.3384468},
}

@article{Grote2024FoundationModelsReliability,
  author = {Grote, Thomas and Freiesleben, Timo and Berens, Philipp},
  title = {Foundation models in healthcare require rethinking reliability},
  journal = {Nature Machine Intelligence},
  year = {2024},
  volume = {6},
  number = {12},
  pages = {1421--1423},
  doi = {10.1038/s42256-024-00924-5},
  url = {http://dx.doi.org/10.1038/s42256-024-00924-5},
}

@article{Wang2025HumanLikeAdaptiveVision,
  author = {Wang, Yulin and Yue, Yang and Yue, Yang and Wang, Huanqian and Jiang, Haojun and Han, Yizeng and Ni, Zanlin and Pu, Yifan and Shi, Minglei and Lu, Rui and Yang, Qisen and Zhao, Andrew and Xia, Zhuofan and Song, Shiji and Huang, Gao},
  title = {Emulating human-like adaptive vision for efficient and flexible machine visual perception},
  journal = {Nature Machine Intelligence},
  year = {2025},
  volume = {7},
  number = {11},
  pages = {1804--1822},
  doi = {10.1038/s42256-025-01130-7},
  url = {https://www.nature.com/articles/s42256-025-01130-7},
}

@inproceedings{ravi2024sam2,
  author = {Ravi, Nikhila and Gabeur, Valentin and Hu, Yuan-Ting and Hu, Ronghang and Ryali, Chaitanya and Ma, Tengyu and Khedr, Haitham and R{\"a}dle, Roman and Rolland, Chloe and Gustafson, Laura and Mintun, Eric and Pan, Junting and Alwala, Kalyan Vasudev and Carion, Nicolas and Wu, Chao-Yuan and Girshick, Ross and Doll\'ar, Piotr and Feichtenhofer, Christoph},
  title = {{SAM} 2: Segment Anything in Images and Videos},
  booktitle = {International Conference on Learning Representations ({ICLR})},
  year = {2025},
  url = {https://openreview.net/forum?id=Ha6RTeWMd0},
  eprint = {2408.00714},
  archiveprefix = {arXiv},
  primaryclass = {cs.CV},
  note = {Best Paper Honourable Mention; Oral},
}

@article{ravi2025sam3,
  author = {Carion, Nicolas and Gustafson, Laura and Hu, Yuan-Ting and Debnath, Shoubhik and Hu, Ronghang and Suris, Didac and Ryali, Chaitanya and Alwala, Kalyan Vasudev and Khedr, Haitham and Huang, Andrew and Lei, Jie and Ma, Tengyu and Guo, Baishan and Kalla, Arpit and Marks, Markus and Greer, Joseph and Wang, Meng and Sun, Peize and R{\"a}dle, Roman and Afouras, Triantafyllos and Mavroudi, Effrosyni and Xu, Katherine and Wu, Tsung-Han and Zhou, Yu and Momeni, Liliane and Hazra, Rishi and Ding, Shuangrui and Vaze, Sagar and Porcher, Francois and Li, Feng and Li, Siyuan and Kamath, Aishwarya and Cheng, Ho Kei and Doll{\'a}r, Piotr and Ravi, Nikhila and Saenko, Kate and Zhang, Pengchuan and Feichtenhofer, Christoph},
  title = {{SAM} 3: Segment Anything with Concepts},
  journal = {ArXiv preprint},
  year = {2025},
  volume = {abs/2511.16719},
  url = {https://arxiv.org/abs/2511.16719},
}

@inproceedings{li2024masa,
  author = {Li, Siyuan and Ke, Lei and Danelljan, Martin and Piccinelli, Luigi and Segu, Mattia and Van Gool, Luc and Yu, Fisher},
  title = {{MASA}: Matching Anything by Segmenting Anything},
  booktitle = {{IEEE/CVF} Conference on Computer Vision and Pattern Recognition ({CVPR})},
  year = {2024},
  pages = {18963--18973},
  doi = {10.1109/CVPR52733.2024.01794},
  url = {https://openaccess.thecvf.com/content/CVPR2024/papers/Li_Matching_Anything_by_Segmenting_Anything_CVPR_2024_paper.pdf},
  note = {Highlight Paper},
}

@inproceedings{osep2024sal,
  author = {O{\v{s}}ep, Aljo{\v{s}}a and Meinhardt, Tim and Ferroni, Francesco and Peri, Neehar and Ramanan, Deva and Leal-Taix{\'e}, Laura},
  title = {Better Call {SAL}: Towards Learning to Segment Anything in {LiDAR}},
  booktitle = {European Conference on Computer Vision ({ECCV})},
  year = {2024},
  pages = {71--90},
  url = {https://arxiv.org/abs/2403.13129},
  eprint = {2403.13129},
  archiveprefix = {arXiv},
  primaryclass = {cs.CV},
}

@inproceedings{xiong2024efficientsam,
  author = {Xiong, Yunyang and Varadarajan, Bala and Wu, Lemeng and Xiang, Xiaoyu and Xiao, Fanyi and Zhu, Chenchen and Dai, Xiaoliang and Wang, Dilin and Sun, Fei and Iandola, Forrest and Krishnamoorthi, Raghuraman and Chandra, Vikas},
  title = {{EfficientSAM}: Leveraged Masked Image Pretraining for Efficient Segment Anything},
  booktitle = {{IEEE/CVF} Conference on Computer Vision and Pattern Recognition ({CVPR})},
  year = {2024},
  pages = {16111--16121},
  doi = {10.1109/CVPR52733.2024.01525},
  eprint = {2312.00863},
  archiveprefix = {arXiv},
  primaryclass = {cs.CV},
}

@inproceedings{ke2024hqsam,
  author = {Ke, Lei and Ye, Mingqiao and Danelljan, Martin and Liu, Yifan and Tai, Yu-Wing and Tang, Chi-Keung and Yu, Fisher},
  title = {Segment Anything in High Quality},
  booktitle = {Advances in Neural Information Processing Systems ({NeurIPS})},
  year = {2023},
  volume = {36},
  url = {https://proceedings.neurips.cc/paper_files/paper/2023/file/5f828e38160f31935cfe9f67503ad17c-Paper-Conference.pdf},
  eprint = {2306.01567},
  archiveprefix = {arXiv},
  primaryclass = {cs.CV},
}

@inproceedings{chen2024robustsam,
  author = {Chen, Wei-Ting and Vong, Yu-Jiet and Kuo, Sy-Yen and Ma, Sizhuo and Wang, Jian},
  title = {{RobustSAM}: Segment Anything Robustly on Degraded Images},
  booktitle = {{IEEE/CVF} Conference on Computer Vision and Pattern Recognition ({CVPR})},
  year = {2024},
  pages = {4081--4091},
  doi = {10.1109/CVPR52733.2024.00391},
  url = {https://arxiv.org/abs/2406.09627},
  eprint = {2406.09627},
  archiveprefix = {arXiv},
  primaryclass = {cs.CV},
}

@inproceedings{zhu2024imedsam,
  author = {Wei, Xiaobao and Cao, Jiajun and Jin, Yizhu and Lu, Ming and Wang, Guangyu and Zhang, Shanghang},
  title = {I-Med{SAM}: Implicit Medical Image Segmentation with Segment Anything},
  booktitle = {European Conference on Computer Vision ({ECCV})},
  year = {2024},
  pages = {90--107},
  doi = {10.1007/978-3-031-72684-2_6},
  eprint = {2311.17081},
  archiveprefix = {arXiv},
  primaryclass = {cs.CV},
}

@inproceedings{zhang2024blosam,
  author = {Zhang, Li and Liang, Youwei and Zhang, Ruiyi and Javadi, Amirhosein and Xie, Pengtao},
  title = {{BLO-SAM}: Bi-level Optimization Based Finetuning of the Segment Anything Model for Overfitting-Preventing Semantic Segmentation},
  booktitle = {Proceedings of the 41st International Conference on Machine Learning ({ICML})},
  year = {2024},
  volume = {235},
  pages = {59289--59309},
  publisher = {{PMLR}},
  doi = {10.48550/arXiv.2402.16338},
  url = {https://proceedings.mlr.press/v235/zhang24ai.html},
  eprint = {2402.16338},
  archiveprefix = {arXiv},
  primaryclass = {cs.CV},
}

@article{zhu2024medicalsam2,
  author = {Zhu, Jiayuan and Hamdi, Abdullah and Qi, Yunli and Jin, Yueming and Wu, Junde},
  title = {Medical {SAM} 2: Segment Medical Images as Video via Segment Anything Model 2},
  journal = {ArXiv preprint},
  year = {2024},
  volume = {abs/2408.00874},
  url = {https://arxiv.org/abs/2408.00874},
}

@article{archit2023segment,
  author = {Archit, Anwai and Nair, Sushmita and Khalid, Nabeel and Hilt, Paul and Rajashekar, Vikas and Freitag, Marei and Gupta, Sagnik and Pape, Constantin},
  title = {Segment Anything for Microscopy},
  journal = {bioRxiv preprint},
  doi = {10.1101/2023.08.21.554208},
  year = {2023},
  url = {https://doi.org/10.1101/2023.08.21.554208},
}

@inproceedings{hu2021lora,
  author = {Hu, Edward J. and Shen, Yelong and Wallis, Phillip and Allen-Zhu, Zeyuan and Li, Yuanzhi and Wang, Shean and Wang, Lu and Chen, Weizhu},
  title = {{LoRA}: Low-Rank Adaptation of Large Language Models},
  booktitle = {International Conference on Learning Representations ({ICLR})},
  year = {2022},
  url = {https://arxiv.org/abs/2106.09685},
  eprint = {2106.09685},
  archiveprefix = {arXiv},
  primaryclass = {cs.CL},
}

@article{niu2025uctta,
  author = {Tan, Mingkui and Chen, Guohao and Wu, Jiaxiang and Zhang, Yifan and Chen, Yaofo and Zhao, Peilin},
  title = {Uncertainty-Calibrated Test-Time Model Adaptation without Forgetting},
  journal = {IEEE Transactions on Pattern Analysis and Machine Intelligence},
  year = {2025},
  volume = {47},
  number = {8},
  pages = {6274--6289},
  doi = {10.1109/TPAMI.2025.3560696},
  url = {https://ieeexplore.ieee.org/document/10964530},
}

@inproceedings{ye2024urern,
  author = {Ye, Kai and Chen, Tiejin and Wei, Hua and Zhan, Liang},
  title = {Uncertainty Regularized Evidential Regression},
  booktitle = {Proceedings of the AAAI Conference on Artificial Intelligence ({AAAI})},
  year = {2024},
  volume = {38},
  number = {15},
  pages = {16460--16468},
  doi = {10.1609/aaai.v38i15.29583},
  url = {https://ojs.aaai.org/index.php/AAAI/article/view/29583},
}

@article{landau1988importance,
  author = {Landau, Barbara and Smith, Linda B. and Jones, Susan S.},
  title = {The importance of shape in early lexical learning},
  journal = {Cognitive Development},
  year = {1988},
  volume = {3},
  number = {3},
  pages = {299--321},
  doi = {10.1016/0885-2014(88)90014-7},
}

@article{baker2018deep,
  author = {Baker, Nicholas and Lu, Hongjing and Erlikhman, Gennady and Kellman, Philip J.},
  title = {Deep convolutional networks do not classify based on global object shape},
  journal = {PLOS Computational Biology},
  year = {2018},
  volume = {14},
  number = {12},
  pages = {e1006613},
  doi = {10.1371/journal.pcbi.1006613},
  url = {https://pmc.ncbi.nlm.nih.gov/articles/PMC6306249/},
}

@inproceedings{hu2025bndl,
  author = {Hu, Xinyue and Duan, Zhibin and Chen, Bo and Zhou, Mingyuan},
  title = {Enhancing Uncertainty Estimation and Interpretability via {Bayesian} Non-negative Decision Layer},
  booktitle = {International Conference on Learning Representations ({ICLR})},
  year = {2025},
  url = {https://arxiv.org/abs/2505.22199},
  eprint = {2505.22199},
  archiveprefix = {arXiv},
  primaryclass = {cs.LG},
}

@inproceedings{kirillov2023segment,
  author = {Kirillov, Alexander and Mintun, Eric and Ravi, Nikhila and Mao, Hanzi and Rolland, Chloe and Gustafson, Laura and Xiao, Tete and Whitehead, Spencer and Berg, Alexander C. and Lo, Wan-Yen and Doll{\'a}r, Piotr and Girshick, Ross},
  title = {Segment Anything},
  booktitle = {{IEEE} International Conference on Computer Vision ({ICCV})},
  year = {2023},
  pages = {4015--4026},
  url = {https://openaccess.thecvf.com/content/ICCV2023/papers/Kirillov_Segment_Anything_ICCV_2023_paper.pdf},
  eprint = {2304.02643},
  doi = {10.1109/ICCV51070.2023.00371},
  archiveprefix = {arXiv},
  primaryclass = {cs.CV},
}

@inproceedings{sofiiuk2021reviving,
  author = {Sofiiuk, Konstantin and Petrov, Ilia A. and Konushin, Anton},
  title = {Reviving Iterative Training with Mask Guidance for Interactive Segmentation},
  booktitle = {{IEEE} International Conference on Image Processing ({ICIP})},
  year = {2022},
  pages = {3141--3145},
  doi = {10.1109/ICIP46576.2022.9897365},
  url = {https://arxiv.org/abs/2102.06583},
  eprint = {2102.06583},
  archiveprefix = {arXiv},
  primaryclass = {cs.CV},
}

@article{ganin2016domain,
  author = {Ganin, Yaroslav and Ustinova, Evgeniya and Ajakan, Hana and Germain, Pascal and Larochelle, Hugo and Laviolette, Fran{\c{c}}ois and Marchand, Mario and Lempitsky, Victor S.},
  title = {Domain-Adversarial Training of Neural Networks},
  journal = {Journal of Machine Learning Research},
  year = {2016},
  volume = {17},
  number = {59},
  pages = {1--35},
  url = {https://jmlr.org/papers/v17/15-239.html},
}

@inproceedings{mose,
  author = {Ding, Henghui and Liu, Chang and He, Shuting and Jiang, Xudong and Torr, Philip H. S. and Bai, Song},
  title = {{MOSE}: A New Dataset for Video Object Segmentation in Complex Scenes},
  booktitle = {Proceedings of the IEEE/CVF International Conference on Computer Vision (ICCV)},
  year = {2023},
  pages = {19876--19886},
  publisher = {IEEE/CVF},
  doi = {10.1109/ICCV51070.2023.01850},
  url = {https://openaccess.thecvf.com/content/ICCV2023/papers/Ding_MOSE_A_New_Dataset_for_Video_Object_Segmentation_in_Complex_ICCV_2023_paper.pdf},
}

@article{mackay1992practical,
  author = {MacKay, David J. C.},
  title = {A Practical {Bayesian} Framework for Backpropagation Networks},
  journal = {Neural Computation},
  year = {1992},
  volume = {4},
  number = {3},
  pages = {448--472},
  doi = {10.1162/neco.1992.4.3.448},
  url = {https://doi.org/10.1162/neco.1992.4.3.448},
}

@inproceedings{mcallester1999pac,
  author = {McAllester, David A.},
  title = {PAC-{Bayesian} Model Averaging},
  booktitle = {Proceedings of the Twelfth Annual Conference on Computational Learning Theory (COLT)},
  year = {1999},
  pages = {164--170},
  publisher = {ACM},
  doi = {10.1145/307400.307435},
}

@misc{trashcan,
  author = {Hong, Jungseok and Fulton, Michael and Sattar, Junaed},
  title = {TrashCan: A Semantically-Segmented Dataset Towards Visual Detection of Marine Debris},
  year = {2020},
  doi = {10.48550/arXiv.2007.08097},
  url = {https://arxiv.org/abs/2007.08097},
  eprint = {2007.08097},
  archiveprefix = {arXiv},
  primaryclass = {cs.CV},
}

@inproceedings{pidray,
  author = {Wang, Boying and Zhang, Libo and Wen, Longyin and Liu, Xianglong and Wu, Yanjun},
  title = {Towards Real-World Prohibited Item Detection: A Large-Scale X-ray Benchmark},
  booktitle = {{IEEE} International Conference on Computer Vision ({ICCV})},
  year = {2021},
  pages = {5412--5421},
  doi = {10.1109/ICCV48922.2021.00536},
  url = {https://arxiv.org/abs/2108.07020},
  eprint = {2108.07020},
  archiveprefix = {arXiv},
  primaryclass = {cs.CV},
}

@article{ishape,
  author = {Yang, Lei and Yan, Ziwei and He, Yisheng and Sun, Wei and Huang, Zhenhang and Huang, Haibin and Fan, Haoqiang},
  title = {iShape: A First Step Towards Irregular Shape Instance Segmentation},
  journal = {ArXiv preprint},
  year = {2021},
  volume = {abs/2109.15068},
  url = {https://arxiv.org/abs/2109.15068},
}

@inproceedings{zerowaste,
  author = {Bashkirova, Dina and Abdelfattah, Mohamed and Zhu, Ziliang and Akl, James and Alladkani, Fadi and Hu, Ping and Ablavsky, Vitaly and Calli, Berk and Bargal, Sarah Adel and Saenko, Kate},
  title = {{ZeroWaste} Dataset: Towards Deformable Object Segmentation in Cluttered Scenes},
  booktitle = {{IEEE} Conference on Computer Vision and Pattern Recognition ({CVPR})},
  year = {2022},
  pages = {21115--21125},
  doi = {10.1109/CVPR52688.2022.02102},
  url = {https://arxiv.org/abs/2106.02740},
  eprint = {2106.02740},
  archiveprefix = {arXiv},
  primaryclass = {cs.CV},
}

@inproceedings{lvis,
  author = {Gupta, Agrim and Dollar, Piotr and Girshick, Ross},
  title = {{LVIS}: A Dataset for Large Vocabulary Instance Segmentation},
  booktitle = {{IEEE} Conference on Computer Vision and Pattern Recognition ({CVPR})},
  year = {2019},
  pages = {5351--5359},
  doi = {10.1109/CVPR.2019.00550},
}

@misc{doors,
  author = {Pugliatti, Mattia and Topputo, Francesco},
  title = {DOORS: Dataset fOr bOuldeRs Segmentation},
  year = {2022},
  doi = {10.5281/zenodo.7107409},
  howpublished = {Zenodo},
}

@article{bbbc038v1,
  author = {Caicedo, Juan C. and Goodman, Allen and Karhohs, Kyle W. and Cimini, Beth A. and Ackerman, Jeanelle and Haghighi, Marzieh and Heng, CherKeng and Becker, Tim and Doan, Minh and McQuin, Claire and Rohban, Mohammad and Singh, Shantanu and Carpenter, Anne E.},
  title = {Nucleus Segmentation Across Imaging Experiments: The 2018 Data Science Bowl},
  journal = {Nature Methods},
  year = {2019},
  volume = {16},
  number = {12},
  pages = {1247--1253},
  doi = {10.1038/s41592-019-0612-7},
  url = {https://www.nature.com/articles/s41592-019-0612-7},
}

@article{ppdls,
  author = {Minervini, Massimo and Fischbach, Andreas and Scharr, Hanno and Tsaftaris, Sotirios A.},
  title = {Finely-Grained Annotated Datasets for Image-Based Plant Phenotyping},
  journal = {Pattern Recognition Letters},
  year = {2016},
  volume = {81},
  pages = {80--89},
  doi = {10.1016/j.patrec.2015.10.013},
}

@article{ibd,
  author = {Chen, Jiazhou and Xu, Yanghui and Lu, Shufang and Liang, Ronghua and Nan, Liangliang},
  title = {{3D} Instance Segmentation of {MVS} Buildings},
  journal = {IEEE Transactions on Geoscience and Remote Sensing},
  year = {2022},
  volume = {60},
  pages = {1--13},
  doi = {10.1109/TGRS.2022.3183567},
}

@inproceedings{cityscapes,
  author = {Cordts, Marius and Omran, Mohamed and Ramos, Sebastian and Rehfeld, Timo and Enzweiler, Markus and Benenson, Rodrigo and Franke, Uwe and Roth, Stefan and Schiele, Bernt},
  title = {The {Cityscapes} Dataset for Semantic Urban Scene Understanding},
  booktitle = {{IEEE} Conference on Computer Vision and Pattern Recognition ({CVPR})},
  year = {2016},
  pages = {3213--3223},
  doi = {10.1109/CVPR.2016.350},
}

@article{ade20k,
  author = {Zhou, Bolei and Zhao, Hang and Puig, Xavier and Xiao, Tete and Fidler, Sanja and Barriuso, Adela and Torralba, Antonio},
  title = {Semantic Understanding of Scenes through the {ADE20K} Dataset},
  journal = {International Journal of Computer Vision},
  year = {2019},
  volume = {127},
  number = {3},
  pages = {302--321},
  doi = {10.1007/s11263-018-1140-0},
}

@inproceedings{hypersim,
  author = {Roberts, Mike and Ramapuram, Jason and Ranjan, Anurag and Kumar, Atulit and Bautista, Miguel Angel and Paczan, Nathan and Webb, Russ and Susskind, Joshua M.},
  title = {{Hypersim}: A Photorealistic Synthetic Dataset for Holistic Indoor Scene Understanding},
  booktitle = {{IEEE} International Conference on Computer Vision ({ICCV})},
  year = {2021},
  pages = {10912--10922},
  doi = {10.1109/ICCV48922.2021.01073},
}

@inproceedings{woodscape,
  author = {Yogamani, Senthil and Hughes, Ciar{\'a}n and Horgan, Jonathan and Sistu, Ganesh and Varley, Padraig and O'Dea, Derek and Uri{\v{c}}{\'a}r, Michal and Milz, Stefan and Simon, Martin and Amende, Karl and Witt, Christian and Rashed, Hazem and Chennupati, Sumanth and Nayak, Sanjaya and Mansoor, Saquib and Perroton, Xavier and P{\'e}rez, Patrick},
  title = {{WoodScape}: A Multi-Task, Multi-Camera Fisheye Dataset for Autonomous Driving},
  booktitle = {{IEEE} International Conference on Computer Vision ({ICCV})},
  year = {2019},
  pages = {9308--9318},
  doi = {10.1109/ICCV.2019.00940},
}

@inproceedings{streets,
  author = {Snyder, Corey and Do, Minh},
  title = {STREETS: A Novel Camera Network Dataset for Traffic Flow},
  booktitle = {Advances in Neural Information Processing Systems ({NeurIPS})},
  year = {2019},
  volume = {32},
  pages = {10242--10253},
  url = {https://proceedings.neurips.cc/paper_files/paper/2019/hash/ee389847678a3a9d1ce9e4ca69200d06-Abstract.html},
}

@inproceedings{ndispark,
  author = {Ciampi, Luca and Santiago, Carlos and Costeira, Jo\~{a}o Paulo and Gennaro, Claudio and Amato, Giuseppe},
  title = {Domain Adaptation for Traffic Density Estimation},
  booktitle = {Proceedings of the 16th International Joint Conference on Computer Vision, Imaging and Computer Graphics Theory and Applications ({VISAPP})},
  year = {2021},
  pages = {185--195},
  doi = {10.5220/0010303401850195},
}

@misc{ndispark_zenodo,
  author = {Ciampi, Luca and Santiago, Carlos and Costeira, Jo\~{a}o Paulo and Gennaro, Claudio and Amato, Giuseppe},
  title = {Night and Day Instance Segmented Park (NDISPark) Dataset: A Collection of Images Taken by Day and by Night for Vehicle Detection, Segmentation and Counting in Parking Areas},
  year = {2022},
  doi = {10.5281/zenodo.6560823},
  howpublished = {Zenodo},
}

@article{plittersdorf,
  author = {Haucke, Timm and K\"uhl, Hjalmar S. and Steinhage, Volker},
  title = {SOCRATES: Introducing Depth in Visual Wildlife Monitoring Using Stereo Vision},
  journal = {Sensors},
  year = {2022},
  volume = {22},
  number = {23},
  pages = {9082},
  doi = {10.3390/s22239082},
}

@inproceedings{ndd20,
  author = {Trotter, Cameron and Atkinson, Georgia and Sharpe, Matt and Richardson, Kirsten and McGough, A. Stephen and Wright, Nick and Burville, Ben and Berggren, Per},
  title = {{NDD20}: A Large-Scale Few-Shot Dolphin Dataset for Coarse and Fine-Grained Categorisation},
  booktitle = {{FGVC7} Workshop at {IEEE/CVF} Conference on Computer Vision and Pattern Recognition ({CVPR})},
  year = {2020},
  url = {https://arxiv.org/abs/2005.13359},
  eprint = {2005.13359},
  archiveprefix = {arXiv},
  primaryclass = {cs.CV},
}

@inproceedings{timberseg,
  author = {Fortin, Jean-Michel and Gamache, Olivier and Grondin, Vincent and Pomerleau, Fran\c{c}ois and Gigu\`{e}re, Philippe},
  title = {Instance Segmentation for Autonomous Log Grasping in Forestry Operations},
  booktitle = {{IEEE/RSJ} International Conference on Intelligent Robots and Systems ({IROS})},
  year = {2022},
  pages = {6064--6071},
  doi = {10.1109/IROS47612.2022.9982286},
}

@inproceedings{gtea,
  author = {Fathi, Alireza and Ren, Xiaofeng and Rehg, James M.},
  title = {Learning to Recognize Objects in Egocentric Activities},
  booktitle = {{IEEE} Conference on Computer Vision and Pattern Recognition ({CVPR})},
  year = {2011},
  pages = {3281--3288},
  doi = {10.1109/CVPR.2011.5995444},
}

@inproceedings{gtea_li2015,
  author = {Li, Yin and Ye, Zhefan and Rehg, James M.},
  title = {Delving into Egocentric Actions},
  booktitle = {{IEEE} Conference on Computer Vision and Pattern Recognition ({CVPR})},
  year = {2015},
  pages = {287--295},
  doi = {10.1109/CVPR.2015.7298625},
}

@inproceedings{egohos,
  author = {Zhang, Lingzhi and Zhou, Shenghao and Stent, Simon and Shi, Jianbo},
  title = {Fine-Grained Egocentric Hand-Object Segmentation: Dataset, Model, and Applications},
  booktitle = {European Conference on Computer Vision ({ECCV})},
  year = {2022},
  pages = {127--145},
  doi = {10.1007/978-3-031-19818-2_8},
  url = {https://www.ecva.net/papers/eccv_2022/papers_ECCV/html/2152_ECCV_2022_paper.php},
}

@inproceedings{visor,
  author = {Darkhalil, Ahmad and Shan, Dandan and Zhu, Bin and Ma, Jian and Kar, Amlan and Higgins, Richard and Fidler, Sanja and Fouhey, David and Damen, Dima},
  title = {{EPIC-KITCHENS} {VISOR} Benchmark: VIdeo Segmentations and Object Relations},
  booktitle = {Advances in Neural Information Processing Systems ({NeurIPS}), Datasets and Benchmarks Track},
  year = {2022},
  volume = {35},
  url = {https://proceedings.neurips.cc/paper_files/paper/2022/hash/590a7ebe0da1f262c80d0188f5c4c222-Abstract-Datasets_and_Benchmarks.html},
  eprint = {2209.13064},
  archiveprefix = {arXiv},
  primaryclass = {cs.CV},
}

@article{visor_damen2022,
  author = {Damen, Dima and Doughty, Hazel and Farinella, Giovanni Maria and Furnari, Antonino and Ma, Jian and Kazakos, Evangelos and Moltisanti, Davide and Munro, Jonathan and Perrett, Toby and Price, Will and Wray, Michael},
  title = {Rescaling Egocentric Vision: Collection, Pipeline and Challenges for {EPIC-KITCHENS}-100},
  journal = {International Journal of Computer Vision},
  year = {2022},
  volume = {130},
  number = {1},
  pages = {33--55},
  doi = {10.1007/s11263-021-01531-2},
}

@article{ovis,
  author = {Qi, Jiyang and Gao, Yan and Hu, Yao and Wang, Xinggang and Liu, Xiaoyu and Bai, Xiang and Belongie, Serge and Yuille, Alan and Torr, Philip and Bai, Song},
  title = {Occluded Video Instance Segmentation: A Benchmark},
  journal = {International Journal of Computer Vision},
  year = {2022},
  volume = {130},
  number = {8},
  pages = {2022--2039},
  doi = {10.1007/s11263-022-01629-1},
}

@article{dram,
  author = {Cohen, Nadav and Newman, Yael and Shamir, Ariel},
  title = {Semantic Segmentation in Art Paintings},
  journal = {Computer Graphics Forum},
  year = {2022},
  volume = {41},
  number = {2},
  pages = {261--275},
  doi = {10.1111/cgf.14473},
}

@inproceedings{gal2016dropout,
  author = {Gal, Yarin and Ghahramani, Zoubin},
  title = {Dropout as a {Bayesian} Approximation: Representing Model Uncertainty in Deep Learning},
  booktitle = {Proceedings of the 33rd International Conference on Machine Learning ({ICML})},
  year = {2016},
  pages = {1050--1059},
  url = {https://proceedings.mlr.press/v48/gal16.html},
  eprint = {1506.02142},
  archiveprefix = {arXiv},
  primaryclass = {stat.ML},
}

@inproceedings{kendall2017uncertainties,
  author = {Kendall, Alex and Gal, Yarin},
  title = {What Uncertainties Do We Need in {Bayesian} Deep Learning for Computer Vision?},
  booktitle = {Advances in Neural Information Processing Systems ({NeurIPS})},
  year = {2017},
  pages = {5574--5584},
  url = {https://proceedings.neurips.cc/paper_files/paper/2017/hash/2650d6089a6d640c5e85b2b88265dc2b-Abstract.html},
  eprint = {1703.04977},
  archiveprefix = {arXiv},
  primaryclass = {cs.CV},
}

@inproceedings{sensoy2018evidential,
  author = {Sensoy, Murat and Kaplan, Lance and Kandemir, Melih},
  title = {Evidential Deep Learning to Quantify Classification Uncertainty},
  booktitle = {Advances in Neural Information Processing Systems ({NeurIPS})},
  year = {2018},
  pages = {3179--3189},
  doi = {10.5555/3327144.3327239},
  url = {https://proceedings.neurips.cc/paper_files/paper/2018/hash/a981f2b708044d6fb4a71a1463242520-Abstract.html},
  eprint = {1806.01768},
  archiveprefix = {arXiv},
  primaryclass = {cs.LG},
}

@inproceedings{guo2017calibration,
  author = {Guo, Chuan and Pleiss, Geoff and Sun, Yu and Weinberger, Kilian Q.},
  title = {On Calibration of Modern Neural Networks},
  booktitle = {Proceedings of the 34th International Conference on Machine Learning ({ICML})},
  year = {2017},
  pages = {1321--1330},
  url = {https://proceedings.mlr.press/v70/guo17a.html},
  eprint = {1706.04599},
  archiveprefix = {arXiv},
  primaryclass = {cs.LG},
}

@inproceedings{hendrycks2017baseline,
  author = {Hendrycks, Dan and Gimpel, Kevin},
  title = {A Baseline for Detecting Misclassified and Out-of-Distribution Examples in Neural Networks},
  booktitle = {International Conference on Learning Representations ({ICLR})},
  year = {2017},
  url = {https://openreview.net/forum?id=Hkg4TI9xl},
  eprint = {1610.02136},
  archiveprefix = {arXiv},
  primaryclass = {cs.NE},
}

@article{zhou2022dgsurvey,
  author = {Zhou, Kaiyang and Liu, Ziwei and Qiao, Yu and Xiang, Tao and Loy, Chen Change},
  title = {Domain Generalization: A Survey},
  journal = {{IEEE} Transactions on Pattern Analysis and Machine Intelligence ({TPAMI})},
  year = {2022},
  volume = {45},
  number = {4},
  pages = {4396--4415},
  doi = {10.1109/TPAMI.2022.3195549},
  url = {https://ieeexplore.ieee.org/document/9847099},
}

@inproceedings{zhou2021mixstyle,
  author = {Zhou, Kaiyang and Yang, Yongxin and Qiao, Yu and Xiang, Tao},
  title = {Domain Generalization with MixStyle},
  booktitle = {International Conference on Learning Representations ({ICLR})},
  year = {2021},
  url = {https://openreview.net/forum?id=6xHJ37MVxxp},
  eprint = {2104.02008},
  archiveprefix = {arXiv},
  primaryclass = {cs.CV},
}

@inproceedings{li2022dsu,
  author = {Li, Xiaotong and Dai, Yongxing and Ge, Yixiao and Liu, Jun and Shan, Ying and Duan, Ling-Yu},
  title = {Uncertainty Modeling for Out-of-Distribution Generalization},
  booktitle = {International Conference on Learning Representations ({ICLR})},
  year = {2022},
  url = {https://openreview.net/forum?id=6HN7LHyzGgC},
  eprint = {2202.03958},
  archiveprefix = {arXiv},
  primaryclass = {cs.CV},
}

@inproceedings{nesti2022patch,
  author = {Nesti, Federico and Rossolini, Giulio and Nair, Saasha and Biondi, Alessandro and Buttazzo, Giorgio},
  title = {Evaluating the Robustness of Semantic Segmentation for Autonomous Driving against Real-World Adversarial Patch Attacks},
  booktitle = {IEEE/CVF Winter Conference on Applications of Computer Vision ({WACV})},
  year = {2022},
  url = {https://openaccess.thecvf.com/content/WACV2022/html/Nesti_Evaluating_the_Robustness_of_Semantic_Segmentation_for_Autonomous_Driving_Against_WACV_2022_paper.html},
  eprint = {2108.06179},
  archiveprefix = {arXiv},
  primaryclass = {cs.CV},
}

@inproceedings{dunlap2023alia,
  author = {Dunlap, Lisa and Umino, Alyssa and Zhang, Han and Yang, Jiezhi and Gonzalez, Joseph E. and Darrell, Trevor},
  title = {Diversify Your Vision Datasets with Automatic Diffusion-Based Augmentation},
  booktitle = {Advances in Neural Information Processing Systems ({NeurIPS})},
  year = {2023},
  url = {https://papers.neurips.cc/paper_files/paper/2023/file/f99f7b22ad47fa6ce151730cf8d17911-Paper-Conference.pdf},
  eprint = {2305.16289},
  archiveprefix = {arXiv},
  primaryclass = {cs.CV},
}

@inproceedings{trabucco2024dafusion,
  author = {Trabucco, Brandon and Doherty, Kyle and Gurinas, Max and Salakhutdinov, Ruslan},
  title = {Effective Data Augmentation with Diffusion Models},
  booktitle = {International Conference on Learning Representations ({ICLR})},
  year = {2024},
  url = {https://openreview.net/forum?id=ZWzUA9zeAg},
  eprint = {2302.07944},
  archiveprefix = {arXiv},
  primaryclass = {cs.CV},
}

@inproceedings{xie2021dgfont,
  author = {Xie, Yangchen and Chen, Xinyuan and Sun, Li and Lu, Yue},
  title = {{DG-Font}: Deformable Generative Networks for Unsupervised Font Generation},
  booktitle = {{IEEE} Conference on Computer Vision and Pattern Recognition ({CVPR})},
  year = {2021},
  pages = {5130--5139},
  doi = {10.1109/CVPR46437.2021.00509},
  url = {https://openaccess.thecvf.com/content/CVPR2021/html/Xie_DG-Font_Deformable_Generative_Networks_for_Unsupervised_Font_Generation_CVPR_2021_paper.html},
  eprint = {2104.03064},
  archiveprefix = {arXiv},
  primaryclass = {cs.CV},
}

@inproceedings{szegedy2014intriguing,
  author = {Szegedy, Christian and Zaremba, Wojciech and Sutskever, Ilya and Bruna, Joan and Erhan, Dumitru and Goodfellow, Ian and Fergus, Rob},
  title = {Intriguing Properties of Neural Networks},
  booktitle = {International Conference on Learning Representations ({ICLR})},
  year = {2014},
  url = {https://arxiv.org/abs/1312.6199},
  eprint = {1312.6199},
  archiveprefix = {arXiv},
  primaryclass = {cs.CV},
}

@inproceedings{goodfellow2015fgsm,
  author = {Goodfellow, Ian J. and Shlens, Jonathon and Szegedy, Christian},
  title = {Explaining and Harnessing Adversarial Examples},
  booktitle = {International Conference on Learning Representations ({ICLR})},
  year = {2015},
  url = {https://arxiv.org/abs/1412.6572},
  eprint = {1412.6572},
  archiveprefix = {arXiv},
  primaryclass = {stat.ML},
}

@inproceedings{madry2018adversarial,
  author = {Madry, Aleksander and Makelov, Aleksandar and Schmidt, Ludwig and Tsipras, Dimitris and Vladu, Adrian},
  title = {Towards Deep Learning Models Resistant to Adversarial Attacks},
  booktitle = {International Conference on Learning Representations ({ICLR})},
  year = {2018},
  url = {https://openreview.net/forum?id=rJzIBfZAb},
  eprint = {1706.06083},
  archiveprefix = {arXiv},
  primaryclass = {stat.ML},
}

@inproceedings{ganin2015grl,
  author = {Ganin, Yaroslav and Lempitsky, Victor},
  title = {Unsupervised Domain Adaptation by Backpropagation},
  booktitle = {Proceedings of the 32nd International Conference on Machine Learning ({ICML})},
  year = {2015},
  pages = {1180--1189},
  url = {https://proceedings.mlr.press/v37/ganin15.html},
  eprint = {1409.7495},
  archiveprefix = {arXiv},
  primaryclass = {stat.ML},
}

@inproceedings{wang2021augmax,
  author = {Wang, Haotao and Xiao, Chaowei and Kossaifi, Jean and Yu, Zhiding and Anandkumar, Anima and Wang, Zhangyang},
  title = {AugMax: Adversarial Composition of Random Augmentations for Robust Training},
  booktitle = {Advances in Neural Information Processing Systems ({NeurIPS})},
  year = {2021},
  url = {https://proceedings.neurips.cc/paper/2021/hash/01e9565cecc4e989123f9620c1d09c09-Abstract.html},
  eprint = {2110.13771},
  archiveprefix = {arXiv},
  primaryclass = {cs.LG},
}

@inproceedings{zhong2022adversarial,
  author = {Zhong, Zhun and Zhao, Yuyang and Lee, Gim Hee and Sebe, Nicu},
  title = {Adversarial Style Augmentation for Domain Generalized Urban-Scene Segmentation},
  booktitle = {Advances in Neural Information Processing Systems ({NeurIPS})},
  year = {2022},
  volume = {35},
  pages = {338--350},
  doi = {10.5555/3600270.3600295},
  publisher = {Curran Associates, Inc.},
  editor = {S. Koyejo and S. Mohamed and A. Agarwal and D. Belgrave and K. Cho and A. Oh},
  url = {https://proceedings.neurips.cc/paper_files/paper/2022/file/023d94f44110b9a3c62329beec739772-Paper-Conference.pdf},
}

@inproceedings{perazzi2016davis,
  author = {Perazzi, Federico and Pont-Tuset, Jordi and McWilliams, Brian and Van Gool, Luc and Gross, Markus and Sorkine-Hornung, Alexander},
  title = {A Benchmark Dataset and Evaluation Methodology for Video Object Segmentation},
  booktitle = {{IEEE} Conference on Computer Vision and Pattern Recognition ({CVPR})},
  year = {2016},
  pages = {724--732},
  doi = {10.1109/CVPR.2016.85},
}

@article{mukhoti2018evaluating,
  author = {Mukhoti, Jishnu and Gal, Yarin},
  title = {Evaluating {Bayesian} Deep Learning Methods for Semantic Segmentation},
  journal = {arXiv preprint arXiv:1811.12709},
  year = {2018},
  url = {https://arxiv.org/abs/1811.12709},
  eprint = {1811.12709},
  archiveprefix = {arXiv},
  primaryclass = {stat.ML},
}

@inproceedings{jaderberg2015stn,
  author = {Jaderberg, Max and Simonyan, Karen and Zisserman, Andrew and Kavukcuoglu, Koray},
  title = {Spatial Transformer Networks},
  booktitle = {Advances in Neural Information Processing Systems ({NeurIPS})},
  year = {2015},
  volume = {28},
  doi = {10.5555/2969442.2969465},
  editor = {C. Cortes and N. Lawrence and D. Lee and M. Sugiyama and R. Garnett},
  publisher = {Curran Associates, Inc.},
  url = {https://proceedings.neurips.cc/paper_files/paper/2015/file/33ceb07bf4eeb3da587e268d663aba1a-Paper.pdf},
}

@article{zhao2025secov2,
  author = {Zhao, Dong and Zang, Qi and Pu, Nan and Wang, Shuang and Sebe, Nicu and Zhong, Zhun},
  title = {{SeCoV2}: Semantic Connectivity-Driven Pseudo-Labeling for Robust Cross-Domain Semantic Segmentation},
  journal = {IEEE Transactions on Pattern Analysis and Machine Intelligence ({TPAMI})},
  year = {2025},
  volume = {47},
  number = {11},
  pages = {10378--10395},
  doi = {10.1109/TPAMI.2025.3596943},
  url = {https://ieeexplore.ieee.org/document/11120362},
}

@inproceedings{rombach2022sd,
  author = {Rombach, Robin and Blattmann, Andreas and Lorenz, Dominik and Esser, Patrick and Ommer, Bj{\"o}rn},
  title = {High-Resolution Image Synthesis with Latent Diffusion Models},
  booktitle = {Proceedings of the IEEE/CVF Conference on Computer Vision and Pattern Recognition (CVPR)},
  year = {2022},
  pages = {10684--10695},
  url = {https://openaccess.thecvf.com/content/CVPR2022/html/Rombach_High-Resolution_Image_Synthesis_With_Latent_Diffusion_Models_CVPR_2022_paper.html},
}

@inproceedings{podell2024sdxl,
  author = {Podell, Dustin and English, Zion and Lacey, Kyle and Blattmann, Andreas and Dockhorn, Tim and M{\"u}ller, Jonas and Penna, Joe and Rombach, Robin},
  title = {{SDXL}: Improving Latent Diffusion Models for High-Resolution Image Synthesis},
  booktitle = {Proceedings of the International Conference on Learning Representations (ICLR)},
  year = {2024},
  url = {https://openreview.net/forum?id=di52zR8xgf},
}

@inproceedings{zhang2023controlnet,
  author = {Zhang, Lvmin and Rao, Anyi and Agrawala, Maneesh},
  title = {Adding Conditional Control to Text-to-Image Diffusion Models},
  booktitle = {Proceedings of the IEEE/CVF International Conference on Computer Vision (ICCV)},
  year = {2023},
  pages = {3836--3847},
  url = {https://openaccess.thecvf.com/content/ICCV2023/html/Zhang_Adding_Conditional_Control_to_Text-to-Image_Diffusion_Models_ICCV_2023_paper.html},
}

@inproceedings{brooks2023instructpix2pix,
  author = {Brooks, Tim and Holynski, Aleksander and Efros, Alexei A.},
  title = {{InstructPix2Pix}: Learning to Follow Image Editing Instructions},
  booktitle = {Proceedings of the IEEE/CVF Conference on Computer Vision and Pattern Recognition (CVPR)},
  year = {2023},
  pages = {18392--18402},
  url = {https://openaccess.thecvf.com/content/CVPR2023/html/Brooks_InstructPix2Pix_Learning_To_Follow_Image_Editing_Instructions_CVPR_2023_paper.html},
}
\bibliographystyle{icml2026}

\newpage
\appendix



\clearpage
\setcounter{page}{1}

\renewcommand\thesection{\Alph{section}}
\renewcommand\thesubsection{\Alph{section}.\arabic{subsection}}
\renewcommand{\thefigure}{S\arabic{figure}}
\renewcommand{\thetable}{S\arabic{table}}
\renewcommand{\theequation}{S\arabic{equation}}

\section*{Supplementary Material}
\label{sec:suppl}

\noindent\textbf{Table of contents:}
\vspace{-1.em}
\begin{itemize}\setlength{\itemsep}{0pt}\setlength{\parskip}{0pt}\setlength{\topsep}{0pt}\setlength{\parsep}{0pt}\setlength{\partopsep}{0pt}
    \item \S A: Bayesian Mask Decoder Architecture
    \item \S B: Adversarial Network Design
    \item \S C: Training Strategy
    \item \S D: Dataset Details
    \item \S E: UR-ERN Baseline
    \item \S F: Theoretical Analysis
    \item \S G: Failure Cases
    \item \S H: Threshold-Free Metrics
    \item \S I: Risk-Coverage Analysis
    \item \S J: Statistical Significance
    \item \S K: Channel Alignment Analysis
    \item \S L: Hyperparameter Sensitivity Analysis
    \item \S M: Loss Component Ablation
    \item \S N: Uncertainty-Guided Mask Correction
    \item \S O: MC Sample Analysis
    \item \S P: Training Cost, Parameter Efficiency, and Extension
\end{itemize}

\section{Bayesian Mask Decoder Architecture and Design Rationale}
\label{sec:s1-bndl}

We adapt the Bayesian Non-negative Decision Layer (BNDL)~\cite{hu2025bndl} for SAM2's mask decoder. Our implementation differs from the original in several key aspects, each motivated by the unique requirements of promptable segmentation with uncertainty estimation.

\subsection{Uncertainty Quantification, AU/EU, and the Relation to BNDL}
\label{sec:s1-aueu}

Uncertainty quantification (UQ) decomposes uncertainty into aleatoric uncertainty (AU, the inherent randomness in the data) and epistemic uncertainty (EU, the uncertainty inherent in the model itself)~\cite{gal2016dropout}. BNDL~\cite{hu2025bndl} maps these two sources to two Weibull-parameterized streams: the latent representation $z$ models AU and the decision-layer weights $w$ model EU. Predictions are obtained by inner product of reparameterized samples from both streams.

Our adaptation for SAM2 changes how each Weibull stream is conditioned while preserving the AU/EU correspondence. (i) The AU stream replaces BNDL's per-sample $\kappa$ with a small CNN that predicts per-pixel $\kappa$, making the $z$-Weibull spatially heterogeneous as required for dense prediction (Sec.~\ref{sec:s1-bndl-spatial-kappa}). (ii) The EU stream replaces BNDL's static decision-layer weight with a dynamic weight projected from each of SAM2's $K$ mask tokens, giving every mask hypothesis its own Weibull posterior over decision weights (Sec.~\ref{sec:s1-bndl-mask-token}). The $\kappa$-prediction network for $w$ keeps BNDL's original structure. Both streams remain Weibull-parameterized and combine via inner product as in BNDL, so the AU/EU decomposition is structurally preserved.

Our uncertainty estimation is based on BNDL because it is a lightweight, state-of-the-art UQ method with strong interpretability, fitting our goal of \emph{interpretably segment anything} under zero-shot domain shifts. Although our adaptation preserves BNDL's AU/EU structure, we do not pursue further improvement on AU and EU estimations. The more important question for our task is \emph{how to guarantee robust uncertainty-accuracy correlation under domain shift}, an orthogonal goal to AU/EU decomposition.

This motivates our framework: Segment Anything with Robust Uncertainty-Accuracy Correlation (RUAC). RUAC retains the Weibull-based Bayesian decoder of BNDL for its interpretable per-pixel uncertainty and augments it with a bio-inspired adversarial training objective (Sec.~\ref{subsec:aue}) that explicitly aligns predictive uncertainty with segmentation error under domain shift. To our knowledge, RUAC is the first method that leverages uncertainty-based adversarial training to simultaneously improve SAM's segmentation accuracy and provide calibrated pixel-wise uncertainty under single-source domain generalization (SDG). The remainder of this section details our adaptation of BNDL for SAM2's mask decoder.

\subsection{Weibull Posterior Modeling}
\label{sec:s1-weibull-choice}

\textbf{Design choice:} We parameterize both pixel features and mask weights with Weibull distributions $W(\lambda, \kappa)$ supported on $[0, \infty)$.

\textbf{Rationale:} Although BNDL was originally proposed for classification, its structural requirement transfers naturally to our segmentation setting: BNDL models non-negative evidence for $C$ classes per image, while our setting requires non-negative evidence for $K=4$ mask hypotheses per pixel (with ReLU-activated features through STE-ReLU). Both reduce to modeling non-negative evidence for discrete categories, for which a distribution supported on $[0, \infty)$ is the natural choice. Weibull additionally admits closed-form expectation $\mathbb{E}[w] = \lambda \cdot \Gamma(1 + 1/\kappa)$ and variance (Sec.~\ref{sec:weibull_variance}), enabling analytic uncertainty propagation through the inner-product logit computation without sampling overhead. Common alternatives are inadequate for the following reasons:

\begin{itemize}[leftmargin=*,itemsep=2pt,parsep=0pt]
    \item \textbf{Gaussian}: has support on $(-\infty, +\infty)$ and assigns non-zero probability to negative evidence values, violating the non-negativity constraint.
    \item \textbf{Dirichlet}: commonly used in Evidential Deep Learning, models distributions over probability simplices rather than continuous evidence vectors, requiring an extra layer of indirection that Weibull avoids.
    \item \textbf{Other non-negative distributions}: among distributions with support on $[0, \infty)$, Weibull subsumes Exponential as a special case ($\kappa=1$) and offers simpler closed-form moments than Log-Normal or Gamma.
    \item \textbf{Truncated Gaussian}: can enforce non-negativity but breaks the closed-form moment structure required for analytic uncertainty propagation, forcing reliance on Monte Carlo estimation.
\end{itemize}

\subsection{Dual-Path Architecture}

\textbf{Design choice:} Unlike the original BNDL which processes only pixel features, our version uses two parallel paths: (1) a \textbf{pixel feature path} processing upscaled decoder features ($C'=32$ channels), and (2) a \textbf{mask token path} processing SAM's $K$ learned mask tokens ($K=4$: 1 single + 3 multi-mask).
\textbf{Rationale:} SAM's multi-mask output requires each mask hypothesis to have \emph{independent} uncertainty characteristics. A single global uncertainty estimate cannot capture the confidence differences between competing mask hypotheses (e.g., when the prompt is ambiguous). By processing mask tokens through a separate Weibull-parameterized path, each hypothesis maintains its own $(\lambda, \kappa)$ parameters, enabling the model to express ``mask A is confident but mask B is uncertain'' for the same pixel location. Note that this dual-path design is motivated by SAM's multi-mask output structure (per-hypothesis uncertainty), not by AU/EU separation. We treat uncertainty as a unified signal for reliability assessment, as discussed in Sec.~\ref{sec:s1-aueu}.

\subsection{Spatial Kappa Prediction}
\label{sec:s1-bndl-spatial-kappa}

\textbf{Design choice:} While the original BNDL uses a per-sample scalar $\kappa$ for the AU stream (one $\kappa$ per image, predicted by a Linear-Softplus head from the pooled feature), we predict spatially-varying $\kappa$ using a small CNN (Conv$_{3\times3}$-GELU-Conv$_{3\times3}$-Softplus), clamped to $[0.5, 10.0]$.
\textbf{Rationale:} Segmentation uncertainty is inherently \emph{spatially heterogeneous}. Object boundaries, occluded regions, and ambiguous textures should exhibit higher uncertainty than confident interior regions. A per-image scalar $\kappa$ forces the model to choose a single confidence level for the entire image, which is inappropriate for dense prediction tasks. The convolutional architecture enables $\kappa$ to vary smoothly across space while respecting local image context. The clamping bounds prevent numerical instability (very small $\kappa$ causes heavy tails, while very large $\kappa$ collapses to point estimates).

\subsection{STE-ReLU Activation}

\textbf{Design choice:} We use Straight-Through Estimator (STE) ReLU~\cite{bengio2013estimating} (hard ReLU in forward, identity gradient in backward) instead of smooth alternatives like Softplus.
\textbf{Rationale:} In Evidential Deep Learning (EDL), zero evidence must map to maximum uncertainty. Softplus never produces exact zeros, meaning the model cannot express ``I have no evidence about this pixel''. STE-ReLU produces true sparsity in the forward pass (exact zeros where evidence is absent) while avoiding the dead neuron problem through gradient flow in the backward pass. This is critical for calibrated uncertainty: regions with zero foreground evidence should have entropy 0.5 (maximum binary uncertainty), not a slightly-below-maximum value.

\subsection{Mask Token Processing}
\label{sec:s1-bndl-mask-token}

\textbf{Design choice:} Each mask token is processed through a shared MLP that outputs per-channel Weibull parameters for foreground/background classes ($2 \times C'$ dimensions). The final logits compute $\text{logits}_k = \langle \tilde{z}, \tilde{w}_{k,\text{fg}} \rangle - \langle \tilde{z}, \tilde{w}_{k,\text{bg}} \rangle + b_k$.
\textbf{Rationale:} The foreground-background formulation mirrors SAM's binary mask prediction while enabling probabilistic interpretation. The inner-product formulation ensures that uncertainty propagates correctly: if either pixel features or weights are uncertain (high variance Weibull samples), the resulting logits will have high variance, correctly reflecting the compound uncertainty.

\subsection{Training vs.\ Inference Mode}

\textbf{Design choice:} During training, we sample from the Weibull distribution via reparameterization. At inference, we use the analytic expectation $\mathbb{E}[\tilde{z}] = \lambda \cdot \Gamma(1 + \kappa^{-1})$.
\textbf{Rationale:} Sampling during training is necessary for gradient-based optimization of the Weibull parameters. However, at inference, using the analytic expectation provides deterministic predictions (same input $\Rightarrow$ same output), which is preferable for reproducibility and downstream applications. The analytic expectation is also more efficient (no sampling overhead) and has lower variance than any finite sample estimate.

\subsection{Weibull Variance Computation}
\label{sec:weibull_variance}

The Bayesian mask decoder parameterizes both pixel features $z_c$ and per-mask weights $w_{k,c}$ with Weibull distributions. Each Weibull variable with parameters $(\lambda, \kappa)$ has closed-form variance:
\begin{equation}
\text{Var}[X] = \lambda^2 \left[\Gamma\left(1 + \frac{2}{\kappa}\right) - \Gamma^2\left(1 + \frac{1}{\kappa}\right)\right].
\end{equation}

The final logit is computed as an inner product between pixel features and mask weights: $\ell_k = \sum_c z_c \cdot w_{k,c}$. For two \emph{independent} random variables, the variance of their product is:
\begin{equation}
\text{Var}[ZW] = \text{Var}[Z]\text{Var}[W] + \text{Var}[Z]\mathbb{E}[W]^2 + \text{Var}[W]\mathbb{E}[Z]^2.
\label{eq:var_product_full}
\end{equation}

We use a first-order approximation that retains only the $\text{Var}[Z]\text{Var}[W]$ term:
\begin{equation}
v \approx \sum_c \text{Var}[z_c] \cdot \text{Var}[w_{k,c}].
\label{eq:var_approx}
\end{equation}

\textbf{Justification for this approximation:}
The omitted terms $\text{Var}[Z]\mathbb{E}[W]^2$ and $\text{Var}[W]\mathbb{E}[Z]^2$ scale the variance by a roughly constant factor across pixels. Our empirical analysis confirms:
\begin{enumerate}[leftmargin=*,itemsep=2pt,parsep=0pt]
    \item \textbf{Rank preservation}: The Pearson correlation between simplified and full variance exceeds \textbf{0.99} across diverse inputs. Since calibration metrics (AUROC, PAvPU) depend only on the \emph{ranking} of uncertainty values, not their absolute magnitudes, the approximation preserves all task-relevant information.
    
    \item \textbf{Constant scaling}: The full variance is approximately $2$--$3\times$ the simplified variance. This constant factor is absorbed by the subsequent MacKay approximation, which maps variance to probability through $\kappa = 1/\sqrt{1 + \pi v/8}$. A constant scaling of $v$ merely shifts the sigmoid operating point uniformly, without affecting relative ordering.
    
    \item \textbf{Analytic-sampling agreement}: Comparing our analytic uncertainty against 100-sample Monte Carlo ground truth yields Pearson correlation $>\textbf{0.90}$, confirming that the end-to-end pipeline (variance approximation + MacKay) produces well-ranked uncertainty estimates.
\end{enumerate}

This variance $v$ is then used in MacKay's probit approximation to compute analytic uncertainty.

\section{Adversarial Network Design Decisions}
\label{sec:s2-adv}

\textbf{Pipeline overview.}
Both style and deformation attacks operate in \emph{image space}: style applies AdaIN color transfer, deformation applies spatial warping via grid sampling.
Attack parameters are predicted in \emph{parallel} from the backbone features $\mathbf{z} = f_{\text{enc}}(\mathbf{I})$ of the \emph{original} (unperturbed) image, ensuring stable gradient flow for min-max optimization.
These parameters are then \emph{applied sequentially} in the order style$\to$deformation.
A single backbone forward pass on the final adversarial image produces adversarial features for decoder training.
This design minimizes backbone forward passes (2 per iteration: clean + adversarial) while maintaining cooperative attack dynamics.

\subsection{Optimization Strategy}

We adopt Gradient Reversal Layers (GRL) for minimax optimization rather than PGD-style inner loops or alternating optimization steps. Traditional alternating optimization (e.g., updating attacker for $k$ steps, then defender for $k$ steps) introduces instability from non-stationary objectives. GRL achieves equivalent minimax dynamics within a single backward pass: the segmentation model receives gradients that minimize loss, while attack networks receive negated gradients that maximize loss, enabling simultaneous parameter updates with consistent gradient information. Note that we still perform both clean and adversarial forward passes each iteration (see Algorithm~\ref{alg:training}), but the min-max update itself requires no inner optimization loop.

\subsection{Style Adversarial Network}

For appearance perturbations, we use Adaptive Instance Normalization (AdaIN) rather than direct pixel perturbations or neural style transfer. Direct perturbations (e.g., PGD) create high-frequency noise that models can easily filter out, while not reflecting realistic domain shifts. AdaIN operates on per-channel statistics (mean, variance), corresponding to realistic appearance changes like lighting, color grading, and material properties. This forces genuine generalization to different visual appearances rather than simple noise filtering.

Per-object features are extracted via masked average pooling: $\mathbf{f}_k = \frac{\sum_{x,y} \mathbf{z}_t(x,y) \cdot \mathbf{M}_k(x,y)}{\sum_{x,y} \mathbf{M}_k(x,y) + \epsilon}$. An MLP predicts style residuals $\Delta\boldsymbol{s}_k = \text{GRL}(\text{MLP}(\mathbf{f}_k))$, constrained via sigmoid-based relative encoding:
\begin{align}
\boldsymbol{\mu}_k^{\text{adv}} &= \boldsymbol{\mu}_k \cdot (1 + \epsilon_\mu \cdot (2\sigma(\Delta\boldsymbol{s}_k^{\mu}) - 1)) + \epsilon \cdot \tanh(\Delta\boldsymbol{s}_k^{\text{shift}}), \\
\boldsymbol{\sigma}_k^{\text{adv}} &= \boldsymbol{\sigma}_k \cdot (1 + \epsilon_\sigma \cdot (2\sigma(\Delta\boldsymbol{s}_k^{\sigma}) - 1)),
\end{align}
where $\epsilon_\mu, \epsilon_\sigma$ control perturbation magnitude.

When multiple objects are present ($K \geq 2$), an optional Graph Convolutional Network coordinates perturbations across objects. Independent per-object perturbations may create physically inconsistent scenes (e.g., conflicting lighting conditions). The GCN refines style residuals based on spatial proximity and visual similarity, with edges $e_{ij} = \mathbb{1}[d_{\text{spatial}}(i,j) < \tau_d] \cdot \mathbb{1}[\text{sim}_{\text{visual}}(\mathbf{f}_i, \mathbf{f}_j) > \tau_s]$. A 2-layer GCN with symmetric normalization produces coordinated, scene-consistent perturbations.

\subsection{Deformation Adversarial Network}

The deformation network reuses SAM2's memory encoder architecture (MaskDownSampler, CXBlock Fuser) rather than a custom design. This architecture is already optimized for fusing mask information with image features, which is precisely what semantic deformation prediction requires. Reusing it reduces additional parameters, leverages pre-trained fusion capabilities, and ensures matching receptive fields.

Given backbone features and object masks, the network predicts dense offset fields:
\begin{align}
\mathbf{h}_k &= \text{Fuser}\!\left(\text{Proj}(\mathbf{z}_t) + \text{MaskEnc}(\mathbf{M}_k)\right), \\
\Delta\mathbf{p}_k &= \text{GRL}\!\left(\epsilon \cdot (2\sigma(\text{OffsetNet}(\mathbf{h}_k)) - 1)\right),
\end{align}
where $\Delta\mathbf{p}_k \in \mathbb{R}^{2 \times H \times W}$ and $\epsilon$ bounds maximum displacement. The adversarial image is then obtained via differentiable grid sampling: $\mathbf{I}^{\text{adv}} = \text{GridSample}(\tilde{\mathbf{I}}, \boldsymbol{\delta})$, where $\tilde{\mathbf{I}}$ is the styled image from the previous stage.

Several design choices ensure training stability. (1) Deformation offsets are zero-initialized so that the attack network produces identity transformations initially, allowing the model to first learn basic segmentation before encountering deformations. (2) Encoder components (MaskEnc, Proj, Fuser) are frozen to prevent GRL gradients from destabilizing pre-trained modules. (3) Zero-mean offsets are enforced by subtracting the spatial mean, preserving local warps while removing global translation.

\textbf{Ground-truth mask synchronization.}
Crucially, the same flow field $\boldsymbol{\delta}$ is applied to the ground-truth mask via bilinear interpolation: $\mathbf{M}^{\text{adv}} = \text{GridSample}(\mathbf{M}^*, \boldsymbol{\delta})$.
This ensures semantic correspondence between the deformed image and its supervision signal, enabling correct loss computation on the adversarial branch.
For multi-object scenes, each object mask is warped independently using its corresponding offset field and composited via weighted averaging in overlap regions.

\subsection{Graph Convolutional Network for Multi-Object Coordination}
\label{sec:gcn-details}

When multiple objects are present, independent per-object perturbations may create physically inconsistent scenes. We employ a 2-layer Graph Convolutional Network (GCN) to coordinate style perturbations across spatially or semantically related objects.

\textbf{Graph Construction.} Given $K$ object masks $\{\mathbf{M}_k\}_{k=1}^K$ and their visual features $\{\mathbf{f}_k\}_{k=1}^K$ (extracted via masked average pooling from backbone features), we construct an object graph $\mathcal{G} = (\mathcal{V}, \mathcal{E})$ where each node represents an object. Edges are established based on three complementary criteria:

\begin{enumerate}[leftmargin=*,itemsep=2pt,parsep=0pt]
    \item \textbf{Spatial overlap (IoU):} Objects with overlapping masks should have coordinated perturbations:
    \begin{equation}
    w_{ij}^{\text{IoU}} = \text{IoU}(\mathbf{M}_i, \mathbf{M}_j) \cdot \mathbb{1}[\text{IoU}(\mathbf{M}_i, \mathbf{M}_j) > \tau_{\text{IoU}}].
    \end{equation}
    
    \item \textbf{Geometric proximity:} Nearby objects likely share lighting conditions:
    \begin{equation}
    w_{ij}^{\text{dist}} = \max\left(0, 1 - \frac{d_{\text{boundary}}(\mathbf{M}_i, \mathbf{M}_j)}{d_{\max}}\right) \cdot \mathbb{1}[d_{ij} < \tau_d],
    \end{equation}
    where $d_{\text{boundary}}$ is the minimum boundary distance between masks.
    
    \item \textbf{Semantic similarity:} Visually similar objects should receive similar perturbations:
    \begin{equation}
    w_{ij}^{\text{sem}} = \cos(\mathbf{f}_i, \mathbf{f}_j) \cdot \mathbb{1}[\cos(\mathbf{f}_i, \mathbf{f}_j) > \tau_{\text{sim}}].
    \end{equation}
\end{enumerate}

The final edge weight combines all criteria: $w_{ij} = w_{ij}^{\text{IoU}} + w_{ij}^{\text{dist}} + w_{ij}^{\text{sem}}$. Self-loops with unit weight are added to preserve node identity.

\textbf{Node Feature Initialization.} Each node is initialized with the concatenation of style residuals and projected visual features:
\begin{equation}
\mathbf{h}_k^{(0)} = [\Delta\boldsymbol{\mu}_k, \Delta\boldsymbol{\sigma}_k] \oplus \text{MLP}_{\text{proj}}(\mathbf{f}_k) \in \mathbb{R}^{12},
\end{equation}
where the MLP projects 256-dimensional visual features to 6 dimensions, balancing influence with the 6-dimensional style residuals.

\textbf{Message Passing.} We use row-normalized graph convolution with residual connections:
\begin{equation}
\mathbf{H}^{(l+1)} = \text{ReLU}\left(\text{LayerNorm}\left(\tilde{\mathbf{D}}^{-1}\tilde{\mathbf{A}} \mathbf{H}^{(l)} \mathbf{W}^{(l)}\right)\right),
\end{equation}
where $\tilde{\mathbf{A}} = \mathbf{A} + \mathbf{I}$ is the adjacency matrix with self-loops, $\tilde{\mathbf{D}}_{ii} = \sum_j \tilde{\mathbf{A}}_{ij}$ is the degree matrix, and $\mathbf{W}^{(l)}$ are learnable weights. The final layer projects back to style dimension without activation.

\textbf{Residual Output.} To ensure stable training with GRL, refined style residuals use a scaled residual connection:
\begin{equation}
[\Delta\boldsymbol{\mu}', \Delta\boldsymbol{\sigma}'] = [\Delta\boldsymbol{\mu}, \Delta\boldsymbol{\sigma}] + \alpha \cdot \mathbf{H}^{(L)}_{[:6]},
\end{equation}
where $\alpha = 0.1$ is a fixed scaling factor that prevents large initial perturbations while allowing gradual refinement.

\section{Training Strategy and Curriculum}
\label{sec:s3-train}

\subsection{Hardware Configuration}

All experiments are conducted on 8$\times$ NVIDIA A40 GPUs (48GB each) using PyTorch's Distributed Data Parallel (DDP). We use a per-GPU batch size of 2 with gradient accumulation of 2 steps, yielding an effective batch size of 32. Mixed-precision training (bfloat16) is enabled for memory efficiency. Total training time for 20 epochs on MOSE is approximately 6 hours.

\subsection{Curriculum Schedule}

Training follows a three-phase curriculum to ensure stable learning. In Phase 1 (epochs 0-4), only the standard SAM segmentation loss is active ($\gamma=0$), establishing a working baseline. Phase 2 (epochs 4-6) introduces Bayesian uncertainty estimation ($\beta=0.05$). Phase 3 (epochs 6+) enables full AUE with adversarial training ($\gamma=0.2$). This progressive hardening prevents destabilization from adversarial perturbations before the model has acquired basic segmentation capability.

\subsection{Parameter Configuration}

The image encoder (Hiera-B+) remains frozen throughout training. SAM's encoder was trained on billions of masks with carefully designed pretraining. Fine-tuning on our smaller MOSE dataset risks catastrophic forgetting of general visual features and overfitting to source-specific appearances. Freezing preserves general-purpose representations while the decoder adapts to uncertainty-aware segmentation.

We use a learning rate hierarchy: decoder ($10^{-6}$), uncertainty head and attacker networks ($10^{-4}$). Randomly initialized components (uncertainty head, attackers) require faster learning to catch up with the pre-trained decoder, while the decoder updates slowly to preserve pre-trained knowledge.

\begin{table}[h]
\centering
\caption{Hyperparameter summary.}
\label{tab:hyperparams}
\footnotesize
\begin{tabular}{@{}llc@{}}
\toprule
\textbf{Category} & \textbf{Hyperparameter} & \textbf{Value} \\
\midrule
\multirow{3}{*}{Training} & Epochs & 20 \\
& Effective batch size & 32 \\
& Optimizer & AdamW \\
\midrule
\multirow{4}{*}{Learning rates} & Image encoder & frozen \\
& Mask decoder & $10^{-6}$ \\
& Bayesian mask decoder & $10^{-4}$ \\
& Attackers & $10^{-3} \to 10^{-4}$ \\
\midrule
\multirow{3}{*}{Loss weights} & $\beta$ (UE) & 0.05 \\
& $\gamma$ (AUE) & 0.2 \\
& $\lambda$ (calibration) & 0.1 \\
\midrule
\multirow{2}{*}{Adversarial} & Style $\epsilon$ & 0.3 \\
& Deform $\epsilon$ & 0.15 \\
\midrule
\multirow{4}{*}{\makecell[l]{Bayesian\\mask decoder}} & Feature dim $C'$ & 32 \\
& $\kappa$ range & $[0.5, 10.0]$ \\
& KL weight & $10^{-11}$ \\
& Prior $\Gamma(\alpha, \beta)$ & $(1.0, 3.0)$ \\
\bottomrule
\end{tabular}
\end{table}

\begin{algorithm}[t]
\caption{RUAC Training via Gradient Reversal}
\label{alg:training}
\small

\begin{algorithmic}[1]
\REQUIRE Image batch $\{\mathbf{I}\}$, GT masks $\{\mathbf{M}^*\}$, attack networks $\mathcal{A}_{\text{style}}, \mathcal{A}_{\text{deform}}$
\STATE Initialize encoder $f_{\text{enc}}$, Bayesian mask decoder $f_{\text{dec}}$, optimizer $\mathcal{O}$
\FOR{each iteration}
    \STATE \textcolor{gray}{\textit{// Clean forward pass}}
    \STATE $\mathbf{z} \gets f_{\text{enc}}(\mathbf{I})$; $\hat{\mathbf{M}}, \tilde{u} \gets f_{\text{dec}}(\mathbf{z})$
    \STATE $\mathcal{L}_{\text{clean}} \gets \mathcal{L}_{\text{seg}} + \beta\mathcal{L}_{\text{KL}}$
    \STATE \textcolor{gray}{\textit{// Adversarial forward pass (GRL reverses gradients for attackers)}}
    \STATE \textcolor{gray}{\textit{// Predict attack parameters in parallel from clean features $\mathbf{z}$}}
    \STATE $(\Delta\boldsymbol{\mu}, \Delta\boldsymbol{\sigma}) \gets \mathcal{A}_{\text{style}}(\mathbf{z}, \mathbf{M}^*)$; $\boldsymbol{\delta} \gets \mathcal{A}_{\text{deform}}(\mathbf{z}, \mathbf{M}^*)$
    \STATE \textcolor{gray}{\textit{// Apply attacks sequentially: style $\to$ deformation}}
    \STATE $\tilde{\mathbf{I}} \gets \text{AdaIN}(\mathbf{I}, \boldsymbol{\mu} + \Delta\boldsymbol{\mu}, \boldsymbol{\sigma} + \Delta\boldsymbol{\sigma})$
    \STATE $\mathbf{I}^{\text{adv}} \gets \text{GridSample}(\tilde{\mathbf{I}}, \boldsymbol{\delta})$
    \STATE $\mathbf{z}^{\text{adv}} \gets f_{\text{enc}}(\mathbf{I}^{\text{adv}})$; $\hat{\mathbf{M}}^{\text{adv}}, \tilde{u}^{\text{adv}} \gets f_{\text{dec}}(\mathbf{z}^{\text{adv}})$
    \STATE $e \gets |\hat{\mathbf{M}}^{\text{adv}} - \mathbf{M}^*|$
    \STATE $\mathcal{L}_{\text{adv}} \gets \mathcal{L}_{\text{seg}}^{\text{adv}} + \beta\mathcal{L}_{\text{KL}}^{\text{adv}} + \lambda\mathcal{L}_{\text{cal}}$
    \STATE \textcolor{gray}{\textit{// Joint update: GRL flips gradients for $\psi_s, \psi_d$}}
    \STATE $\mathcal{L}_{\text{total}} \gets \mathcal{L}_{\text{clean}} + \gamma\mathcal{L}_{\text{adv}}$
    \STATE $\mathcal{O}.\text{step}(\nabla_{\theta, \phi, \psi} \mathcal{L}_{\text{total}})$ \hfill \textcolor{gray}{\textit{(GRL reverses $\nabla_\psi$)}}
\ENDFOR
\end{algorithmic}

\end{algorithm}

\subsection{How Calibration Loss Updates Each Channel}
\label{sec:calibration-mechanism}

The dual stop-gradient in Eq.~\ref{eq:cal} routes calibration gradient through two complementary pathways. We trace each.

\textbf{Step 1: $e$ channel trains the attacker to find miscalibrated samples.}
The terms with $\mathrm{sg}[u]$ flow gradient through the prediction error $e$, back through the sigmoid and decoder to the adversarial image $\mathbf{I}^{\text{adv}}$. Gradient Reversal flips the sign for attacker parameters $\psi_s, \psi_d$, so the attacker maximizes $\mathcal{L}_{\text{cal}}$ by generating perturbations that expose calibration failures: samples where the model is either (a) confident but wrong (CW), or (b) uncertain but correct (UC).

\textbf{Step 2: $u$ channel trains the Bayesian decoder to align $u$ with $e$.}
The mirror terms with $\mathrm{sg}[e]$ flow gradient through the analytic uncertainty $u$ into the Weibull parameters of the Bayesian mask decoder. The gradient drives $u$ upward at high-$e$ pixels (CW corner) and downward at low-$e$ pixels (UC corner), shaping the posterior toward calibration. The same gradient also reaches the attacker via GRL, encouraging it to generate samples with mismatched $(e, u)$ patterns the model has not yet learned to handle.

\textbf{Step 3: Decoder trains on the resulting hard samples.}
The main model additionally sees these adversarial samples through $\mathcal{L}_{\text{seg}}^{\text{adv}} + \beta\mathcal{L}_{\text{KL}}^{\text{adv}}$. The segmentation loss provides error signal, the KL term regularizes the Bayesian posterior, and the $u$-channel signal from Step 2 directly aligns uncertainty with error. Together, training in regions selected by the attacker forces the model into parameter regions where uncertainty aligns with prediction risk, the ``flat minima'' effect from PAC-Bayesian theory (Section~\ref{sec:s5-theory}).

\textbf{Why dual stop-gradient prevents trivial solutions.}
A naive calibration loss that lets gradients flow freely between $e$ and $u$ would invite two failure modes:
\begin{enumerate}[leftmargin=*,itemsep=2pt,parsep=0pt]
    \item \textbf{Uniform uncertainty collapse}: the model could minimize the loss by raising $u$ everywhere, ignoring whether $e$ is actually large.
    \item \textbf{Attacker $u$-shortcut}: the attacker could maximize the loss by manipulating $u$ via BNDL rather than producing genuinely hard inputs.
\end{enumerate}
The dual stop-gradient blocks both shortcuts. $\mathrm{sg}[e]$ in the $u$-channel terms makes $u$-updates conditional on the actual error: $u$ is only pushed up where $e$ is observed to be high, preventing uniform collapse. $\mathrm{sg}[u]$ in the $e$-channel terms keeps the attacker's adversarial signal flowing through real prediction error rather than through $u$. Each stop-gradient closes the partner channel's shortcut while leaving the legitimate gradient path open.

\subsection{Why GT Masks During Training Do Not Cause Overfitting}
\label{sec:gt-usage}

A potential concern is that using ground-truth masks to generate adversarial perturbations might cause the model to overfit to the training data. We address this concern:

\textbf{Training-only component.} The attack networks $\mathcal{A}_{\text{style}}$ and $\mathcal{A}_{\text{deform}}$ are used \emph{exclusively during training}. At inference, only the Bayesian mask decoder runs, requiring no GT masks or attack networks. This follows the standard adversarial training paradigm~\cite{madry2018adversarial}.

\textbf{GT defines \emph{where} to perturb, not \emph{what} to predict.} The GT masks serve a different role than in supervised learning: they localize objects for perturbation generation, not as prediction targets. The decoder learns to maintain calibration under perturbations, not to memorize GT patterns.

\textbf{Zero-shot generalization validates no overfitting.} Our evaluation is conducted on 23 \emph{completely held-out} OOD datasets (zero-shot). The consistent improvements (Table~\ref{tab:overall}, \ref{tab:pavpu_comparison}) demonstrate effective generalization, not memorization of source-domain GT patterns.

\section{Dataset Details}
\label{sec:s4-data}

\begin{table}[t]
\centering
\caption{Zero-shot target datasets (23 total). All models train on MOSE only.}
\label{tab:datasets}
\footnotesize
\begin{tabularx}{\linewidth}{@{}l X@{}}
\toprule
\textbf{Category} & \textbf{Datasets} \\
\midrule
Objects & TrashCan~\cite{trashcan}, iShape~\cite{ishape}, ZeroWaste-f~\cite{zerowaste}, LVIS~\cite{lvis}, NDD20~\cite{ndd20}, Plittersdorf~\cite{plittersdorf}, TimberSeg~\cite{timberseg}, DRAM~\cite{dram} \\
\midrule
Scenes & Cityscapes~\cite{cityscapes}, ADE20K~\cite{ade20k}, Hypersim~\cite{hypersim}, WoodScape~\cite{woodscape}, STREETS~\cite{streets}, NDISPark~\cite{ndispark,ndispark_zenodo}, DOORS~\cite{doors} \\
\midrule
Scientific & BBBC038v1~\cite{bbbc038v1}, IBD~\cite{ibd}, PIDRay~\cite{pidray}, PPDLS~\cite{ppdls} \\
\midrule
Egocentric & GTEA~\cite{gtea,gtea_li2015}, EgoHOS~\cite{egohos}, VISOR~\cite{visor,visor_damen2022}, OVIS~\cite{ovis} \\
\bottomrule
\end{tabularx}
\end{table}

\textbf{Source domain:} MOSE~\cite{mose} contains 2,149 video clips with 5,200 objects across 36 categories. We use \emph{first frames only} for both training and evaluation, operating SAM2 in single-frame mode without memory propagation. This isolates image segmentation quality from temporal tracking, enabling fair comparison with image-based methods. \textbf{Target domains:} Table~\ref{tab:datasets} shows the 23 zero-shot datasets spanning medical imaging (BBBC038v1), autonomous driving (Cityscapes, WoodScape), industrial inspection (ZeroWaste-f, PIDRay), indoor scenes (Hypersim, DOORS), and egocentric settings (EgoHOS, VISOR). For datasets originally provided as videos, we evaluate on first frames only. \textbf{Prompting protocol (3-click).} For zero-shot evaluation, we follow a deterministic \emph{3-click} interactive prompting protocol consistent with prior promptable/interactive segmentation evaluations~\cite{kirillov2023segment,sofiiuk2021reviving}. Each object is evaluated with three point prompts generated deterministically from ground truth (with a minimum click separation). All methods receive identical prompts.


\section{UR-ERN Baseline Implementation}
\label{sec:urern-details}

UR-ERN~\cite{ye2024urern} is an evidential deep learning method that models predictive uncertainty via Normal Inverse Gamma (NIG) distributions. To adapt UR-ERN for SAM2, we add a lightweight $1\times1$ convolutional head on the upscaled decoder features (32 channels after transposed convolutions). This head predicts four channels corresponding to the NIG parameters $(\gamma, \nu, \alpha, \beta)$, which are constrained to valid domains via softplus activations ($\nu > 0$, $\alpha > 1$, $\beta > 0$) and linear activation for $\gamma \in \mathbb{R}$.

Following the original formulation, we decompose the predictive variance into aleatoric and epistemic components: $\sigma^2_{\text{aleatoric}} = \beta / (\alpha - 1)$ and $\sigma^2_{\text{epistemic}} = \beta / (\nu(\alpha - 1))$. The NIG negative log-likelihood and evidence regularization terms are used for training. We report total predictive variance as the uncertainty measure.


\section{Theoretical Analysis: Why AUE Aligns Uncertainty with Error}
\label{sec:s5-theory}

In this section, we provide a theoretical perspective on why optimizing a Bayesian mask decoder under bounded adversarial perturbations can improve the alignment between predictive uncertainty and model error. We ground this analysis in the Variational Inference (VI) framework.

\subsection{Preliminaries}

Let $\mathcal{D}$ be the data distribution. We seek a posterior $q_\phi(\mathbf{w})$ over weights $\mathbf{w}$ that minimizes the Variational Free Energy (equivalent to maximizing the Evidence Lower Bound, ELBO):
\begin{equation}
\begin{split}
\mathcal{L}(\phi) = \; & \mathbb{E}_{\mathbf{x}, \mathbf{y} \sim \mathcal{D}} \left[ \mathbb{E}_{\mathbf{w} \sim q_\phi}[-\log p(\mathbf{y}|\mathbf{x}, \mathbf{w})] \right] \\
& + \beta \, \text{KL}(q_\phi(\mathbf{w}) \| p(\mathbf{w})).
\end{split}
\label{eq:elbo}
\end{equation}
In AUE, we augment this with an adversarial objective. Let $\delta^*(\mathbf{x})$ be the worst-case perturbation within a feasible set $\Delta$ (defined by our Style and Deformation generators):
\begin{equation}
\delta^*(\mathbf{x}) = \argmax_{\delta \in \Delta} \mathbb{E}_{\mathbf{w} \sim q_\phi}[-\log p(\mathbf{y}|\mathbf{x}_\delta, \mathbf{w})].
\end{equation}
The AUE objective effectively minimizes the risk on these perturbed samples:
\begin{equation}
\mathcal{L}_{\text{AUE}}(\phi) \approx \mathbb{E}_{\mathbf{x}, \mathbf{y}} \left[ \mathbb{E}_{\mathbf{w} \sim q_\phi}[-\log p(\mathbf{y}|\mathbf{x}_{\delta^*}, \mathbf{w})] \right] + \beta \text{KL}(q_\phi \| p).
\label{eq:aue_obj}
\end{equation}

\subsection{Logit Variance Increases Predictive Entropy}
\label{sec:lemma_entropy_variance}

We first formalize a key property of our analytic uncertainty estimator. 
Note that the logit variance $v$ is strictly non-negative by construction, ensured by the Weibull parameterization. 
Under the MacKay-style sigmoid approximation, increased logit variance produces higher predictive entropy.

\begin{lemma}
\label{lem:entropy_increases_with_var}
Let $m \in \mathbb{R}$ and $v \ge 0$ denote the mean and variance of a (scalar) logit, and define the MacKay/probit approximation
\begin{equation}
p(v) \triangleq \sigma\!\left(\kappa_s(v)\, m\right), \quad \kappa_s(v) \triangleq (1 + c v)^{-1/2}, \quad c>0.
\end{equation}
Let $u(v) \triangleq H(p(v))$ be the Bernoulli entropy $H(p) = -p\log p - (1-p)\log(1-p)$. Then $u(v)$ is non-decreasing in $v$. Moreover, if $m\neq 0$ then $u(v)$ is strictly increasing in $v$.
\end{lemma}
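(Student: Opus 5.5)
The plan is to write $u(v)$ as a composition of three elementary monotone maps and apply the chain rule. Set $s(v) \triangleq \kappa_s(v)\,m$, so that $u(v) = H(\sigma(s(v)))$. Define $g(s) \triangleq H(\sigma(s))$. This function is even in $s$, because $\sigma(-s) = 1-\sigma(s)$ and $H(p) = H(1-p)$. We therefore have $u(v) = g(|s(v)|) = g\bigl(\kappa_s(v)\,|m|\bigr)$. This reduces the claim to two facts: $g$ is non-increasing on $[0,\infty)$, and $v \mapsto \kappa_s(v)|m|$ is non-increasing on $[0,\infty)$.

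For the first fact, we differentiate. We have $H'(p) = \log\frac{1-p}{p}$ and $\sigma'(s) = \sigma(s)(1-\sigma(s))$. With $p=\sigma(s)$ we also have $\log\frac{1-p}{p} = -s$. Hence
\[
g'(s) = -s\,\sigma(s)\bigl(1-\sigma(s)\bigr).
\]
This is strictly negative for $s>0$ and vanishes only at $s=0$. So $g$ is strictly decreasing on $[0,\infty)$.

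For the second fact, note that $\kappa_s(v) = (1+cv)^{-1/2}$ with $c>0$. This function is positive and strictly decreasing on $[0,\infty)$, with derivative $-\tfrac{c}{2}(1+cv)^{-3/2} < 0$.

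Composing the two facts settles both cases of the lemma. If $m=0$, then $s\equiv 0$, so $u(v) \equiv \log 2$ is constant and hence non-decreasing. If $m\neq 0$, then $v\mapsto \kappa_s(v)|m|$ is a strictly decreasing map into $(0,|m|]$. On that interval $g$ is strictly decreasing, so $u = g\circ(\kappa_s|m|)$ is strictly increasing. For an explicit derivative, the chain rule gives
\[
u'(v) = g'(\kappa_s|m|)\,\kappa_s'(v)\,|m| = \kappa_s^2 m^2\,\sigma(1-\sigma)\,\tfrac{c}{2}(1+cv)^{-1},
\]
which is positive whenever $m\neq 0$.

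I expect no real obstacle. The only delicate point is handling the sign of $m$, and the symmetry $g(s)=g(-s)$ removes it by letting us work with $|m|$ throughout. I would also confirm that $v\ge 0$ keeps $1+cv>0$, so $\kappa_s$ is well defined and smooth on the whole domain.
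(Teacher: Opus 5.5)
Your proposal is correct and follows essentially the paper's own proof: use the symmetry $H(p)=H(1-p)$ to pass to $g(a)=H(\sigma(a))$ with $a(v)=\kappa_s(v)\,|m|$, compute $g'(a)=-a\,\sigma(a)(1-\sigma(a))<0$ for $a>0$, and compose two decreasing maps. Your explicit treatment of the case $m=0$ (where $u\equiv\log 2$) and your closed-form $u'(v)$ are correct small additions to the same argument.
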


\begin{proof}
Define the effective logit magnitude $a(v) \triangleq \kappa_s(v)\,|m| = |m|\,(1+cv)^{-1/2}$. Note that for $m \neq 0$, $a(v)$ is strictly decreasing in $v$ (and constant if $m=0$).

Due to the symmetry of entropy $H(p) = H(1-p)$, we have $u(v) = H(\sigma(|m|\kappa_s(v))) = H(\sigma(a(v)))$. Let $g(a) \triangleq H(\sigma(a))$ for $a \ge 0$.
Differentiating $g(a)$ with respect to $a$:
\begin{equation}
\frac{\mathrm{d}g}{\mathrm{d}a} = \underbrace{\frac{\mathrm{d}H}{\mathrm{d}p}\bigg|_{p=\sigma(a)}}_{-\text{logit}(p) = -a} \cdot \underbrace{\frac{\mathrm{d}\sigma}{\mathrm{d}a}(a)}_{\sigma(a)(1-\sigma(a))} = -a \cdot \sigma(a)(1-\sigma(a)).
\end{equation}
For $a > 0$, we have $\sigma(a) \in (0.5, 1)$, so both terms are positive, making $\mathrm{d}g/\mathrm{d}a$ strictly negative.
Thus, $u(v)$ is the composition of a decreasing function $g(a)$ and a decreasing function $a(v)$, making it strictly increasing in $v$ (for $m \neq 0$).
\end{proof}

\subsection{Connection to Flat Minima and Uncertainty}

We now provide an intuition for why optimizing Eq.~\ref{eq:aue_obj} improves uncertainty calibration. The core argument relies on the connection between adversarial robustness and the geometry of the loss landscape (Flat Minima).

\begin{proposition}[Informal]
\label{prop:flat_minima}
Adversarial training acts as a smoothing operator on the loss landscape, encouraging convergence to regions with lower curvature (flatter minima) compared to standard training. In a variational framework, lower curvature in the loss $\mathcal{L}$ w.r.t weights $\mathbf{w}$ permits the posterior covariance $\boldsymbol{\Sigma}$ to expand (increase) without incurring high likelihood penalties. Through Lemma~\ref{lem:entropy_increases_with_var}, this increased posterior variance translates to higher predictive entropy.
\end{proposition}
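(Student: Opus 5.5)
The plan is to make the informal statement precise through a local quadratic model and a Gaussian-type variational posterior, and then chain three links: adversarial smoothing $\Rightarrow$ lower curvature $\Rightarrow$ larger optimal posterior variance $\Rightarrow$ higher entropy via Lemma~\ref{lem:entropy_increases_with_var}.

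\textbf{Link 1: smoothing.} Fix a weight vector $\mathbf{w}$ and write the perturbed loss as $\ell(\mathbf{x}_\delta,\mathbf{w})$. Expanding to second order in $\delta$ around $\delta=0$ gives
\begin{equation}
\max_{\delta\in\Delta}\ell(\mathbf{x}_\delta,\mathbf{w}) \approx \ell(\mathbf{x},\mathbf{w}) + \epsilon\,\|\nabla_\delta \ell\|_* + O(\epsilon^2).
\end{equation}
The adversarial objective therefore equals the clean loss plus a gradient-norm penalty. Differentiating this penalty twice in $\mathbf{w}$ shows that, near a stationary point, it adds a positive semidefinite contribution that penalizes sharp directions. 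The claim to establish is then the following: at the minimizer $\mathbf{w}^*_{\text{adv}}$ of the adversarial objective, the \emph{clean} Hessian $\mathbf{H}_{\text{adv}}$ satisfies $\mathbf{H}_{\text{adv}}\preceq \mathbf{H}_{\text{std}}$, where $\mathbf{H}_{\text{std}}$ is the clean Hessian at the standard minimizer. I would state this as an assumption supported by the heuristic above. A rigorous comparison across different minimizers is not available in general, and I expect this to be the main obstacle. To make it tractable, I would first try a toy model, such as a linear logit with squared perturbation, where both minimizers and both Hessians are explicit.

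\textbf{Link 2: curvature to posterior variance.} Take $q=\mathcal{N}(\mathbf{w}^*,\boldsymbol{\Sigma})$ and a Gaussian prior $\mathcal{N}(0,s^2 I)$. Under the quadratic model, Eq.~\ref{eq:elbo} becomes
\begin{equation}
\tfrac12\mathrm{tr}(\mathbf{H}\boldsymbol{\Sigma}) + \beta\,\mathrm{KL}.
\end{equation}
Setting the gradient in $\boldsymbol{\Sigma}$ to zero yields the closed form $\boldsymbol{\Sigma}^*=\beta(\mathbf{H}+\beta s^{-2}I)^{-1}$, which is antitone in $\mathbf{H}$. Combined with Link 1, this gives $\boldsymbol{\Sigma}^*_{\text{adv}}\succeq\boldsymbol{\Sigma}^*_{\text{std}}$. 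For the Weibull posteriors actually used, I would state this only as an analogy, noting that the per-coordinate variance from Sec.~\ref{sec:weibull_variance} plays the role of $\boldsymbol{\Sigma}$.

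\textbf{Link 3: posterior variance to predictive entropy.} The logit $\ell=\mathbf{w}^\top\phi(\mathbf{x})$ has variance $v=\phi^\top\boldsymbol{\Sigma}\phi$, which is monotone in $\boldsymbol{\Sigma}$ in the Loewner order. Hence $v_{\text{adv}}\ge v_{\text{std}}$ pixelwise. Provided the logit mean $m$ is held approximately fixed, Lemma~\ref{lem:entropy_increases_with_var} gives $u(v_{\text{adv}})\ge u(v_{\text{std}})$, with strict inequality when $m\ne0$.

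The mean-fixed caveat should be stated explicitly. To handle it, I would bound the entropy change from shifts in $m$ by the derivative $-a\,\sigma(a)(1-\sigma(a))$ computed in the proof of the lemma, and require that the variance effect dominate.
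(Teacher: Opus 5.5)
Your three links follow the paper's proof sketch. The paper also replaces the Weibull posterior by a Gaussian $\mathcal{N}(\boldsymbol{\mu},\boldsymbol{\Sigma})$ and uses $\mathbb{E}_q[\mathrm{NLL}]\approx\mathrm{NLL}(\boldsymbol{\mu})+\tfrac12\mathrm{tr}(\mathbf{H}\boldsymbol{\Sigma})$ to get $\boldsymbol{\Sigma}^*\approx(\mathbf{H}_{\mathbf{w}}+\lambda\mathbf{I})^{-1}$. It then asserts, without argument, that adversarial training lands in flat minima, and invokes Lemma~\ref{lem:entropy_increases_with_var}. Your refinements go beyond the paper: the closed form $\beta(\mathbf{H}+\beta s^{-2}\mathbf{I})^{-1}$, the Loewner monotonicity of $v=\phi^\top\boldsymbol{\Sigma}\phi$, and the fixed-$m$ caveat (the paper never notes that adversarial training also moves the logit mean). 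The paper gives no more support for Link~1 than you do.

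Once made precise, however, Links~1 and~2 do not compose. The Hessian that fixes $\boldsymbol{\Sigma}^*$ in Link~2 must be that of the expected loss actually minimized over $(\boldsymbol{\mu},\boldsymbol{\Sigma})$. For the adversarially trained posterior, that is the adversarial NLL of Eq.~\ref{eq:aue_obj}, or clean plus $\gamma$ times adversarial as in Eq.~\ref{eq:main_obj}. It is not the clean loss of Eq.~\ref{eq:elbo}. Your own Link~1 heuristic says the robust term adds positive semidefinite curvature at $\mathbf{w}^*_{\text{adv}}$. Since $\boldsymbol{\Sigma}^*$ is antitone in $\mathbf{H}$, that pushes $\boldsymbol{\Sigma}^*_{\text{adv}}$ down. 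So $\mathbf{H}_{\text{adv}}\preceq\mathbf{H}_{\text{std}}$ for clean Hessians does not give $\boldsymbol{\Sigma}^*_{\text{adv}}\succeq\boldsymbol{\Sigma}^*_{\text{std}}$. You would need the drop in clean curvature to outweigh the curvature of the robust term, and Link~1 argues against exactly that.

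Your planned toy models will likely expose this rather than fix it.
\begin{itemize}
\item Squared loss, linear model, $\ell_2$ input ball of radius $\epsilon$. Your robust loss is $\tfrac12(|r|+\epsilon\|\mathbf{w}\|)^2$ with $r=y-\mathbf{w}^\top\mathbf{x}$. Any direction orthogonal to all inputs and to $\mathbf{w}$ is an eigenvector of its Hessian, with eigenvalue $\epsilon|r|/\|\mathbf{w}\|+\epsilon^2\ge\epsilon^2$ per sample. The clean Hessian $\sum_i\mathbf{x}_i\mathbf{x}_i^\top$ vanishes there and does not depend on $\mathbf{w}$. Your hypothesis therefore holds with equality and your chain predicts no change. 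Yet the posterior optimizing the robust objective has variance at most $\beta/(\epsilon^2+\beta s^{-2})<s^2$ in those directions, instead of the prior value $s^2$.
\item Logistic linear logit. Here $\log(1+e^{-(m-\epsilon\|\mathbf{w}\|)})$ acts as an extra norm penalty. That typically shrinks $|m|$ and so enlarges even the clean Hessian $\sigma(m)(1-\sigma(m))\mathbf{x}\mathbf{x}^\top$, reversing your stated comparison.
\end{itemize}
The flatness step the paper merely asserts thus fails in the simplest instances. A rigorous version must state its hypothesis on the Hessian of the objective actually optimized, at its own minimizer. It must also identify a regime in which input-space perturbations do not act as a weight-space curvature penalty.
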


\textit{Proof Sketch.}
Let us approximate the posterior $q_\phi(\mathbf{w})$ as a Gaussian $\mathcal{N}(\boldsymbol{\mu}, \boldsymbol{\Sigma})$. Minimizing the free energy (Eq.~\ref{eq:elbo}) involves a trade-off between fitting the data (minimizing NLL) and staying close to the prior (minimizing KL).
Using a second-order Taylor expansion of the loss around $\boldsymbol{\mu}$, the optimal covariance $\boldsymbol{\Sigma}^*$ satisfies the inverse-Hessian relationship:
\begin{equation}
\boldsymbol{\Sigma}^* \approx \left(\mathbf{H}_{\mathbf{w}} + \lambda \mathbf{I}\right)^{-1},
\end{equation}
where $\mathbf{H}_{\mathbf{w}}$ is the Hessian of the loss and $\lambda$ corresponds to the prior precision.

\textbf{1. Standard Training (Sharp Minima):} Standard ERM often converges to sharp minima where the Hessian $\mathbf{H}_{\mathbf{w}}$ has large eigenvalues (high curvature). To minimize the expected NLL term $\mathbb{E}_{w}[\text{NLL}] \approx \text{NLL}(\mu) + \frac{1}{2}\text{Tr}(\mathbf{H}(\boldsymbol{\Sigma}))$, the optimizer is forced to reduce $\boldsymbol{\Sigma}$ significantly. This results in small posterior variance, low logit variance, and consequently overconfident predictions (low entropy).

\textbf{2. AUE Training (Flat Minima):} Optimizing against perturbations $\delta^*$ effectively minimizes the worst-case loss within a neighborhood, which necessitates finding a solution $\boldsymbol{\mu}$ that is robust to local changes. This implies finding a region where the loss surface is flat (low curvature, small $\mathbf{H}_{\mathbf{w}}$).
Because $\mathbf{H}_{\mathbf{w}}$ is smaller, the penalty term $\text{Tr}(\mathbf{H}(\boldsymbol{\Sigma}))$ is reduced, allowing the KL term to dominate. The KL term encourages $\boldsymbol{\Sigma}$ to expand towards the prior (unit variance).
This larger $\boldsymbol{\Sigma}$ propagates to higher logit variance, which, via Lemma~\ref{lem:entropy_increases_with_var}, results in higher predictive entropy (calibrated uncertainty).

\textbf{Conclusion:} AUE aligns uncertainty with error by forcing the model into flat minima. In these regions, the Bayesian posterior is allowed to express higher epistemic uncertainty (weight variance), which correctly reflects the model's reliability, whereas sharp minima force the posterior to collapse into overconfidence.

\subsection{Generalization Bound}

From a PAC-Bayesian viewpoint, the AUE objective can be viewed as maximizing the margin of validity for the posterior. By training on $\mathbf{x}_{\delta^*}$, we essentially optimize the risk over a smoothed distribution. Scalable PAC-Bayes bounds often contain terms relating to the sharpness of the posterior. By enabling the posterior to be flatter (higher variance) while maintaining low empirical risk, AUE effectively optimizes a tighter bound on potential OOD error.

\section{Failure Cases and Limitations}
\label{sec:s6-fail}

We observe three main failure modes: (1) \textbf{Extreme domain shift}: X-ray and microscopy images degrade segmentation quality, though uncertainty remains useful for failure detection. (2) \textbf{Very small objects}: Objects $<$1\% of image area challenge both segmentation and uncertainty estimation due to limited spatial resolution of uncertainty maps. (3) \textbf{Heavy occlusion}: Objects $>$80\% occluded produce appropriate high uncertainty but may fail to segment correctly. In such cases the model correctly ``knows that it doesn't know'' but cannot recover the missing information. These limitations motivate future work on multi-scale uncertainty estimation and targeted adversarial attacks for edge cases.

\begin{figure*}[ht!]
    \centering
    \includegraphics[width=\linewidth]{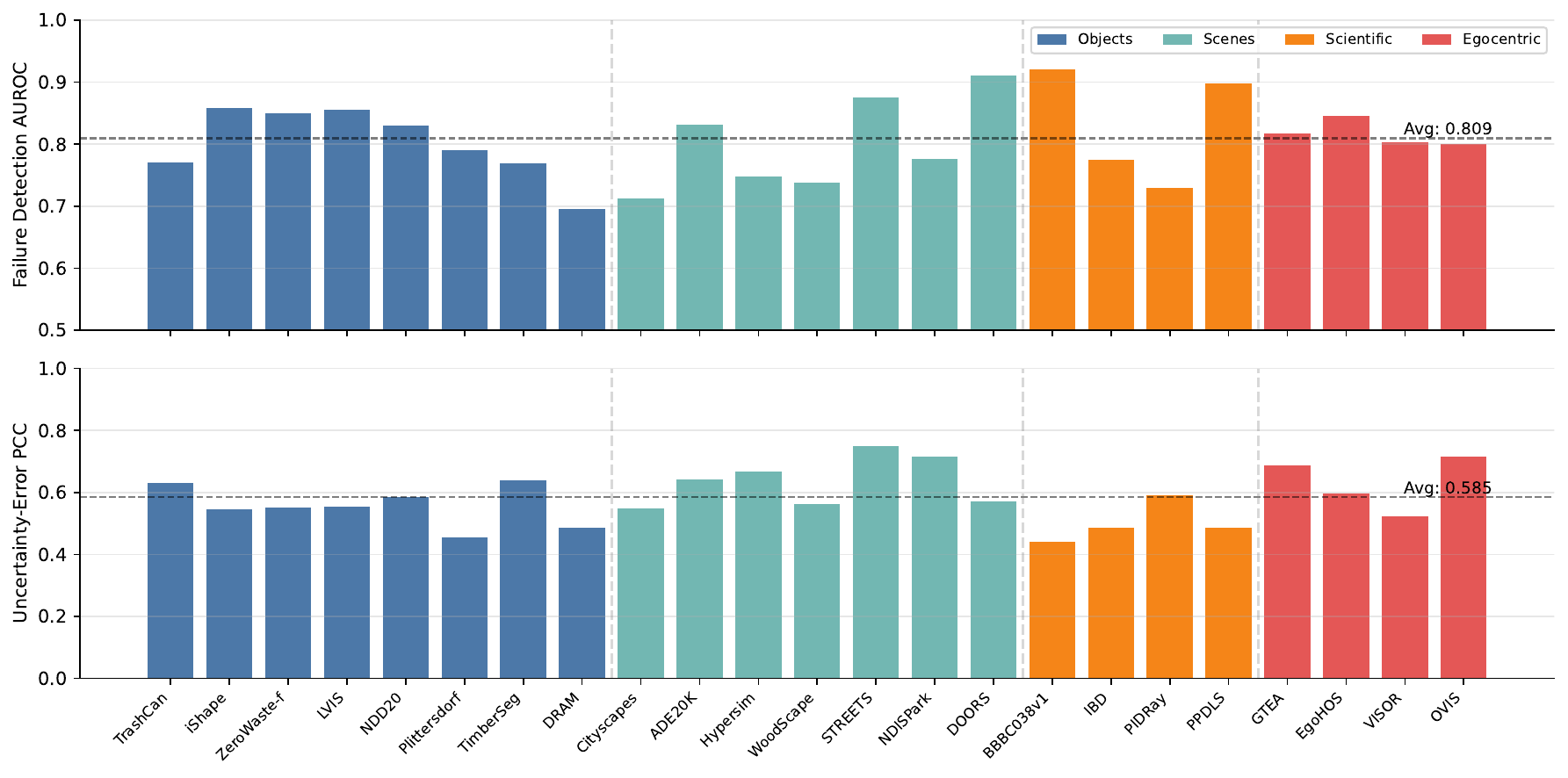}
    \caption{\textbf{Threshold-free uncertainty evaluation (pixel-level).} (Top) Failure Detection AUROC$_{\text{pixel}}$ measures whether pixel-wise uncertainty correctly ranks predictions by per-pixel accuracy (avg. 0.809). (Bottom) Uncertainty-Accuracy PCC confirms positive correlation. Both metrics evaluate at pixel granularity, i.e., each pixel is a sample with its own uncertainty and correctness label.}
    \label{fig:auroc_pcc}
\end{figure*}

\section{Threshold-Free Uncertainty Analysis}
\label{sec:s7-auroc}

While PAvPU provides a threshold-based calibration metric, it can be sensitive to the choice of uncertainty threshold. On some OOD datasets, the model may express elevated uncertainty across most pixels, correctly identifying domain shift, which leads to low PAvPU even when predictions are accurate. This conservative behavior is desirable in safety-critical applications.

To provide a more complete picture, we report threshold-free metrics that assess whether uncertainty correctly \emph{ranks} predictions by error likelihood, independent of any specific threshold choice.

\subsection{Failure Detection AUROC}

Area Under the Receiver Operating Characteristic (AUROC$_{\text{pixel}}$) measures the probability that a randomly chosen \emph{incorrectly predicted pixel} has higher uncertainty than a randomly chosen \emph{correctly predicted pixel}. An AUROC of 0.5 indicates random ranking, while 1.0 indicates perfect separation. This is evaluated at \textbf{pixel granularity}: each pixel serves as an independent sample.

As shown in Figure~\ref{fig:auroc_pcc}, RUAC achieves an average AUROC$_{\text{pixel}}$ of \textbf{0.809} across all OOD datasets. Even on datasets with low PAvPU (e.g., DRAM, PIDRay), AUROC$_{\text{pixel}}$ remains above 0.7, indicating that uncertainty correctly identifies which pixels are more likely to be mispredicted.

\subsection{Uncertainty-Accuracy Pearson Correlation}

Pearson Correlation Coefficient (PCC) between pixel-wise uncertainty and pixel-wise error provides a continuous measure of calibration. Positive correlation indicates that higher uncertainty corresponds to higher error rates.

RUAC achieves an average PCC of \textbf{0.585} across OOD datasets, with all datasets showing positive correlation. This confirms that uncertainty signals contain meaningful information about prediction reliability.

\begin{figure*}[ht!]
    \centering
    \includegraphics[width=\linewidth]{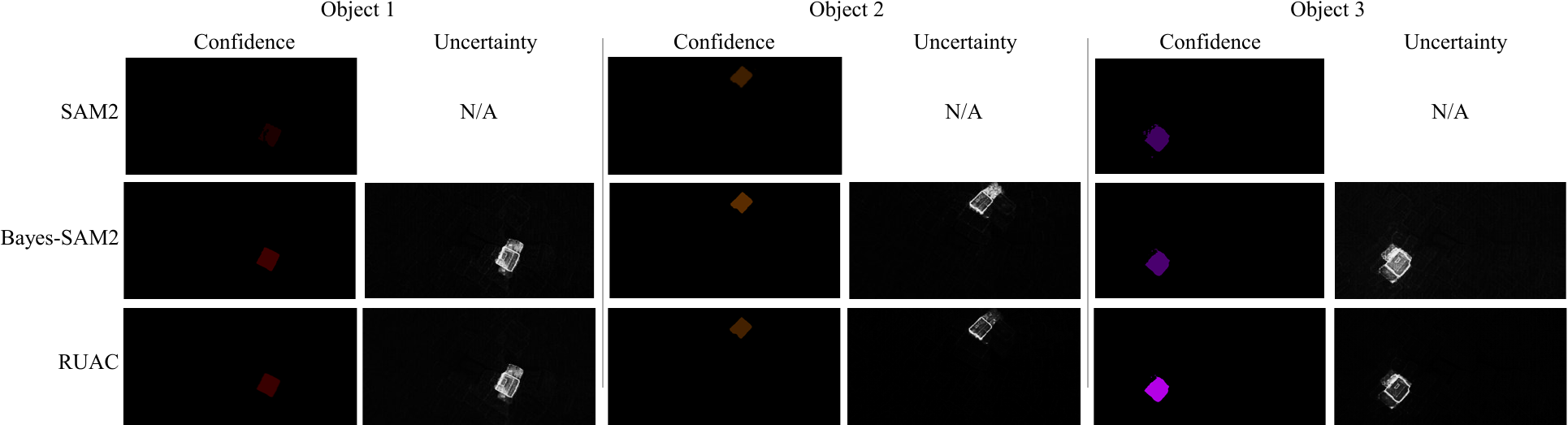}
    \caption{\textbf{Additional confidence and uncertainty visualization.} 
    Complementing Figure~\ref{fig:conf_unc_vis} in the main text, we show three more object instances from additional rows of Figure~\ref{fig:segmentation_vis}. 
    The layout follows the same format: each row corresponds to a method (SAM2, Bayes-SAM2, RUAC), displaying confidence and uncertainty maps. 
    Brighter regions indicate higher confidence or uncertainty values. 
    Consistent with the main text observations, RUAC demonstrates sharper confidence on object interiors while concentrating uncertainty along ambiguous boundaries, confirming its superior calibration behavior across diverse object instances.}
    \label{fig:conf_unc_vis_suppl}
\end{figure*}

\subsection{Interpretation}

The combination of (1) elevated uncertainty on OOD datasets, (2) high AUROC, and (3) positive PCC demonstrates that RUAC exhibits \emph{domain-level cautiousness}: the model correctly identifies domain shift and adopts a conservative stance, while still maintaining the ability to distinguish reliable from unreliable predictions within the high-uncertainty regime. This is precisely the behavior required for safe deployment in open-world scenarios.

\section{Risk-Coverage Analysis}
\label{sec:s8-aurc}

To provide decision-relevant evidence that calibration improves independently of accuracy, we analyze the \emph{risk-coverage} trade-off. This analysis addresses the concern that PAvPU improvements may be driven by accuracy gains rather than genuine calibration.

\subsection{Area Under Risk-Coverage Curve (AURC)}

The risk-coverage curve plots segmentation risk (1 - accuracy) against coverage (fraction of predictions retained) as we progressively reject high-uncertainty predictions. A well-calibrated model should achieve lower risk at any given coverage level. The Area Under the Risk-Coverage Curve (AURC) provides a single scalar summary: \textbf{lower AURC indicates better uncertainty quality}.

Table~\ref{tab:aurc} compares AURC between RUAC and Bayes-SAM2 (UE without adversarial training) across all 23 OOD datasets. RUAC achieves a mean AURC of \textbf{0.092} compared to 0.102 for Bayes-SAM2, representing a \textbf{9.8\%} relative improvement. Crucially, this improvement is observed on \textbf{19/23 domains} (83\%), demonstrating that the calibration benefit is consistent across diverse domain shifts.

\begin{table}[h]
\centering
\caption{AURC comparison (lower is better). RUAC achieves lower AURC on 19/23 domains.}
\label{tab:aurc}
\footnotesize
\begin{tabular}{@{}lcc@{}}
\toprule
\textbf{Method} & \textbf{Mean AURC} $\downarrow$ & \textbf{Std} \\
\midrule
Bayes-SAM2 (UE) & 0.102 & 0.099 \\
\textbf{RUAC} & \textbf{0.092} & 0.093 \\
\bottomrule
\end{tabular}
\end{table}

\subsection{Selective Prediction Curves}

Figure~\ref{fig:selective_pred} shows selective prediction curves for representative domains. At each coverage level, we retain only the predictions with lowest uncertainty and measure the resulting accuracy. A model with better calibration achieves higher accuracy at any given coverage.

Across domains, RUAC (solid green) consistently lies above or matches Bayes-SAM2 (dashed blue), confirming that adversarial training improves the informativeness of uncertainty estimates for selective prediction. The improvement is particularly pronounced on challenging domains (e.g., DRAM, PIDRay) where domain shift is severe.

\begin{figure}[h]
    \centering
    \includegraphics[width=\linewidth]{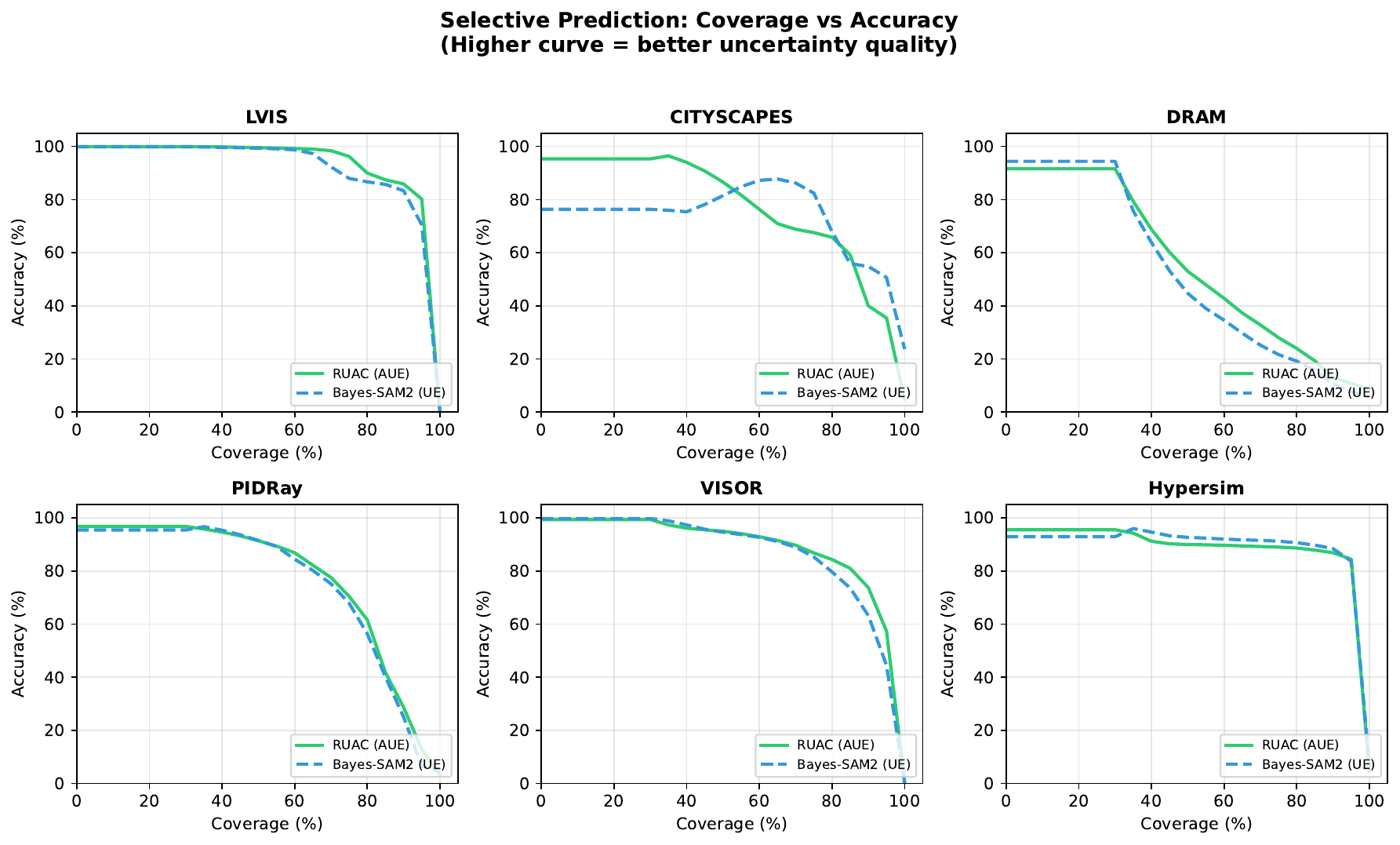}
    \caption{\textbf{Selective prediction curves} (Coverage vs.\ Accuracy). At each coverage level, we retain predictions with lowest uncertainty. Higher curves indicate better uncertainty quality. RUAC (solid) consistently achieves higher accuracy than Bayes-SAM2 (dashed) at matched coverage levels.}
    \label{fig:selective_pred}
\end{figure}

\subsection{Implications for Decision-Making}

These results demonstrate that RUAC's calibration improvement is \emph{orthogonal to accuracy gains}:
\begin{enumerate}[leftmargin=*,itemsep=2pt,parsep=0pt]
    \item \textbf{At matched accuracy}: When comparing predictions at the same coverage level (i.e., same effective accuracy), RUAC's uncertainty ranking is more reliable.
    \item \textbf{Actionable uncertainty}: The improvement in AURC directly translates to better performance in downstream tasks that use uncertainty for rejection, active learning, or human-in-the-loop correction.
    \item \textbf{Consistent across domains}: The benefit is observed on the majority of OOD datasets, not just those where RUAC has higher accuracy.
\end{enumerate}

\section{Statistical Significance Analysis}
\label{sec:s9-stats}

To rigorously evaluate calibration improvements, we conduct Wilcoxon signed-rank tests across all 23 OOD datasets. This non-parametric test is appropriate for paired comparisons without assuming normality.

\subsection{Wilcoxon Signed-Rank Tests}

Table~\ref{tab:wilcoxon} reports paired comparisons between RUAC and Bayes-SAM2 (UE only). For each metric, we count the number of domains where RUAC outperforms and compute the one-sided p-value.

\begin{table}[h]
\centering
\caption{Wilcoxon signed-rank tests on calibration metrics across 23 OOD datasets. PAvPU denotes the threshold-averaged PAvPU over $\tau\in\{0.01,0.05,0.1\}$ as defined in Sec.~\ref{sec:experiments}. *$p<0.05$, **$p<0.01$, ***$p<0.001$.}
\label{tab:wilcoxon}
\footnotesize
\begin{tabular}{@{}lcccc@{}}
\toprule
\textbf{Metric} & \textbf{RUAC} & \textbf{Bayes-SAM2} & \textbf{Wins} & \textbf{p-value} \\
\midrule
AURC $\downarrow$ & \textbf{0.092} & 0.102 & 19/23 & 0.0006*** \\
PAvPU $\uparrow$ & \textbf{63.7} & 55.4 & 19/23 & 0.0022** \\
ECE $\downarrow$ & \textbf{0.117} & 0.123 & 17/23 & 0.037* \\
AUROC$_{\text{mask}}$ $\uparrow$ & 0.971 & 0.966 & 13/23 & 0.22 \\
\bottomrule
\end{tabular}
\end{table}

\subsection{Interpretation}

Three of four calibration metrics show statistically significant improvements:
\begin{itemize}[leftmargin=*,itemsep=2pt,parsep=0pt]
    \item \textbf{AURC} ($p = 0.0006$): RUAC achieves lower risk at matched coverage levels on 83\% of domains.
    \item \textbf{PAvPU} ($p = 0.0022$): Uncertainty-accuracy alignment improves significantly.
    \item \textbf{ECE} ($p = 0.037$): Expected calibration error decreases on 74\% of domains.
\end{itemize}

The non-significant AUROC$_{\text{mask}}$ result is expected: AUROC$_{\text{mask}}$ (computed at mask granularity, averaging uncertainty per mask and comparing against mask-level IoU) measures \emph{ranking} quality across entire masks, while our method optimizes \emph{pixel-level calibration}. Both methods achieve high AUROC$_{\text{mask}}$ ($>$0.96) because Bayesian uncertainty already provides excellent mask-level ranking. RUAC's contribution is improving \emph{pixel-level calibration} (AUROC$_{\text{pixel}}$ = 0.81, AURC, PAvPU).

\section{Channel Alignment Analysis}
\label{sec:s-channel}

Unlike classification, where each image carries one label, segmentation requires per-pixel labels. This makes generic augmentation strategies fundamentally limited for our setting. Non-adversarial pixel perturbations cannot specifically discover the inputs on which the current decoder is sub-optimal. Diffusion-based image synthesis risks semantic layout drift, breaking the pixel-mask correspondence required for supervision. Feature-statistics and style-transfer methods perturb images alone but cannot perturb their associated masks. RUAC sidesteps these limitations by jointly perturbing pixels and their spatial layout, enabling co-perturbation of images and ground-truth masks during training.

To understand why RUAC also outperforms these baselines empirically on both segmentation accuracy and calibration, we analyze how each method's training-time perturbation aligns with real OOD domain shift at the per-channel level of the mask decoder features.

\subsection{Method-Level Channel Alignment}
\label{sec:s-channel-method}

For each augmentation method we extract 32-dimensional foreground pixel features (pooled within dilated ground-truth masks at stride-4 resolution) from the mask decoder and compute two quantities:
\begin{itemize}[leftmargin=*,itemsep=2pt,parsep=0pt]
    \item \textbf{Augmentation shift}: the per-channel mean difference between augmented and clean features on the in-domain dataset (MOSE).
    \item \textbf{OOD shift}: the per-channel mean difference between OOD and in-domain features, averaged over the 23 zero-shot datasets.
\end{itemize}
\textbf{Channel alignment} is defined as the Pearson correlation across the 32 channels between these two shift vectors. A high correlation indicates that the augmentation method perturbs the same channels that actually shift under real domain change.

\begin{figure*}[h]
    \centering
    \includegraphics[width=\linewidth]{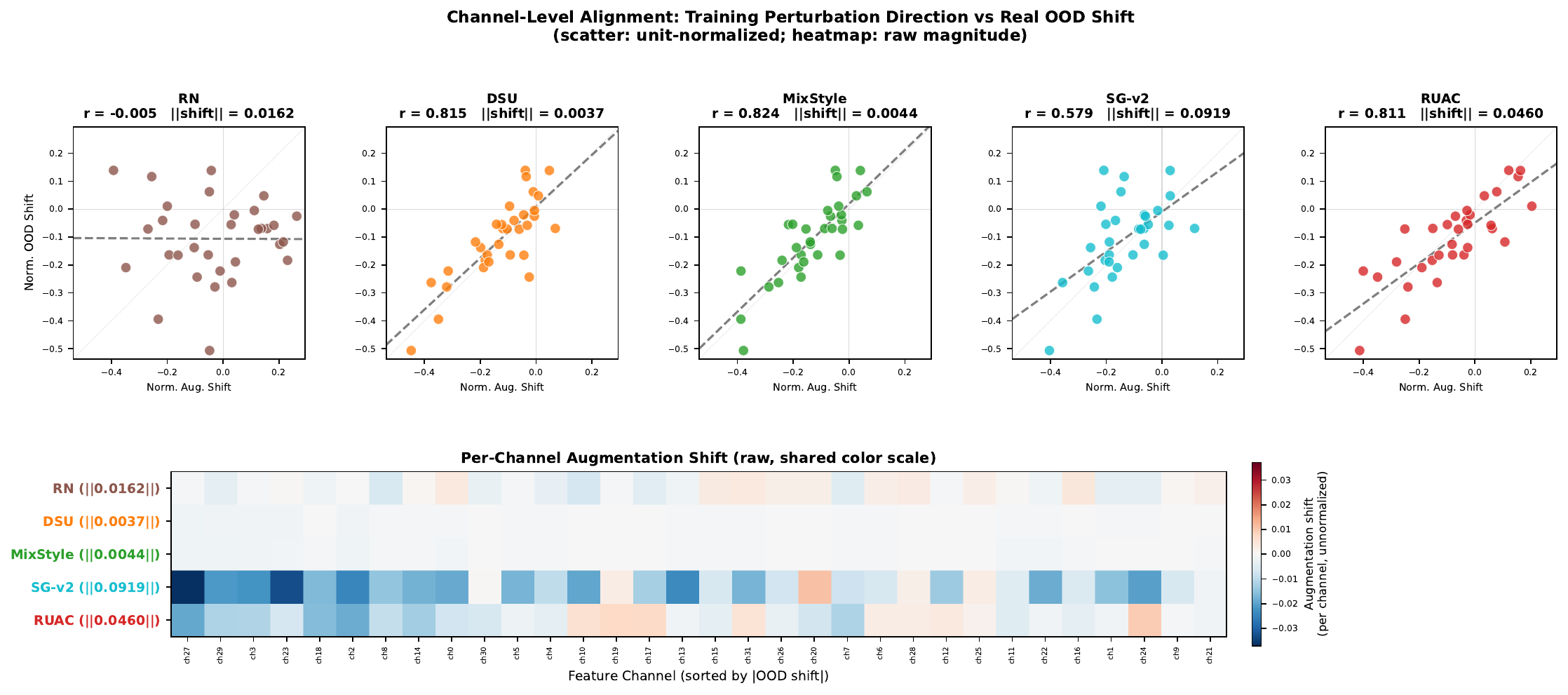}
    \caption{\textbf{Channel alignment across augmentation methods.} (Top) Scatter of per-channel augmentation shift (x-axis) versus per-channel OOD shift (y-axis), unit-normalized for visualization. The dashed line is the regression fit. (Bottom) Per-channel augmentation shift on the raw magnitude scale, sorted by aggregate OOD shift. Random Noise produces uncorrelated perturbations. Feature-statistics methods (DSU, MixStyle) achieve high alignment but very small magnitude. StyleGen-Targeted reaches moderate alignment at large magnitude. RUAC simultaneously achieves high alignment and substantial magnitude.}
    \label{fig:channel-alignment}
\end{figure*}

\begin{table}[h]
    \centering
    \caption{\textbf{Channel alignment statistics for five augmentation methods.} Higher $r$ indicates better alignment with real OOD shift directions. $\|\text{shift}\|$ reflects perturbation magnitude on the same scale across methods. Effective robustness training requires both. RUAC is the only method that achieves high alignment together with substantial magnitude, without any target-domain access.}
    \label{tab:channel-alignment}
    \footnotesize
    \setlength{\tabcolsep}{4pt}
    \begin{tabular}{@{}llcc@{}}
        \toprule
        \textbf{Method} & \textbf{Type} & $r$ & $\|\text{shift}\|$ \\
        \midrule
        Random Noise & Pixel noise & $-0.005$ & 0.0162 \\
        DSU & Feat.\ stats (Gaussian) & 0.815 & 0.0037 \\
        MixStyle & Feat.\ stats (mixing) & 0.824 & 0.0044 \\
        StyleGen-Targeted & SD style transfer & 0.579 & 0.0919 \\
        \rowcolor{gray!10} \textbf{RUAC} & \textbf{Adv.\ style+deform} & \textbf{0.811} & \textbf{0.0460} \\
        \bottomrule
    \end{tabular}
\end{table}

Figure~\ref{fig:channel-alignment} and Table~\ref{tab:channel-alignment} report the alignment correlation $r$ and the perturbation norm $\|\text{shift}\|$ for five representative methods. PGD and Patch are pixel-space attacks that do not directly perturb decoder features and are omitted from this feature-level analysis. The methods cluster into two regimes:
\begin{itemize}[leftmargin=*,itemsep=2pt,parsep=0pt]
    \item \textbf{Low-alignment perturbations.} Random Noise, by construction, perturbs all channels uniformly and produces near-zero correlation with real OOD shift. StyleGen-Targeted reaches moderate alignment (0.58) at large magnitude, but its diffusion-based image synthesis introduces semantic drift that hurts both J\&F and PAvPU (Table~\ref{tab:calibration}).
    \item \textbf{High-alignment but low-magnitude perturbations.} DSU and MixStyle achieve the highest correlations ($\sim$0.82) because they perturb feature statistics in directions that match real OOD shift. However, their perturbation magnitude is roughly an order of magnitude smaller than RUAC's, so they cannot drive the decoder into the regions of feature space that robust fine-tuning needs to cover.
\end{itemize}
RUAC is the only method that satisfies both conditions. It learns adversarial style and deformation perturbations that align with the channels most affected by domain shift ($r{=}0.81$) and applies them at substantial magnitude ($\|\text{shift}\|{=}0.046$), without any access to target-domain data.

\subsection{Per-Dataset Top-Channel Analysis}
\label{sec:s-channel-per-dataset}

To localize the channels that actually shift under domain change, we compute the per-channel OOD shift for each of the 23 datasets, separately for the pretrained SAM2 encoder/decoder and for the fine-tuned Bayes-SAM2 model. Figure~\ref{fig:channel-alignment-per-dataset} shows the resulting heatmaps together with RUAC's learned adversarial perturbation magnitude per channel.

\begin{figure*}[h]
    \centering
    \includegraphics[width=\linewidth]{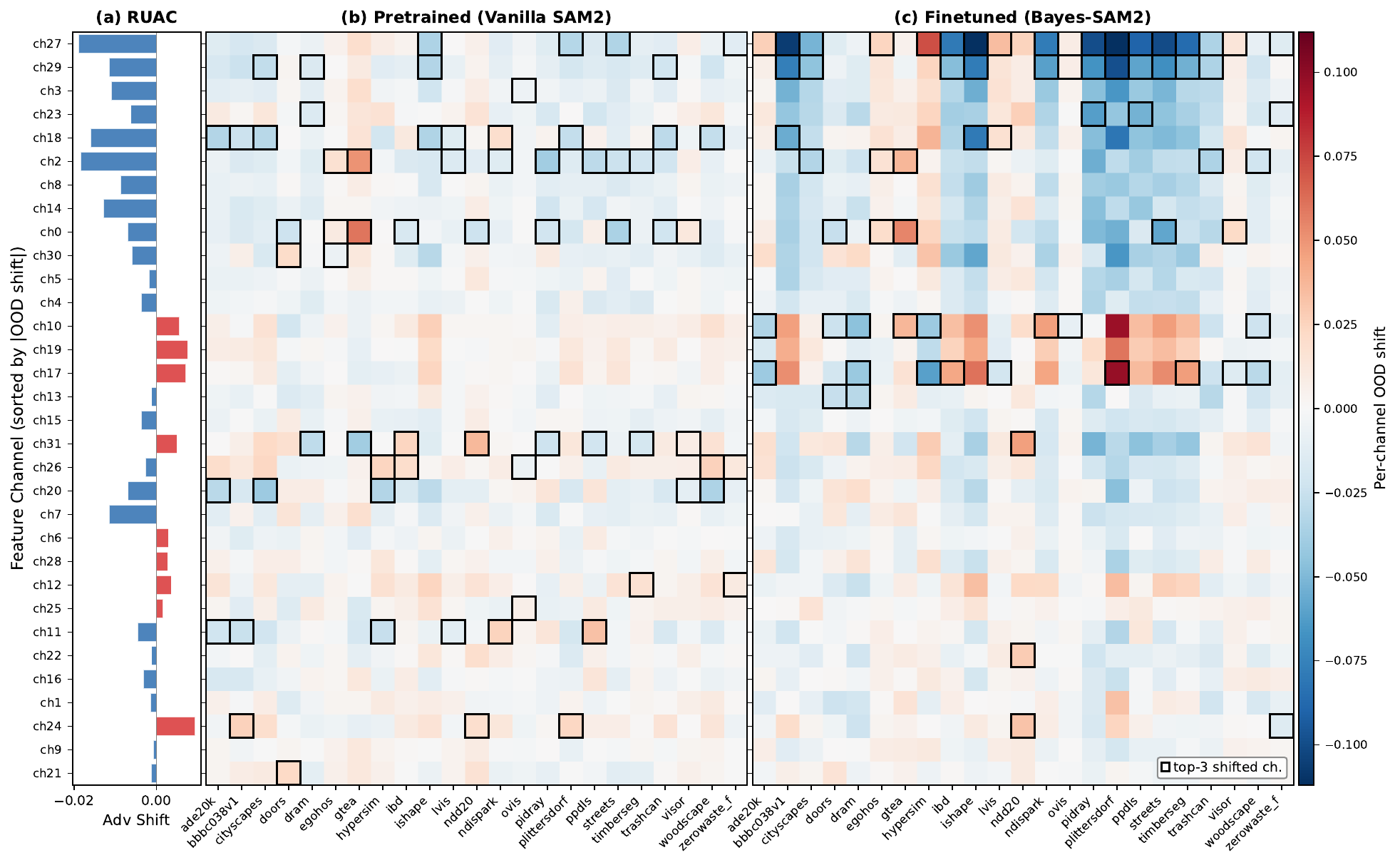}
    \caption{\textbf{Per-dataset channel-level OOD shift versus RUAC's learned perturbation.} (a) RUAC's adversarial perturbation magnitude per channel, learned end-to-end with no channel-level supervision. (b) Per-channel OOD shift heatmap for the pretrained SAM2 model. (c) Same heatmap for the fine-tuned Bayes-SAM2 model. Black boxes mark each dataset's top-3 most shifted channels. Channels are sorted by aggregate OOD shift magnitude (top = most shifted). Panels (b) and (c) share a color scale.}
    \label{fig:channel-alignment-per-dataset}
\end{figure*}

Three observations emerge from this analysis:
\begin{itemize}[leftmargin=*,itemsep=2pt,parsep=0pt]
    \item \textbf{A few channels dominate OOD shift.} In the fine-tuned model (Figure~\ref{fig:channel-alignment-per-dataset}c), channel ch27 appears in the top-3 most shifted channels for 18 of the 23 datasets, followed by ch29 (12 datasets) and ch17 (9 datasets). RUAC's adversarial perturbation (Figure~\ref{fig:channel-alignment-per-dataset}a) targets exactly these channels. ch27 receives the largest adversarial shift among all 32 channels, even though RUAC receives no channel-level supervision.
    \item \textbf{Domain-specific channel patterns.} Different datasets engage different channels. PIDRay shifts most along ch27, ch29, and ch23, while DRAM shifts along ch10, ch17, and ch13. Visually similar domains share patterns. Driving scenes (Cityscapes, STREETS) both rely on ch27 and ch29. Indoor scenes (DOORS, DRAM) share ch10 and ch13.
    \item \textbf{Fine-tuning concentrates the OOD shift.} Comparing the pretrained heatmap (b) with the fine-tuned heatmap (c) under the same color scale, fine-tuning amplifies the per-channel OOD shift. Each channel encodes more domain-specific information after fine-tuning, and the identity of the top-3 most shifted channels per dataset changes between (b) and (c). RUAC learns to attack the post-fine-tuning vulnerable channels, again without any channel-level supervision.
\end{itemize}

Together these observations explain the calibration gap in Table~\ref{tab:calibration}. Only RUAC perturbs the right feature directions at sufficient magnitude, and only RUAC discovers these directions purely from the adversarial training signal rather than from target-domain access.

\section{Hyperparameter Sensitivity Analysis}
\label{sec:s-sensitivity}

To assess RUAC's robustness to hyperparameter choices, we sweep the two main loss weights, $\beta$ (KL regularization, controlling the Bayesian decoder's posterior) and $\gamma$ (adversarial loss weight, controlling the contribution of the AUE branch), each across a $0.5\times$--$16\times$ range around the default. To keep the sweep tractable we use a compressed 8-epoch schedule rather than the 20-epoch schedule used for the main results in Tables~\ref{tab:overall},~\ref{tab:pavpu_comparison}, and~\ref{tab:calibration}. Absolute numbers therefore differ from the main results, but relative trends across the sweep range remain meaningful, since all sweep runs share the same training budget.

\begin{table}[h]
    \centering
    \caption{\textbf{Sensitivity to the KL regularization weight $\beta$.} $\gamma$ is fixed at $0.2$ (default). All metrics aggregated over 23 OOD datasets. The default configuration is highlighted.}
    \label{tab:sensitivity-beta}
    \footnotesize
    \setlength{\tabcolsep}{4pt}
    \begin{tabular}{@{}lcccc@{}}
        \toprule
        $\beta$ & J\&F$\uparrow$ & PAvPU$\uparrow$ & AURC$\downarrow$ & ECE$\downarrow$ \\
        \midrule
        $0.025$ ($0.5\times$) & 77.83 & 43.35 & 0.108 & 0.185 \\
        \rowcolor{gray!10} $0.05$ ($1\times$, default) & 77.47 & 43.43 & 0.111 & 0.179 \\
        $0.1$ ($2\times$) & 77.84 & 43.54 & 0.105 & 0.179 \\
        $0.4$ ($8\times$) & 77.97 & 41.01 & 0.111 & 0.185 \\
        $0.8$ ($16\times$) & 77.86 & 44.40 & 0.104 & 0.183 \\
        \bottomrule
    \end{tabular}
\end{table}

\begin{table}[h]
    \centering
    \caption{\textbf{Sensitivity to the adversarial loss weight $\gamma$.} $\beta$ is fixed at $0.05$ (default). All metrics aggregated over 23 OOD datasets. The default configuration is highlighted.}
    \label{tab:sensitivity-gamma}
    \footnotesize
    \setlength{\tabcolsep}{4pt}
    \begin{tabular}{@{}lcccc@{}}
        \toprule
        $\gamma$ & J\&F$\uparrow$ & PAvPU$\uparrow$ & AURC$\downarrow$ & ECE$\downarrow$ \\
        \midrule
        $0.1$ ($0.5\times$) & 77.72 & 42.77 & 0.111 & 0.174 \\
        \rowcolor{gray!10} $0.2$ ($1\times$, default) & 77.84 & 42.62 & 0.107 & 0.178 \\
        $0.5$ ($2.5\times$) & 78.41 & 43.34 & 0.105 & 0.176 \\
        $1.6$ ($8\times$) & 78.03 & 43.29 & 0.108 & 0.188 \\
        $3.2$ ($16\times$) & 78.22 & 41.28 & 0.109 & 0.183 \\
        \bottomrule
    \end{tabular}
\end{table}

Across the full $0.5\times$--$16\times$ sweep for each hyperparameter, all four metrics remain stable: J\&F varies by at most $0.7$ points, AURC by $\leq 0.008$, PAvPU by $\leq 3.4$ points, and ECE by $\leq 0.014$. The sweep also reveals that the default $(\beta,\gamma) = (0.05, 0.2)$ is not necessarily optimal on the compressed schedule: a slightly larger $\gamma = 0.5$ improves J\&F by $0.57$ and AURC by $0.002$ in Table~\ref{tab:sensitivity-gamma}. We did not perform a fine-grained search on the full 20-epoch schedule. We retain the original $(\beta,\gamma) = (0.05, 0.2)$ for consistency with all other results in the paper, treating the sensitivity sweep as evidence of robustness rather than an optimization target. The fact that RUAC outperforms all baselines (Tables~\ref{tab:overall},~\ref{tab:pavpu_comparison},~\ref{tab:calibration}) under a non-optimized configuration further supports that the gains stem from the AUE perturbation design rather than careful tuning.

\section{Loss Component Ablation}
\label{sec:s-loss-ablation}

The full RUAC training objective combines the base segmentation loss (focal $+$ dice) with two trainable components: the KL regularization on the Bayesian posterior (weighted by $\beta$) and the AUE adversarial loss (weighted by $\gamma$). We isolate the contribution of each by disabling it while keeping the rest unchanged. The base segmentation loss cannot be ablated, since training collapses without it. The combination $\beta = 0$ with $\gamma > 0$ is also not meaningful: the AUE adversarial loss relies on the Bayesian decoder's uncertainty map for calibration, so removing the KL regularization that shapes that posterior would leave the adversarial signal without a target. We therefore ablate the AUE adversarial loss ($\gamma{=}0$, equivalent to the no-augmentation Bayes-SAM2 in Table~\ref{tab:overall}) and the KL regularization in turn. Results are aggregated over the 23 OOD datasets (Table~\ref{tab:loss-ablation}).

\begin{table}[h]
    \centering
    \caption{\textbf{Loss component ablation.} Aggregate metrics over 23 OOD datasets. The full RUAC configuration is highlighted, with \textbf{bold} marking the best value per metric.}
    \label{tab:loss-ablation}
    \footnotesize
    \setlength{\tabcolsep}{4pt}
    \begin{tabular}{@{}lcccc@{}}
        \toprule
        Configuration & J\&F$\uparrow$ & PAvPU$\uparrow$ & AURC$\downarrow$ & ECE$\downarrow$ \\
        \midrule
        \rowcolor{gray!10} RUAC (full) & \textbf{81.62} & \textbf{63.72} & 0.092 & 0.117 \\
        w/o KL & 81.26 & 62.43 & \textbf{0.091} & \textbf{0.114} \\
        w/o AUE & 79.87 & 55.40 & 0.102 & 0.123 \\
        \bottomrule
    \end{tabular}
\end{table}

Removing the AUE adversarial loss recovers the no-augmentation Bayes-SAM2 baseline and causes the largest drop ($\Delta$J\&F $-1.75$, $\Delta$PAvPU $-8.32$, $\Delta$AURC $+0.010$, $\Delta$ECE $+0.006$), confirming that adversarial training is the primary contributor to RUAC's gains on both segmentation accuracy and uncertainty calibration. Removing the KL regularization causes only a minor change ($\Delta$J\&F $-0.36$, $\Delta$PAvPU $-1.29$, $\Delta$AURC $-0.001$, $\Delta$ECE $-0.003$). The differences in AURC and ECE between the full RUAC and the no-KL variant fall within run-to-run noise. The KL term primarily serves as a training stabilizer that prevents the Weibull posterior from degenerating, rather than as a performance driver. Both observations match our intuition: the adversarial branch carries the burden of aligning uncertainty with error under domain shift, while the KL regularization keeps the Bayesian decoder's posterior well-defined throughout training.

\section{Downstream Utility: Uncertainty-Guided Mask Correction}
\label{sec:s-mask-correction}

Beyond reporting calibration metrics, we test whether RUAC's uncertainty maps carry actionable signal for downstream decision-making. Inspired by SeCoV2~\cite{zhao2025secov2}, we apply a simple uncertainty-guided connected-component (CC) filter to each predicted mask: we extract connected components and remove fragments whose mean uncertainty exceeds an adaptive threshold, while always preserving the largest CC. The threshold is the $95$th percentile of the predicted foreground uncertainty with a floor of $0.3$ to prevent over-filtering on confident predictions. We refer to this post-processing step as UncCorr, and we apply it identically to Bayes-SAM2 and RUAC. Crucially, the same mask decoder and the same prompts are used. Only the uncertainty source differs.

Numerical results are reported in Table~\ref{tab:mask-correction} of the main text. The two methods react oppositely to the same correction. UncCorr applied to Bayes-SAM2 \emph{degrades} J\&F on average ($-0.28$, hurting 14 of 23 datasets), confirming our calibration analysis: uncertainty that does not track error actively misleads downstream filters, sometimes removing correctly predicted regions. UncCorr applied to RUAC \emph{improves} J\&F (17 of 23 datasets), demonstrating that RUAC's uncertainty is well-aligned enough with error to identify and remove genuinely unreliable mask fragments. The +0.09 average gain is modest because UncCorr is a conservative post-processing step that only edits at the connected-component level, but the directional contrast (RUAC helps, Bayes-SAM2 hurts) is the substantive finding: better calibration translates directly into more useful uncertainty maps for downstream applications, not just better calibration scores. We expect more aggressive uncertainty-aware post-processing (e.g., per-pixel rejection, active sampling, or selective re-prompting) to amplify this gap.

\section{MC Sample Analysis}
\label{sec:s-mc-sample}

At inference time we estimate per-pixel uncertainty by drawing $S$ samples from the Weibull posterior of the Bayesian decoder. The forward pass that produces the predicted logits uses the analytic Weibull expectation, so the segmentation output (J\&F) is independent of $S$ by construction. The sample count only affects the uncertainty map quality. We sweep $S \in \{1, 5, 20, 50, 100\}$ on the same trained checkpoint and report aggregate metrics across the 23 OOD datasets in Table~\ref{tab:mc-sample}.

\begin{table}[h]
    \centering
    \caption{\textbf{MC sample count $S$ at inference.} Same trained checkpoint, only the sample count is varied. The default $S=20$ used in the main results is highlighted, with \textbf{bold} marking the best value per metric. Differences across $S$ are within run-to-run noise.}
    \label{tab:mc-sample}
    \footnotesize
    \setlength{\tabcolsep}{4pt}
    \begin{tabular}{@{}lcccc@{}}
        \toprule
        $S$ & J\&F$\uparrow$ & PAvPU$\uparrow$ & AURC$\downarrow$ & ECE$\downarrow$ \\
        \midrule
        $1$ & 81.53 & 63.36 & 0.096 & \textbf{0.115} \\
        $5$ & 81.44 & 63.32 & \textbf{0.089} & 0.116 \\
        \rowcolor{gray!10} $20$ (default) & \textbf{81.62} & 63.72 & 0.092 & 0.117 \\
        $50$ & 81.46 & \textbf{64.14} & 0.091 & 0.119 \\
        $100$ & 81.59 & 63.87 & 0.090 & 0.117 \\
        \bottomrule
    \end{tabular}
\end{table}

All four metrics are remarkably stable across the $100\times$ range of $S$. Even with a single sample ($S{=}1$), PAvPU is within $0.36$ points and AURC within $0.007$ of the default $S{=}20$. AURC reaches its best value already at $S{=}5$, indicating that the Weibull-based uncertainty is highly sample-efficient and saturates well below typical MC dropout sample counts. The minor J\&F fluctuations ($\pm0.21$) come from numerical precision since J\&F is theoretically constant given the analytic forward pass. We therefore use $S{=}20$ as a conservative default that gives smooth uncertainty maps without meaningful additional cost. Users with tight latency budgets can drop to $S{\in}\{3,5\}$ with negligible loss.

\section{Training Cost, Parameter Efficiency, and Extension}
\label{sec:s-training-cost}

We report the practical training cost of RUAC, the parameter overhead introduced beyond the frozen SAM2 encoder, and how the design extends to future SAM models and to video tracking.

\subsection{Training Cost and Parameter Overhead}

Training RUAC takes approximately 6 hours on 8 NVIDIA A40 GPUs (about 48 GPU-hours) for the standard 20-epoch schedule on the MOSE source domain. The trainable RUAC additions total approximately $0.45$M parameters, which is $0.64\%$ of SAM2's frozen image encoder (Hiera-B+, the hierarchical ViT image backbone of SAM2, $69.1$M parameters). Table~\ref{tab:param-breakdown} reports the per-component breakdown: a compact deformation offset module ($\sim 370$K) dominates the budget, while the style attacker, GCN refinement module, and Bayesian pixel head are each on the order of $34$K parameters or below. The deformation attacker additionally reuses a frozen copy of SAM2's pretrained memory encoder ($\sim 1.37$M), copied at initialization and kept frozen throughout training to avoid GRL-induced gradient instability. These reused weights are not retrained and are excluded from the trainable count. The frozen image encoder is shared with the SAM2 baseline and is also not retrained.

\begin{table}[h]
    \centering
    \caption{\textbf{Per-component trainable parameter breakdown of RUAC.} The SAM2 Hiera-B+ encoder ($69.1$M) is kept frozen. The deformation attacker additionally reuses a frozen copy of SAM2's memory encoder weights ($\sim 1.37$M, not shown) which is excluded from the trainable count.}
    \label{tab:param-breakdown}
    \footnotesize
    \setlength{\tabcolsep}{6pt}
    \begin{tabular}{@{}lr@{}}
        \toprule
        Component & Trainable Params \\
        \midrule
        Style attacker (AdaIN + GRL) & $\sim 34$K \\
        GCN refinement & $\sim 34$K \\
        Deformation offset module & $\sim 370$K \\
        Bayesian pixel head & $\sim 7.5$K \\
        \midrule
        \textbf{Total trainable RUAC additions} & $\mathbf{\sim 0.45}$\textbf{M} \\
        Frozen SAM2 encoder & $69.1$M \\
        \textbf{Trainable overhead vs.\ encoder} & $\mathbf{0.64\%}$ \\
        \bottomrule
    \end{tabular}
\end{table}

\subsection{Comparison with Diffusion-Based Augmentation}

Recent style-augmentation pipelines often rely on pretrained diffusion models to synthesize domain-shifted training images~\cite{dunlap2023alia,trabucco2024dafusion}. We list the parameter scale of representative diffusion backbones in Table~\ref{tab:diffusion-comparison} for reference. RUAC introduces roughly three orders of magnitude fewer trainable parameters than these alternatives. Because RUAC perturbs the segmentation feature space directly rather than synthesizing pixels, it also avoids the semantic drift, prompt-engineering cost, and inference-time text-encoder dependency typical of diffusion-based augmentation.

We instantiate diffusion-based augmentation as two variants of \textbf{StyleGen}, both using SD 1.5 img2img following ALIA~\cite{dunlap2023alia} and DA-Fusion~\cite{trabucco2024dafusion}. \textbf{StyleGen-Generic} uses 2 prompts spanning all target domains. \textbf{StyleGen-Targeted} uses 9 prompts, each tailored to one of the 11 OOD target domains below the average J\&F (overlapping domains merged, e.g., VISOR$+$GTEA into ``egocentric kitchen''), representing a strong prompt-engineering effort that explicitly targets the test distribution. Despite this, StyleGen-Targeted \emph{degrades} below StyleGen-Generic and even below Random Noise on J\&F and PAvPU (Table~\ref{tab:calibration}): more aggressive prompt engineering toward OOD domains amplifies semantic drift in the synthesized images, breaking the pixel-mask correspondence required for segmentation supervision. This is the empirical evidence behind the channel-alignment analysis in Appendix~\ref{sec:s-channel-method}, where StyleGen-Targeted achieves only moderate alignment ($r{=}0.58$) at large magnitude.

\begin{table}[h]
    \centering
    \caption{\textbf{Parameter scale of diffusion-based alternatives vs.\ RUAC.} Counts refer to the augmentation backbone as reported in the cited papers.}
    \label{tab:diffusion-comparison}
    \footnotesize
    \setlength{\tabcolsep}{6pt}
    \begin{tabular}{@{}lr@{}}
        \toprule
        Method & Parameters \\
        \midrule
        Stable Diffusion 1.5~\cite{rombach2022sd} & $\sim 860$M \\
        SDXL~\cite{podell2024sdxl} & $\sim 2.6$B \\
        ControlNet~\cite{zhang2023controlnet} & $\sim 361$M \\
        InstructPix2Pix~\cite{brooks2023instructpix2pix} & $\sim 860$M \\
        \rowcolor{gray!10} \textbf{RUAC (ours)} & $\mathbf{\sim 0.45}$\textbf{M} \\
        \bottomrule
    \end{tabular}
\end{table}

\subsection{Extension to Future SAM Models and Video}

The RUAC trainable modules attach to the mask decoder and operate on the contrastive feature space produced by the frozen Hiera-B+ trunk. The encoder itself is untouched during training. This decoder-attached, encoder-agnostic design means that swapping the encoder for a future model, e.g., the SAM3 image encoder~\cite{ravi2025sam3} or a video tracker backbone, only requires re-targeting the decoder feed: the adversarial style and deformation networks, the GCN refinement module, and the Bayesian mask decoder can be reused without architectural change. Because the encoder remains frozen, the cost of porting RUAC to a new backbone is dominated by re-running the same $\sim 48$ GPU-hour fine-tuning of the $\sim 0.45$M trainable RUAC parameters, not by retraining the much larger encoder. For video tracking specifically, the Bayesian decoder's per-pixel uncertainty can be temporally aggregated to flag tracker drift across frames. We leave this empirical study to future work.


\end{document}
